%% file: iclr2027_conference_arxiv.tex
\documentclass{article} 
\usepackage{iclr2027_conference,times}
\iclrfinalcopy
\makeatletter
\def\@noticestring{}   
\makeatother
\input{math_commands.tex}

\usepackage{color}
\usepackage{xcolor}
\usepackage{hyperref}
\usepackage{fontawesome5}
\usepackage{url}
\usepackage{amssymb}
\usepackage{amsthm}
\usepackage{graphicx}
\graphicspath{{./}{../}}
\usepackage{booktabs}
\usepackage{multirow}
\usepackage[normalem]{ulem}
\usepackage{xcolor}
\usepackage{amsmath}
\usepackage{tikz}
\usepackage{enumitem}
\usetikzlibrary{positioning, calc, arrows.meta}
\newcommand{\linkbutton}[3]{{\hypersetup{pdfborder={0 0 0}}\href{#1}{\tikz[baseline=(b.base)]\node[draw=black!70,fill=black!8,rounded corners=3pt,inner xsep=7pt,inner ysep=3pt,text=black] (b) {#2\,\textsf{#3}};}}}

\newcommand{\add}[1]{{\color{teal}#1}}                
\newtheorem{theorem}{Theorem}

\newtheorem{proposition}[theorem]{Proposition}

\theoremstyle{definition}

\newtheorem{remark}[theorem]{Remark}

\renewcommand{\R}{\mathbb{R}}
\newcommand{\Normal}{\mathcal{N}}
\newcommand{\cM}{\mathcal{M}}

\newcommand{\bq}{\mathbf{q}}
\newcommand{\bp}{\mathbf{p}}
\newcommand{\bz}{\mathbf{z}}
\newcommand{\bu}{\mathbf{u}}

\newcommand{\bI}{\mathbf{I}}
\newcommand{\bzero}{\mathbf{0}}
\newcommand{\SIGReg}{\mathrm{SIGReg}}

\newcommand{\bx}{\mathbf{x}}

\providecommand{\bx}{\mathbf{x}}
\providecommand{\ba}{\mathbf{a}}

\providecommand{\algo}{LeHami}

\title{LeHami: Symplectic Latent Dynamics for World Models}

\author{%
Romina Yalovetzky$^{1}$\thanks{Corresponding author: \texttt{romina.yalovetzky@nyu.edu}}
\And
Randall Balestriero$^{2}$
\And
Yann LeCun$^{1,3}$
\AND
\normalfont\normalsize $^{1}$Courant Institute, New York University \\
$^{2}$Brown University \\
$^{3}$AMI Labs \\[0.7em]
\normalfont\normalsize
\makebox[\textwidth][c]{%
\linkbutton{https://github.com/rominay/LeHami}{\faGithub}{Code}\hspace{0.8em}%
\linkbutton{https://rominay.github.io/lehami-website/}{\faGlobe}{Website}}
}

\begin{document}
\maketitle
\fancyhead{}                          
\renewcommand{\headrulewidth}{0pt}    

\begin{abstract}

Joint-embedding predictive architectures (JEPAs) learn world models directly in latent space, but unconstrained autoregressive predictors may contract the latent distribution over long rollouts. We introduce Symplectic Hamiltonian JEPA (\textbf{LeHami}), which replaces the standard predictor with an action-conditioned Hamiltonian flow over a structured latent state $(\mathbf{q}, \mathbf{p})$ integrated using an exactly symplectic leapfrog step. This construction preserves phase-space volume at every rollout step and rules out uniform contraction, asymptotically stable fixed points, and merging of distinct latent states, independently of the learned parameters or action sequence. We obtain the momentum-like component from temporal derivatives of the learned position-like representation, providing an explicit second-order state for prediction and control. On conservative and actuated visual pendulum environments, LeHami substantially improves long-horizon velocity prediction over an unconstrained JEPA world model and empirically maintains constant latent volume where the baseline contracts. The learned flow also supports accurate integration at unseen sub-frame step sizes and numerical time reversal. These results show how geometric structure can provide inference-time guarantees that are complementary to representation regularization in latent world models.

\end{abstract}

\section{Introduction} \label{section:intro}
\input{intro_arxiv}

\section{LeHami: Symplectic Hamiltonian JEPA}
\label{sec:ham_jepa}
\input{method}

\section{Empirical validation of LeHami}
\label{sec:experiments_new}
\input{experiments}

\section{Conclusion}
We introduced \algo{}, a world model architecture that rigorously preserves the geometric properties of second-order physical systems. 
Its exact symplectic step provides structural guarantees: exact volume preservation that provably eliminates some rollout collapse modes and continuous-time integration with time-reversal. We have shown empirically on the conservative pendulum that \algo{} preserves latent volume along the rollout, whereas LeWM contracts it steadily, losing log-volume linearly with the horizon. Consistent with this, \algo{} predicts velocity better throughout and angle better at long horizons. With actuation, LeWM reaches the goal angle more often, but it captures signed velocity only when its temporal window is explicitly untied. \algo{} instead captures the full state, angle and signed velocity alike, by construction, paving the way towards velocity-constrained planning and FLOP-efficient replanning.

\paragraph{Limitations and future work.} 
First, while our latent architecture maintains exact volume preservation under actuation, strongly dissipative physical systems (e.g., heavy friction) exhibit phase-space volume contraction. Thus, extending \algo{} to these regimes e.g., via port-Hamiltonian extensions (Appendix~\ref{app:design}) is a natural direction for future work. Second, \algo{} specifically targets velocity-aware tasks, whereas purely static pose tasks are served well by spatial-focused, unconstrained predictors. 
Finally, replacing our zero-overhead SG stencil with a learned momentum-like embedding offers a clear path to potentially improve performance without sacrificing long-horizon phase-space stability.

\subsection*{AI use statement}

AI was used for the writing of some parts of the manuscript and for some parts of the code development used to run the experiments. Moreover, AI was used to check the proof of Proposition 2. The authors take full responsibility for the scientific
contributions and all written content.

\subsection*{Ethics statement}
This work aims to advance fundamental machine learning research, specifically in the domain of world models. As foundational research, we do not foresee any immediate societal impacts associated with this work.

\subsection*{Acknowledgements}
R.Y. thanks Alfredo Canziani for his guidance and feedback throughout this work, and
Helmut Katzgraber for his insights during the planning of the project.

\subsection*{Reproducibility statement}
The claims of this paper are supported by the appendices and by the code release
at \url{https://github.com/rominay/LeHami}.
\emph{Theory.} Proposition~\ref{prop:volume} is stated with its assumptions in
Section~\ref{sec:ham_jepa} and proved in full in Appendix~\ref{app:volume};
Appendix~\ref{app:remarks} sets out what the guarantee does and does not imply.
\emph{Models and training.} What is held fixed between \algo{} and the baselines, and how
checkpoints are selected, is in Appendix~\ref{appendix:methodology}. The architecture as
implemented and the parameter counts measured from the checkpoints are in
Appendix~\ref{appendix:architecture} and Table~\ref{tab:multi-params}, and the remaining
design choices (the SG stencil, the scale gauge and the SIGReg placements) are in
Appendix~\ref{app:design}. Every trained model corresponds to one configuration file in
the release, which fixes all its hyperparameters, and the resolved configuration is saved with each
checkpoint. The third-party code we build on is pinned to the exact commits we used.
\emph{Data.} Both environments are specified by their equations of motion, rendering
and generation parameters in Appendix~\ref{appendix:datasets}. The episode-level
train/validation/test protocol, including the generator seed and the check that no episode
is shared across splits, is in Table~\ref{tab:datasets}. The release contains the seeded
generators and the split manifests that training and evaluation both read, and each manifest records the hash of the
dataset file it was cut from, so a regenerated dataset can be verified.
\emph{Evaluation.} The probes are described in Appendix~\ref{app:probes}, and the planning protocol, with
its goal construction, CEM optimiser, leakage checks and success criterion, is in
Appendix~\ref{app:planning}. Results are reported as the mean and standard deviation over three training seeds unless
stated otherwise.
\emph{Code.} One runner script per experimental block trains and evaluates every model
and seed reported in the paper, and the planning and compute analyses reuse its checkpoints. A
single notebook regenerates every figure and table from the frozen numbers shipped with
it, without a GPU, the datasets or retraining.

\bibliography{iclr2027_conference}
\bibliographystyle{iclr2027_conference}

\appendix
\section{Appendix organization}
Appendix~\ref{appendix:background_hamiltonian} collects the background on the Hamiltonian mechanics we use,
including phase-space density and the symplectic form.
Appendix~\ref{appendix:theory} contains the theory: the formal statement and proof that
the leapfrog step is exactly symplectic (\S\ref{app:volume}); the scope of that guarantee,
in three remarks on what volume preservation does and does not imply (\S\ref{app:remarks});
why we build on the Hamiltonian rather than the Lagrangian formulation
(\S\ref{hamiltonian_no_lagrangian}); and the design choices in the flow and the encoder
(\S\ref{app:design}): the shape of the Hamiltonian, the SG stencil and its causal and centred
windows, why no mass is learned, the learned scale gauge, and the SIGReg placements.
Appendix~\ref{appendix:experimenta_setup} gives the experimental set-up in full: what is held
fixed across models and how checkpoints are selected (\S\ref{appendix:methodology}); the
implementation of \algo{} (\S\ref{appendix:architecture}), covering the Hamiltonian flow, the
per-frame and temporal encoders, the stencil at training and inference, and an extension to
training across recording rates that the architecture supports but that we do not exercise here;
the two datasets and their split protocol (\S\ref{appendix:datasets}); and the evaluation,
namely the probes (\S\ref{app:probes}) and the planning protocol (\S\ref{app:planning}), with its goal
construction, the frames each latent state is read from, the optimiser, the leakage checks and
the success criterion.
Appendix~\ref{appendix:supplementary_results} reports supplementary results. It first shows
that the Hamiltonian structure replaces manual design choices at lower training compute and
without extra parameters (\S\ref{appendix:design-choices}). It then gives the full results on
the conservative pendulum (\S\ref{appendix:results_cons}): the geometry of the latent space
against the learned and physical energies, rollout prediction across probe tiers, independence
from the encoder with a frozen DINOv2, and the phase-space volume of the one-step map. Finally, it
gives the full results on the actuated pendulum (\S\ref{appendix:results_act_pendulum}): the
learned energy under actuation, where to apply the regularizer, rollout prediction, how a
temporal window in the encoder captures velocity, the goal distribution, the planning compute,
a decomposition of the velocity clause of the planning criterion into speed and direction, and
the prediction of the rod's length and mass.
\input{background}

\input{appendix_theory_design}
\input{appendix_experiments}

\end{document}

%% file: math_commands.tex
\usepackage{amsmath,amsfonts,bm}

\def\eqref#1{equation~\ref{#1}}

\def\1{\bm{1}}

\DeclareMathAlphabet{\mathsfit}{\encodingdefault}{\sfdefault}{m}{sl}
\SetMathAlphabet{\mathsfit}{bold}{\encodingdefault}{\sfdefault}{bx}{n}

\newcommand{\E}{\mathbb{E}}

\newcommand{\R}{\mathbb{R}}

\newcommand{\N}{\mathbb{N}}

%% file: intro_arxiv.tex
Joint Embedding Predictive Architecture (JEPA) \citep{lecun2022path} has emerged as a promising framework for action-conditioned latent world modeling. While existing JEPA formulations successfully prevent \textit{encoder collapse:} mapping distinct inputs to constant representations, via training-time regularization \citep{bardes2021vicreg, balestriero2025lejepa} or frozen backbones \citep{assran2025v, zhou2024dino}, maintaining consistent dynamics over extended evaluation rollouts remains a persistent challenge \citep{maes2026leworldmodel, zhang2026hierarchical}. Long-horizon stability is further hindered by \textit{predictor contraction}: in an autoregressive rollout each prediction is fed back as the next input, so even a slight per-step contraction of the transition map compounds over the horizon and progressively shrinks the latent distribution. 

\begin{figure}[t]
    \centering
    \includegraphics[width=\linewidth]{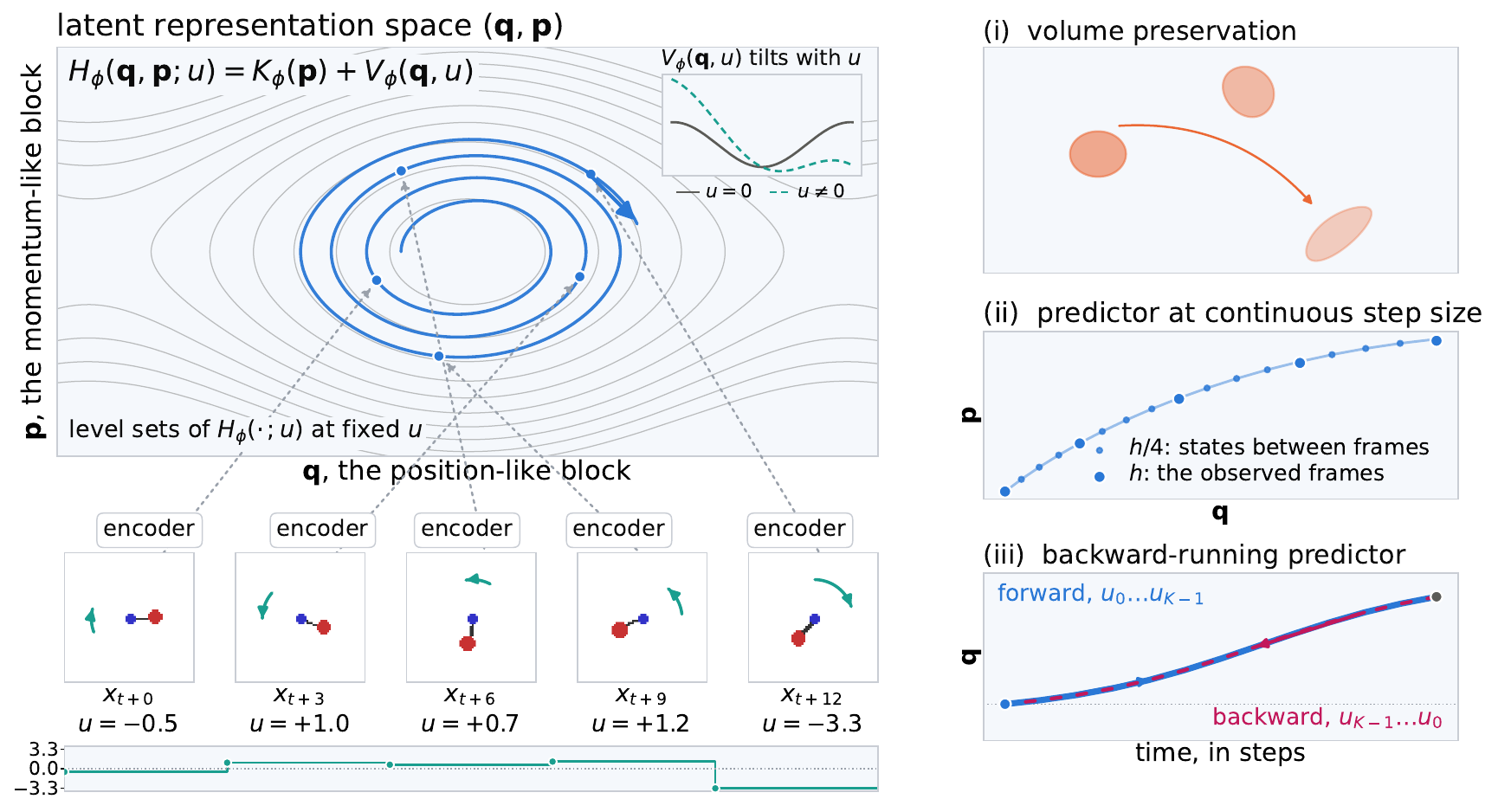}
    \caption{\textbf{\algo: Learning symplectic world models end-to-end.} 
    \textbf{Bottom left:} Actuated pendulum frames and applied torque sequence $u_t$ (green). 
    \textbf{Upper left:} Latent phase space $(\bq_t, \bp_t)$, where transitions are generated by a learned scalar $H_\phi(\bq_t,\bp_t;u_t) = K_\phi(\bp_t) + V_\phi(\bq_t,u_t)$ via a symplectic step. Grey lines indicate level sets; the blue curve shows the rollout. 
    \textbf{Right:} Three properties of the Hamiltonian flow: 
    \textbf{(i) Volume preservation:} Latent volume is exactly preserved under any action sequence. 
    \textbf{(ii) Continuous step size:} $H_\phi$ integrates at arbitrary time intervals, including between observed frames. 
    \textbf{(iii) Time-reversibility:} Integrates backward by inverting time ($t \rightarrow -t$) and reversing actions.}
    \label{fig:concept}
\end{figure}

Physics avoids this contraction through geometric \textit{structure}. Hamiltonian mechanics governs dynamics via a single scalar $H$, constraining conservative trajectories to energy level sets and preserving phase-space volume at every discrete step \citep{arnold1989mathematical}. Crucially, volume preservation persists under actuation, provided the action enters the Hamiltonian. Port-Hamiltonian extensions further cover dissipative systems \citep{marsden1999introduction, zhong2020dissipative, duong2021hamiltonian}, making it a physically grounded prior for latent spaces rather than an idealization. While $H$ is traditionally derived analytically \citep{goldstein2011classical}, recent methods parameterize a learnable $H_\phi$ directly from trajectories or pixel inputs \citep{greydanus2019hamiltonian, chen2019symplectic, toth2019hamiltonian}.

We propose \textbf{\algo} (Symplectic Hamiltonian JEPA), a framework that embeds the latent representation space onto a symplectic manifold where evolution is driven by a learnable action-conditioned Hamiltonian flow $H_\phi(\bq, \bp, \bu)$ over a canonical latent pair $(\bq, \bp)$, where $\mathbf{u}$ denotes the control action vector, integrated by an exact symplectic leapfrog step \citep{hairer2003geometric} and trained end-to-end without decoders or supervised coordinates. \algo{} inherits three structural properties from the Hamiltonian flow, illustrated in Fig.~\ref{fig:concept} (right):

\begin{enumerate}[label=(\roman*), leftmargin=*, noitemsep]
    \item \textbf{Exact volume preservation:} Symplectic integration preserves phase-space volume by construction, provably eliminating some representation collapse modes during evaluation rollouts for arbitrary action sequences (Proposition~\ref{prop:volume}).
    
    \item \textbf{Continuous-time step size:} Because $H_\phi$ parameterizes a continuous vector field, a single trained model can be integrated at arbitrary step sizes $\Delta t$ without retraining, facilitating training on multi-frequency data \citep{doshi2024scaling} and non-uniform planning grids \citep{tan2016model}.
    \item \textbf{Time-Reversibility:} Flipping time ($t \to -t$) allows backward-in-time rollouts, serving as an essential primitive for computing backward reachable tubes and online safety guarantees in optimal control \citep{mitchell2005time, herbert2017fastrack}.
\end{enumerate}

Moreover, a central objective in physical world modeling is learning representations that capture the \emph{complete physical state} of an environment, especially for control tasks like object interception \citep{kim2014catching} and precision throwing \citep{zeng2020tossingbot}. In physics, uniquely determining future evolution requires canonical momentum alongside position within a structured phase space. Without this geometric prior, standard JEPA architectures with unstructured predictors resort to architectural design, such as expanding the encoder’s temporal window or context-conditioned predictors to capture velocity, lacking guarantees. In contrast, \algo{} leverages this prior to model the latent space as a canonical phase space, enabling complete physical representations.
\newpage 
\paragraph{Contributions.} We summarize our key contributions as follows:
\begin{itemize}
    \item \textbf{Symplectic latent world model (\algo):} We introduce an end-to-end trainable JEPA framework equipped with a learned separable Hamiltonian predictor over a canonical latent split $(\bq, \bp)$. We prove that its volume-preserving step eliminates three specific collapse modes during evaluation rollouts across arbitrary horizons and actions (Proposition~\ref{prop:volume}).
    
    \item \textbf{Continuous-time \& time reversibility:} We show that a single trained predictor $H_\phi$ acts as a continuous vector field, enabling sub-frame time integration without performance degradation and achieving numerical time-reversal with round-trip error below $10^{-4}\%$ without architectural modifications.
    
    \item \textbf{Complete physics representation \& principled planning:} By construction rather than by architectural tuning, \algo{} represents the full state through exact geometric structure. Across conservative and actuated pendulum environments, we empirically show that \algo{} accurately recovers full system dynamics. This paves the way towards compute-efficient planning to goals that specify the full physical state: position and velocity.
\end{itemize}

%% file: method.tex
 \algo{} makes the latent dynamics of a JEPA world model Hamiltonian. Its latent state is a canonical pair $(\bq_t,\bp_t)$: the encoder outputs the position-like $\bq_t$, and the momentum-like $\bp_t$ is its time derivative, obtained by zero-parameter numerical differentiation of the encoder trajectory. The predictor advances this pair by one leapfrog step of a learned, action-conditioned Hamiltonian, so every step of the rollout is a symplectic map. We first state what this structure guarantees (Proposition~\ref{prop:volume}), then give the flow, the multi-step objective and the design choices.

A symplectic map preserves the canonical two-form $\Omega = \sum_i dq_i \wedge dp_i$, the oriented area spanned in each position–momentum plane (Appendix~\ref{appendix:background_hamiltonian}), and with it phase-space volume: a cloud of latent states can be stretched, rotated and sheared by the rollout, but never squeezed. Constraining the predictor to be such a map therefore rules out, at every step, the contraction available to an unconstrained rollout (Proposition~\ref{prop:volume}).

\begin{proposition}[\textbf{Symplectic volume preservation and the rollout failure modes it removes}]
\label{prop:volume}
Let $\Psi_h$ be the leapfrog step of Eq.~\ref{eq:leapfrog} for step size $h$ generated by a separable
$H_\phi = K_{\phi_1}(\bp) + V_{\phi_2}(\bq,\bu)$ with $\bu$ held fixed within the step. Then
$\Psi_h$ is an exactly symplectic diffeomorphism of $\R^{2d}$ for every step size $h$ and every parameter $\phi$; it then preserves phase-space volume
exactly at every horizon. Here $h$ is the integrator's step, which equals the observation
interval $\Delta t$ during training but need not at inference (Section~\ref{section:experiments_time}). Consequently, three collapse modes are ruled out: uniform contraction, convergence to an asymptotically stable fixed point, and the merging of distinct states. They are stated and proved in Appendix~\ref{app:volume}.
\end{proposition}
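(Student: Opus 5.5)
The plan is to write the leapfrog step $\Psi_h$ of Eq.~\ref{eq:leapfrog} as a composition of three elementary maps and check symplecticity for each one. Take the standard kick--drift--kick form with $\bu$ frozen:
\[
\mathcal{K}_{h/2}:(\bq,\bp)\mapsto\bigl(\bq,\;\bp-\tfrac{h}{2}\nabla_{\bq}V_{\phi_2}(\bq,\bu)\bigr),\qquad
\mathcal{D}_h:(\bq,\bp)\mapsto\bigl(\bq+h\nabla_{\bp}K_{\phi_1}(\bp),\;\bp\bigr),
\]
so that $\Psi_h=\mathcal{K}_{h/2}\circ\mathcal{D}_h\circ\mathcal{K}_{h/2}$. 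If Eq.~\ref{eq:leapfrog} uses the drift--kick--drift ordering, the argument is the same. Each map is the exact time-$h$ (or $h/2$) flow of a Hamiltonian depending on only one of $\bq$ or $\bp$. Its Jacobian is block unit-triangular,
\[
\begin{pmatrix}\bI & 0\\ -\tfrac h2\nabla^2_{\bq}V & \bI\end{pmatrix},\qquad
\begin{pmatrix}\bI & h\nabla^2_{\bp}K\\ 0 & \bI\end{pmatrix}.
\]
Because Hessians are symmetric, a direct computation gives $M^\top JM=J$ with $J=\begin{pmatrix}0&\bI\\-\bI&0\end{pmatrix}$. This holds for every $h$ and every $\phi$, and needs only $K_{\phi_1},V_{\phi_2}(\cdot,\bu)\in C^2$, which smooth activations in the networks guarantee. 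Symplecticity is preserved under composition, so $\Psi_h$ is symplectic.

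For the diffeomorphism claim, each elementary map has an explicit smooth inverse: negate the shift, using that the shifted coordinate is untouched. Hence $\Psi_h^{-1}=\mathcal{K}_{-h/2}\circ\mathcal{D}_{-h}\circ\mathcal{K}_{-h/2}$ is a global $C^1$ inverse on all of $\R^{2d}$, with no step-size restriction. This is the point where separability is essential. For a non-separable $H$, an implicit scheme would need a fixed-point solve, and that solve is only well-posed for small $h$. Next, $\det M=1$ follows from $M^\top JM=J$ (giving $\det M=\pm1$), and here it can be read directly off the triangular blocks. So $\Psi_h$ preserves Lebesgue volume. For a rollout $\Psi_{h_T}^{\bu_T}\circ\cdots\circ\Psi_{h_1}^{\bu_1}$ with an arbitrary action and step sequence, the chain rule gives Jacobian determinant $1$, and the change of variables gives $\mathrm{vol}(\Phi(A))=\mathrm{vol}(A)$ for every measurable $A$ and every horizon.

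The three collapse modes are then handled as corollaries. \emph{No uniform contraction:} if $\|\Phi(x)-\Phi(y)\|\le c\|x-y\|$ with $c<1$ on a ball $B$, then $\Phi(B)$ lies in a ball of radius $c\,r$, so $\mathrm{vol}(\Phi(B))\le c^{2d}\mathrm{vol}(B)$, a contradiction. \emph{No asymptotically stable fixed point:} suppose $x^*$ attracts an open basin $U$. Choose a small ball $B\ni x^*$ inside $U$. Since $\Phi^n(B)\to\{x^*\}$ uniformly on compact subsets, $\mathrm{vol}(\Phi^n(B))\to0$, contradicting $\mathrm{vol}(\Phi^n(B))=\mathrm{vol}(B)$. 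At the linear level this matches the fact that eigenvalues of a symplectic Jacobian come in pairs $\lambda,1/\lambda$, so they cannot all lie inside the unit disk. \emph{No merging:} injectivity of the diffeomorphism gives $\Phi(x)\neq\Phi(y)$ whenever $x\neq y$. The same holds for any finite composition, and $\Phi^{-1}$ recovers the initial state.

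I expect the main obstacle to be stating the asymptotic-stability step precisely: the uniform-convergence-on-a-neighbourhood argument needs care when the action sequence varies, so the rollout is non-autonomous. I would define stability for the composed sequence $\Phi_n$, requiring $\Phi_n(B)\to\{x^*\}$ for a ball $B$. Volume equality holds for each $\Phi_n$ separately, so the argument still goes through. The algebra for $M^\top JM=J$ and the inverse construction are routine.
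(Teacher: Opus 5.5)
Your treatment of symplecticity, the global diffeomorphism property, $\det D\Psi_h\equiv 1$, volume preservation and no merging is the paper's proof. It uses the same kick--drift--kick factorization, block unit-triangular Jacobians made symplectic by symmetry of the Hessians, explicit inverses obtained by negating the step, and closure under composition. Your (i) is phrased differently: a map that is $c$-Lipschitz with $c<1$ on a ball would shrink that ball's volume by $c^{2d}$. The paper instead reads $\prod_i\sigma_i=|\det D(\Psi_h^K)|=1$ off the Jacobian at every point. Both are valid; the paper's version is pointwise and also gives $\sigma_{\min}\le 1$. Your extension of volume preservation to compositions with varying $\bu$ and $h$ is correct. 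It matches the main-text claim about arbitrary action sequences better than the fixed-$\bu$ iterate $\Psi_h^K$ used in the appendix.

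The gap is in (ii), at the only step that is not routine. You assert that $\Phi^n\to x^\ast$ uniformly on a ball $B$, so that $\Phi^n(B)$ eventually lies in $B_\varepsilon(x^\ast)$. Your hypothesis, however, is only that $x^\ast$ attracts an open basin, and pointwise attraction does not by itself give this uniformity. The paper includes Lyapunov stability in its definition precisely to obtain it. Stability gives a neighbourhood $W$ of $x^\ast$ whose forward orbits stay in $B_\varepsilon(x^\ast)$. Each point of a closed ball $C\subset U$ enters $W$ after finitely many steps, and by continuity a neighbourhood of that point enters at the same step. A finite subcover then yields one $K_\varepsilon$ with $\Phi^K(C)\subseteq B_\varepsilon(x^\ast)$ for all $K\ge K_\varepsilon$.

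Your remedy for the non-autonomous case is to define stability so that it demands $\Phi_n(B)\to\{x^\ast\}$. That writes the missing conclusion into the hypothesis, so it proves a weaker statement than the paper's (ii). A measure-theoretic repair needs neither stability nor uniformity, and it works unchanged for non-autonomous compositions. For a ball $B\subset U$, let $F_n=\{z\in B:\Phi_n(z)\in B_\varepsilon(x^\ast)\}$. Pointwise attraction gives $\mathbf{1}_{F_n}\to\mathbf{1}_B$, hence $\mathrm{vol}(F_n)\to\mathrm{vol}(B)$ by dominated convergence. Volume preservation gives $\mathrm{vol}(F_n)=\mathrm{vol}(\Phi_n(F_n))\le c_{2d}\varepsilon^{2d}$, where $c_{2d}$ is the volume of the unit ball. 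Choosing $\varepsilon$ with $c_{2d}\varepsilon^{2d}<\mathrm{vol}(B)$ gives the contradiction.
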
 

Crucially, symplecticity in the latent representation space is a \emph{structural} guarantee as it holds at every inference step by construction, not only in
expectation over training batches. By endowing the rollout with symplectic geometry, it may also improve long-horizon consistency and planning performance. Moreover, it has certain properties, as discussed in Section \ref{section:intro}, that are of interest for many applications and that no unstructured predictor can have. We chose the Hamiltonian framework rather than the Lagrangian one; the reasons are given in Appendix \ref{hamiltonian_no_lagrangian}.

\textbf{Scope and limitations of the guarantee.} Three immediate remarks follow Proposition~\ref{prop:volume}. We state them here in short form; Appendix~\ref{app:remarks} gives the arguments in full.
\emph{(i) Shape, not size.} Volume preservation forbids the degenerate failures but
it does not bound the growth of rollout error.
\emph{(ii) Regularization is not made redundant.} Symplecticity preserves volume, not
distributional shape. The two act at different places: The regularizer acts on
the encoder distribution in expectation over batches, while symplecticity on every inference step.
\emph{(iii) Geometric prior, not physical modeling.} Symplecticity acts purely as an inductive bias on the latent space rather than a physical simulator \citep{greydanus2019hamiltonian, chen2019symplectic, toth2019hamiltonian}. 

\begin{figure}
    \centering
    \includegraphics[width=\linewidth]{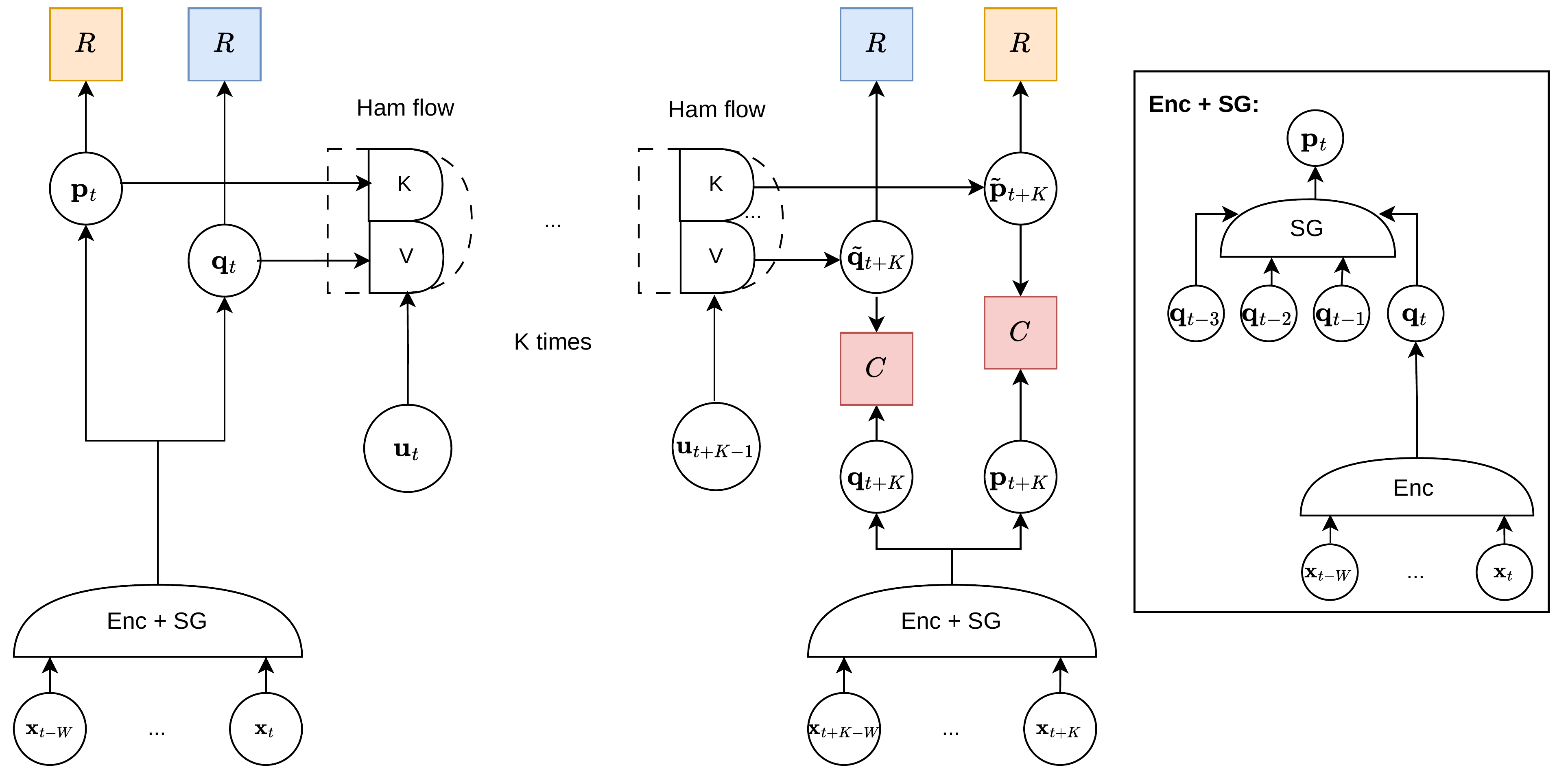}
    \caption{\textbf{\algo{} multi-step training.}
    \textbf{Left:} at the anchor, the encoder block (Enc + SG) maps a window of $W+1$ frames, drawn here as the causal window $\bx_{t-W},\dots,\bx_t$, to the latent state $(\bq_t,\bp_t)$. The Hamiltonian flow
    (Ham flow) advances it by one leapfrog step of the separable Hamiltonian
    $H_\phi = K_{\phi_1}(\bp) + V_{\phi_2}(\bq,\bu)$, with the action entering the potential,
    and is applied $K$ times to its own outputs.
    \textbf{Middle:} each prediction $(\tilde\bq_{t+k},\tilde\bp_{t+k})$ is scored by the
    prediction loss (C) against the encoding of the corresponding frames, shown here for the
    last step $k=K$. The regularizer (R) acts on the encoder's embeddings of $\bq$ (blue) and $\bp$ (orange).
    \textbf{Right:} the encoder block as instantiated in this work: a pixel encoder (Enc)
    outputs $\bq_t$, and a fixed Savitzky--Golay stencil (SG) derives $\bp_t$ from
    $\bq_{t-3},\dots,\bq_t$.}
    \label{fig:diag_algo}
\end{figure}

\subsection{The architecture} 

We train on unannotated trajectories of observations $[\bx_0,\dots,\bx_{T-1}]$ and actions
$[\bu_0,\dots,\bu_{T-1}]$. \algo{} has two learned components, an encoder $\text{Enc}_\theta$
and a Hamiltonian flow ($\text{Ham flow}_\phi$) that replaces the predictor, joined by the zero-parameter Savitzky--Golay (SG) stencil \citep{gorry1990general} that reads the momentum off the encoder trajectory (Fig.~\ref{fig:diag_algo}):

\begin{equation}
\begin{aligned}
\text{Encoder + SG:}\quad & \bq_t =  \text{Enc}_\theta(\bx_{\mathcal{W}(t)}),
  \qquad \bp_t = \text{SG}(\bq_{t-3},\dots,\bq_t), \\
\text{Hamiltonian flow:}\quad & (\tilde\bq_{t+1},\tilde\bp_{t+1})
  = \text{Ham flow}_\phi(\bq_t,\bp_t,\bu_t),
\end{aligned}
\end{equation}
where $\mathcal{W}(t)$ is a window of $W+1$ that includes $\bx_t$ ($W{=}0$ reads $\bx_t$ alone; Appendix~\ref{appendix:architecture}).
The flow advances the state by one step of Hamilton's equations for a learned scalar
$H_\phi(\bq,\bp,\bu)$,
\begin{equation}
\dot{\bq} = \frac{\partial H_\phi}{\partial \bp}, \qquad
\dot{\bp} = -\frac{\partial H_\phi}{\partial \bq},
\label{eq:ham_eq}
\end{equation}
and $\theta$ and $\phi$ are trained end-to-end by backpropagating through the rollout, so the
encoder is shaped to produce latent states that admit this structured evolution.

\textbf{The integration.} In principle, different integration techniques can be used. We leverage the leapfrog (St\"ormer--Verlet) integrator \citep{hairer2003geometric}, which
is second-order accurate and, crucially, exactly symplectic. For a separable Hamiltonian
\begin{equation}
\label{eq:hamiltonian}
H_\phi(\bq, \bp, \bu) = K_\phi\!\left(\mathbf{S}^{-1}\bp\right) + V_\phi(\bq, \bu),
\qquad  \mathbf{S}= \operatorname{diag}\!\left(e^{\boldsymbol{\sigma}}\right),
\end{equation}

the update over a time step $\Delta t$ is
\begin{equation}
\label{eq:leapfrog}
\begin{aligned}
    \Tilde{\bp}_{t+\frac{1}{2}} &= \bp_t - \tfrac{\Delta t}{2}\,\nabla_\bq V_\phi(\bq_t, \bu_t), \\
    \Tilde{\bq}_{t+1}           &= \bq_t + \Delta t\,\mathbf{S}^{-1}\,\nabla K_\phi\!\left(\mathbf{S}^{-1}\Tilde{\bp}_{t+\frac{1}{2}}\right), \\
    \Tilde{\bp}_{t+1}           &= \Tilde{\bp}_{t+\frac{1}{2}} - \tfrac{\Delta t}{2}\,\nabla_\bq V_\phi(\Tilde{\bq}_{t+1}, \bu_t),
\end{aligned}
\end{equation}
where $\boldsymbol{\sigma} \in \mathbb{R}^d$ is learned and absorbs the relative scale of $\bq$ and $\bp$, which the self-supervised objective leaves
undetermined. Because $\mathbf{S}$ is a constant linear rescaling, the flow remains exactly symplectic.

\textbf{Action-conditioned Hamiltonian.} Under actuation the controller does work and $H_\phi$ is no longer constant, but
symplecticity is a property of the map. We condition the potential on the action so
$V_{\phi_2}=V_{\phi_2}(\bq,\bu)$ and the generalized force
$\mathbf{F}(\bq,\bu) = -\partial V_{\phi_2}/\partial \bq$ is a $\bq$-gradient for every action
by construction. Separability is
untouched and the leapfrog of Eq.~\ref{eq:leapfrog} stays \emph{exactly} symplectic for every
action sequence, provided only that $\bu_t$ is constant within a step.

\textbf{Multi-step learning.} We consider multi-step prediction over a $K$-step horizon by iteratively applying the Hamiltonian flow. The loss is the average MSE across the trajectory. Given an anchor of $A$ time steps used to form the embedding of the first time step of the episode, and $K$ self-fed leapfrog steps, the loss is 
\begin{align}
  \mathcal{L} &=  \frac{1}{K}\sum_{j=1}^{K}
       \big\|\tilde \bq_{A+j} - \bq_{A+j}\big\|^{2}
      + \lambda_p\,\frac{1}{K}\sum_{j=1}^{K}
       \big\|\tilde \bp_{A+j} - \bp_{A+j}\big\|^{2}
      + \lambda_{\mathcal{S}} \mathcal{S},
  \label{eq:total}
\end{align}

where the first term is the prediction loss in the position-like embedding, the second is the prediction loss in the momentum-like embedding weighted by $\lambda_p$ and the last term $\mathcal{S}$ is the regularizer, which, in principle, can be any (e.g., SIGReg \citep{balestriero2025lejepa}) and it can be applied to the embeddings $\bq$ and $\bp$ independently, to only $\bq$, or to both embeddings concatenated. It acts on ground-truth embeddings, never
on the rollout. We compare the performance across the three placements in Table \ref{tab:episode-variance} in Appendix~\ref{appendix:results_act_pendulum}.

\textbf{Design choices.} \emph{The shape of $H_\phi$.} While the core LeHami framework is general and agnostic to the choice of Hamiltonian, in this work we adopt a separable structure. This design choice enables fast, explicit leapfrog integration without requiring iterative numerical solvers. The guarantee is not specific to separable $H$: for a non-separable Hamiltonian, e.g., exact symplecticity is retained by replacing leapfrog with a non-separable symplectic integrator \citep{tao2016explicit}. \emph{How $\bp$ is obtained.} We enforce the canonical relation $\mathbf{p} = \dot{\mathbf{q}}$ by computing $\mathbf{p}$ as the temporal derivative of generalized coordinates $\mathbf{q}$ via a fixed Savitzky--Golay (SG) stencil \citep{gorry1990general} on the encoder trajectory. This is a least-squares quadratic fit over $\bq$ that is then differentiated at time step $t$.  The stencil
is linear with constant coefficients, so it adds no learned parameters and cannot itself
collapse. More details in Appendix~\ref{app:design}.

\subsection{Related work} 

Recent work imposes Hamiltonian structure on a learned latent so that predictions remain
physically meaningful and useful for action. \citet{cui2026physically} propose Hamiltonian
World Models, a perspective that decodes predicted latent trajectories into future
observations and plans over the resulting rollouts; as a position paper, it reports no
experiments. \citet{tang2026ham} decompose the latent into a canonical and a context
subspace and train it with a JEPA-style objective, adding the action as an external
control term. \citet{luan2026ph} use port-Hamiltonian dynamics as a structural regularizer
alongside the recurrent transition of a decoder-based Dreamer-style agent
\citep{hafner2019dream, hafner2020mastering}. Closest to our setting, \citet{zoabi2026hamiltonian}
propose H-JEPA, a dissipative port-Hamiltonian predictor for JEPA world models in which the
action enters through an orthonormal port and a port-inverse consistency term infers the
executed action from the observed displacement. However, it advances only a fixed slice of
a wider code, using a backward difference of that state as context, and its explicit Euler
step is not symplectic. \algo~differs in that the action enters the Hamiltonian rather than
an external control term, the latent is a canonical phase space $(\bq, \bp)$, and every
rollout step is exactly symplectic, so phase-space volume is preserved by construction,
learned end to end from pixels.

%% file: experiments.tex
We evaluate our approach on two pendulum environments (Fig.~\ref{fig:datasets}) rendered as $64\times64$ RGB images.
In the conservative pendulum there is no actuation, and episodes vary
across three rod lengths. In the actuated pendulum, a recorded
sequence of scalar torques $(u_t)_t$, drawn uniformly, does work on the system, so energy is
no longer conserved; episodes vary across the same three rod lengths and three masses, since under actuation the mass sets the response to the
torque. In both environments the true dynamics preserve phase-space volume, which is the
property \algo~builds in; the actuated pendulum tests it once energy conservation no longer
holds. Appendix~\ref{appendix:datasets} describes each in full.

\subsection{Experimental setup}

\begin{figure}[t]
    \centering
\includegraphics[width=0.8\linewidth]{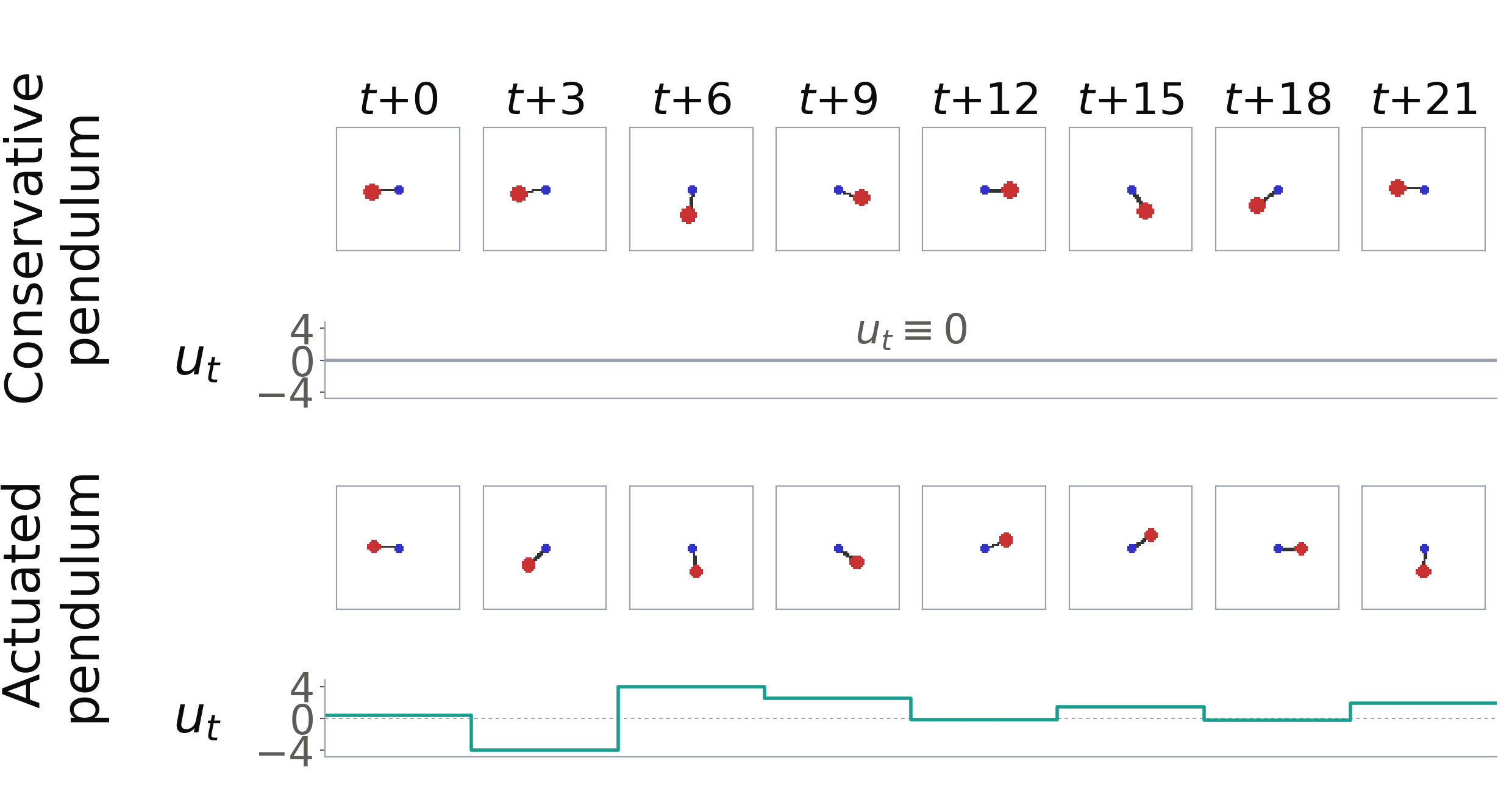}
    \caption{\textbf{The two pendulum environments.} \textbf{Top:} the conservative pendulum, with no actuation. \textbf{Bottom:} the actuated pendulum. Beneath each
row is its respective torque $u_t$, drawn as the piecewise-constant signal, held constant for three frames.}
    \label{fig:datasets}
\end{figure}

\textbf{Methodology.} \algo~is designed as a drop-in replacement for the predictor of a
JEPA world model, so we compare against LeWorldModel
(LeWM) \citep{maes2026leworldmodel}, the state-of-the-art among JEPA world models trained end-to-end from pixels, which reported gains over PLDM \citep{sobal2025learning} and DINO-WM
\citep{zhou2024dino}. LeWM's predictor is unstructured: a causal transformer that takes
three context embeddings to predict the next one. \algo~is encoder-agnostic and it replaces only this predictor, plus
the fixed SG stencil. The
main text uses a temporal ConvNet \citep{tran2018closer} (Appendix Fig.~\ref{diagram:convet}) and Appendix~\ref{appendix:results_cons}
repeats the comparison with a tiny ViT \citep{wu2022tinyvit}. Moreover, \algo{} is plugged on top of a frozen pre-trained encoder with DINOv2 features \citep{oquab2023dinov2}, yielding DINO-LeHami. We do not compare against
reward-driven agents such as Dreamer \citep{hafner2019dream} or TD-MPC \citep{hansen2022temporal} variants, since they require task rewards
whereas our setting, like LeWM's, is reward-free, goal-conditioned planning. More details on methodology in Appendix~\ref{appendix:methodology}.

\textbf{Probing.} The pendulum's state is fully described by the angle $\theta$ and the angular velocity $\omega$,
and we score their prediction along the rollout by MAE against ground truth; for $\theta$ we use
MAE rather than $R^2$, whose normalising variance is undefined on the circle. Three probe tiers
of increasing capacity read the latent: a ridge, a $k$-NN ($k=25$) and a two-layer MLP. Each is fitted once, either on the encoder's own features (the anchor, K=0) or on the model's
one-step output (K=1), and
applied frozen. We report the results in the test split of episodes. The $k$-NN fitted at K=1 puts the models on equal footing
because LeWM's encoder is per-frame, and only its predictor sees more than one time step. 

\textbf{Planning.} For the actuated pendulum we perform, at inference time, optimization of the action sequence to minimize a latent goal objective $
    \mathcal{C}(\Tilde{z}_{H}) = \big\|\tilde \bz_{H} - \bz_{g}\big\|^{2}, 
$
where $\bz_{g}$ is the goal state and $\tilde \bz_{H}$ is the latent prediction at the horizon, which in the case of \algo~is $(\Tilde{\bq}_H, \Tilde{\bp}_H)$. The optimal sequence of actions is $\ba^{*}_{1:H} = \texttt{argmin}_{\ba_{1:H}}\mathcal{C}(\Tilde{z}_{H})$ and is obtained through the Cross-Entropy Method (CEM). We adopt the Model Predictive Control (MPC) strategy in which only the first $R$ steps are executed before replanning from the updated observation. We measure the TFLOPs incurred during planning. Refer to Appendix~\ref{app:planning} for the full description of the planning protocol.

\subsection{Evaluating Complete Phase-Space Physics Capture}
 For a mechanical system, position alone does not determine the future: the state is the pair of position and momentum. A world model that is evaluated only on position is therefore tested on half of the state. In this section we probe for both, the angle $\theta$ and the angular velocity $\omega$, at the anchor and along the rollout, and ask whether each model's latent carries the \emph{complete physical state}. Because \algo{} enforces second-order phase-space structure by construction, its latent carries the velocity that a per-frame baseline cannot.

On the conservative pendulum, at the anchor, the $k$NN probe reads
the angular velocity significantly better (see the anchor
row in Table~\ref{tab:cons-knnk1}). One image does not show motion, so a per-frame encoder, as in baseline, cannot
supply it. While \algo~also encodes per frame, it reads velocity from the Savitzky--Golay channel, which does not require any training. During rollout, as shown in Fig.~\ref{fig:cons-physics} \textbf{(left)}, the two are level early and
\algo~pulls ahead from $K \approx 11$ on the angle's prediction. Regarding the prediction of velocity, Fig.~\ref{fig:cons-physics} \textbf{(middle)} shows that \algo~leads by $2.8\times$ over the first ten steps and $1.3\times$ at
the end of the horizon. The learned $H_\phi$ makes that each rollout stays on its own level set
of $H_\phi$, as the true energy confines the physical system, while LeWM's rollouts cross
them (Appendix Figs.~\ref{fig:latent_space_cons_pendulum}--\ref{fig:physics_vs_\algo_cons_pendulum}).

This advantage does not come from a larger model or a larger training budget: \algo~has lower $\theta$ and $\omega$ error than LeWM at every matched training budget, matching LeWM's best error with at most one-eighth of the training FLOPs (Appendix Table~\ref{tab:exp096-isocompute}). The advantage also holds with a frozen pre-trained encoder: DINO-LeHami outperforms DINO-WM \citep{zhou2024dino} (Appendix Fig.~\ref{fig:dino-powered}).

To empirically assess whether \algo{} avoids representation collapse over long horizons, we directly evaluate the phase-space volume contraction of the latent predictor during rollouts. Fig.~\ref{fig:cons-physics} \textbf{(right)} tracks the normalized cumulative log-determinant across a $K$-step rollout, $\frac{1}{n}\sum_{i=0}^{K-1}\log\lvert\det J(\bz_i)\rvert$, where $J(\bz_i)$ denotes the Jacobian of the one-step predictor and $n$ is the latent dimension. This quantity directly measures phase-space volume change under iterated predictions. As shown, \algo{} maintains an exactly constant value of $0$ ($\lvert\det J(\bz_i)\rvert = 1, \forall \bz_i$), avoiding rollout volume contraction. Conversely, LeWM exhibits a steady linear decline in cumulative log-volume, indicating continuous contraction during rollout; the same holds for the full phase-space volume of its three-embedding context (Appendix Fig.~\ref{fig:jacobian-volume}).

\begin{figure}
    \centering
    \includegraphics[width=\linewidth]{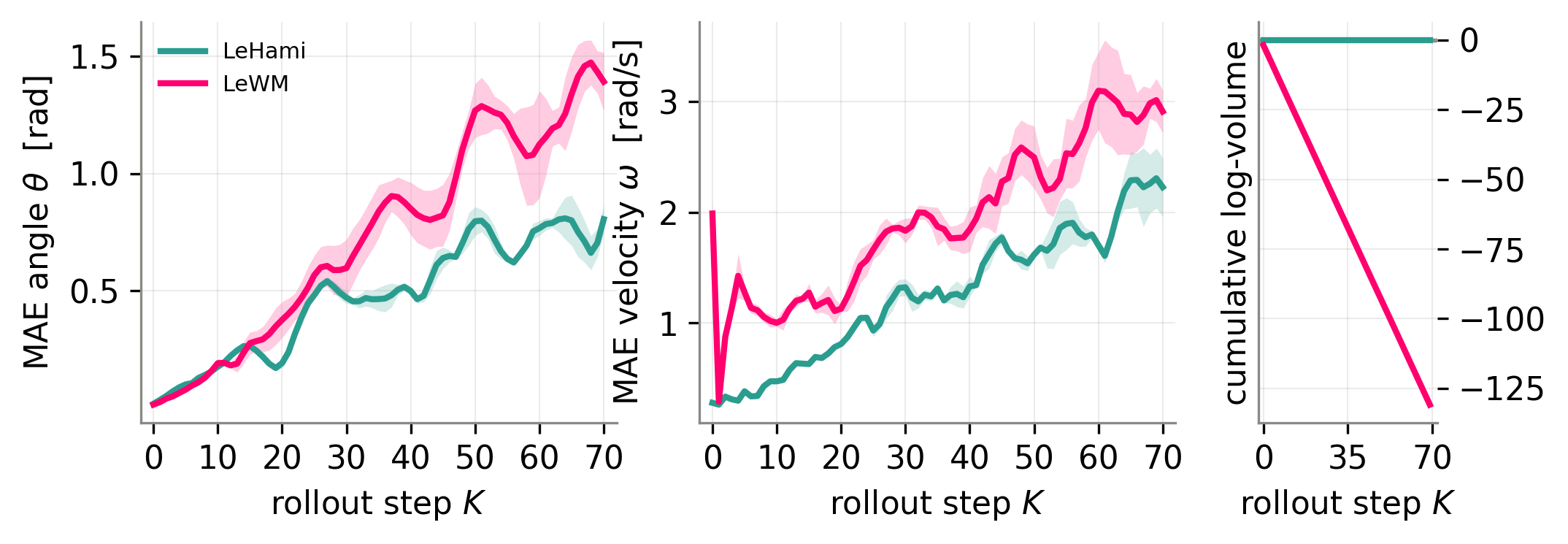}
    \caption{\textbf{Prediction performance during conservative pendulum rollouts.} Wrapped MAE of $\theta$ \textbf{(left)} and $\omega$ \textbf{(middle)} across the test rollout steps ($K$). Metrics are computed using a $k$-NN probe ($k=25$) fitted at $K=1$ and kept frozen. Shaded areas indicate variance across three training seeds, averaged over test episodes. Cumulative log-volume of the one-step latent map, averaged over test
episodes, for one training seed \textbf{(right)}. }
    \label{fig:cons-physics}
\end{figure}

\begin{table}[h]
\centering
\caption{\textbf{Performance in physics prediction of the actuated pendulum at the anchor K=0}. Mean $\pm$ s.d.over three training seeds, test split. Bold marks the better model within a probe tier. A negative $R^2$ is below the predict-the-mean floor.}
\label{tab:physics_prediction_anchor}
\begin{tabular}{l l rrr}
\toprule
model & probe & MAE $\theta$ [$10^{-2}$ rad] & MAE $\omega$ [rad/s] & $R^2(\omega)$ \\
\midrule
\multirow{3}{*}{\algo} 
  & ridge & $23.0 \pm 2.0$ & $\mathbf{1.168 \pm 0.001}$ & $\mathbf{+0.491 \pm 0.005}$ \\
  & 25-NN & $6.8 \pm 1.8$ & $\mathbf{0.419 \pm 0.045}$ & $\mathbf{+0.889 \pm 0.024}$ \\
  & MLP & $2.9 \pm 0.4$ & $\mathbf{0.184 \pm 0.019}$ & $\mathbf{+0.987 \pm 0.004}$ \\
\midrule
\multirow{3}{*}{LeWM} 
  & ridge & $\mathbf{1.4 \pm 0.1}$ & $2.422 \pm 0.001$ & $-0.014 \pm 0.001$ \\
  & 25-NN & $\mathbf{1.1 \pm 0.0}$ & $2.508 \pm 0.002$ & $-0.103 \pm 0.001$ \\
  & MLP & $\mathbf{1.9 \pm 0.1}$ & $2.419 \pm 0.001$ & $-0.011 \pm 0.001$ \\
\bottomrule
\end{tabular}
\end{table}

\textbf{When incorporating actions.} The actuated pendulum shows the same pattern at the anchor. Table~\ref{tab:physics_prediction_anchor} evaluates the representation across three probe tiers of increasing capacity. \algo's velocity is decoded well by the non-linear probes ($k$-NN, MLP) while for LeWM, all three probe tiers remain below the predict-the-mean floor. The angle exhibits the opposite trend: LeWM decodes $\theta$ more accurately, though the gap narrows at higher probe capacities. This is the expected consequence of a per-frame encoder as it contains only
position, so the anchor latent encodes the angle well but carries no velocity. By
contrast, \algo{} structurally represents both canonical variables $(\bq,\bp)$, reading
velocity from the SG channel (Appendix Fig.\ref{fig:canonical_actuated_pendulum}). Along the rollout these respective advantages decay (Appendix Fig.~\ref{fig:actuated_pendulum_physics}): \algo{} maintains superior velocity prediction until $K \approx 10$, at which point the baseline's predictor has ingested enough context frames to implicitly infer motion; conversely, LeWM's initial advantage on $\theta$ decays by $K \approx 40$ as it suffers from compounding contraction.

\subsection{\algo~is a world model with time re-stepping and reversal} \label{section:experiments_time}

As discussed, \algo{} is a world model that runs at any rate, and in either direction, because the Hamiltonian predictor $H_\phi$ is a continuous-time vector field. We showcase this capability by holding the learned $H_\phi$ fixed and integrating it at $s\Delta t$, taking $1/s$ sub-steps per observed frame, and compare against a Runge--Kutta integration of the true ODE from the same initial condition. Refining below the observation rate leaves the predicted trajectory invariant
(Fig.~\ref{fig:dt-reversal} (left)); coarsening breaks it, as an explicit
symplectic integrator leaves its stability band. The same property makes the rollout
reversible. Fig.~\ref{fig:dt-reversal} (right) shows the round-trip error of the latent state after
rolling $K$ steps forward and back with $\Delta t \to -\Delta t$, which stays below
$10^{-4}\%$.

\begin{figure}[t]
    \centering
    \includegraphics[width=\linewidth]{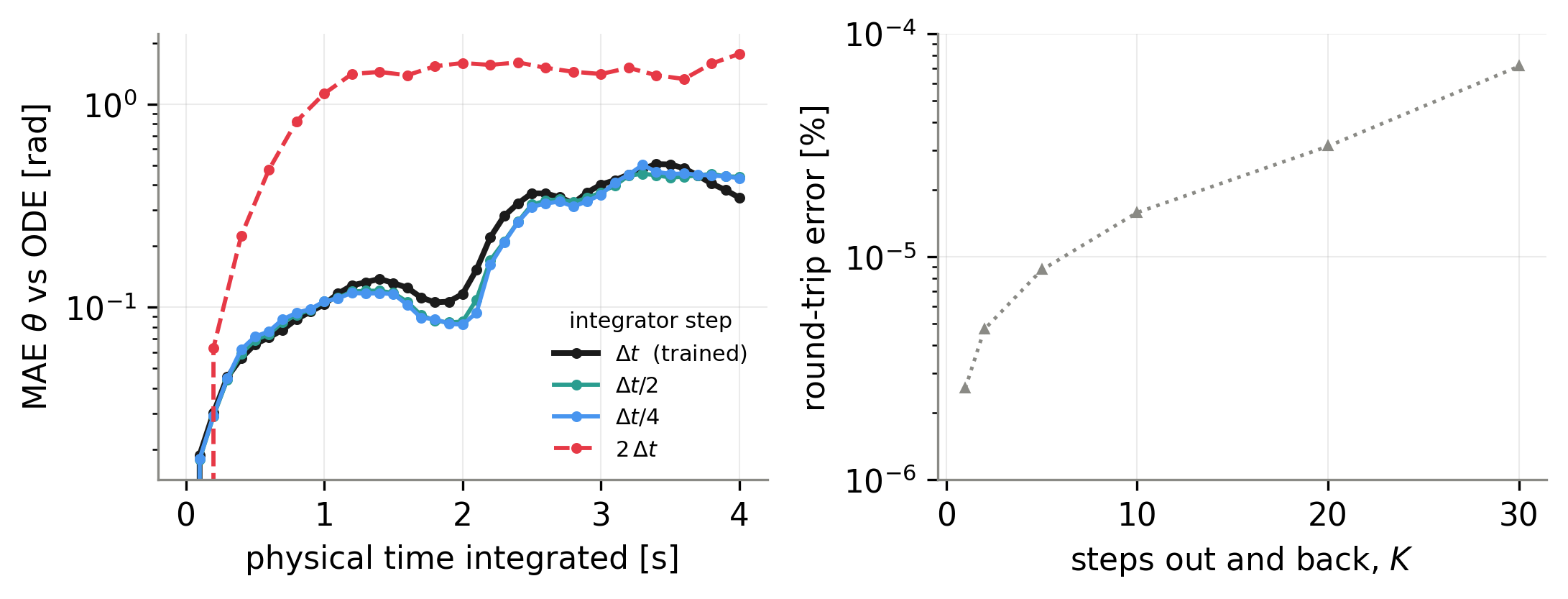}
    \caption{\textbf{The learned flow behaves like a continuous-time vector field,
    and is approximately reversible, showcased with the conservative pendulum.}
    \textbf{Left:}~MAE of $\theta$ integrated with step sizes never seen in
    training (to the ground-truth from the integration of the ODE). The model is trained only at $\Delta t = 0.1$\,s; halving and quartering the step (2 and 4 sub-steps per frame, respectively)
    traces the same trajectory, while doubling it breaks. 
    \textbf{Right:}~Round-trip error after rolling $K$ steps out and back by sign flip $\Delta t \to -\Delta t$ in the $H_\phi$.}
    \label{fig:dt-reversal}
\end{figure}

\subsection{\algo~ brings computational efficiency and structural priors in velocity planning} \label{sec:planning}

We evaluate goal-reaching performance on goals with position and velocity clauses, separately and together, under a strict computational budget of 15 TFLOPs per episode. Under this constraint, models must balance their per-solve computational cost against their replanning interval (R). Both LeWM and \algo{} accurately decode the rod's mass and length along the whole rollout (Appendix Table~\ref{tab:param-conservation}). Since the torque's effect depends on mass and length (Appendix Eq.\ref{eq:pendulum_actuated}), both models have the information needed to predict the response to an action. Differences in planning therefore do not stem from failed system identification.

Capturing velocity dynamics within standard JEPA frameworks requires explicit architectural asymmetry. As shown in Table~\ref{tab:swingup-best-clause}, the baseline LeWM with ConvNet without time window (W=1) fails in the velocity clause because its per-frame goal embedding carries no velocity, even though its predictor infers it from context. Expanding the window with a symmetric (tied) kernel only captures speed while remaining direction-blind since it maps $\pm\omega$ to the same latent (Appendix
Table~\ref{tab:sign-blind}). As $92\%$ of the goals require the correct sign (Appendix
Fig.~\ref{fig:goal_distribution}), it performs below floor on the sign-resolved criterion of Appendix Table~\ref{tab:swingup-clause-decomp}. In contrast, expanding with untied weights breaks temporal parity, allowing the model to capture signed velocity and achieve the highest success rate. However, this unrestricted temporal formulation incurs a high computational cost per solve. To remain under the 15 TFLOP budget, LeWM W=3 (untied) is forced into larger replanning intervals.

In contrast, \algo~structurally guarantees the representation of signed velocity by embedding the system's physics. Momentum-like embedding is read off the encoder trajectory by a zero-parameter SG stencil, and the Hamiltonian flow itself has $2.23$M parameters, against LeWM's transformer and its predictor projection with $3.31$M parameters (Appendix Table~\ref{tab:multi-params}). This computational efficiency allows \algo~to evaluate more trajectories and replan more frequently (R=6 steps with 5 solves) on the velocity and joint clauses, consuming less TFLOPs. While LeWM W=3 (untied) achieves a higher success rate, \algo~provides a structurally rigorous alternative. Replacing the fixed SG stencil with a jointly learned $\bp$ embedding offers a clear path to closing this performance gap while maintaining the volume-preserving guarantees.

\begin{table}[t]
  \centering
\caption{\textbf{Goal-reaching success on the actuated pendulum, by clause.} Best success probability per model at a matched budget of at most $15$~TFLOP/episode. Mean $\pm$ s.d.\ over three model seeds. The floor probabilities (corresponding to $u = 0$) are $0.060$, $0.240$ and $0.040$ on each of the three clauses respectively. Bold marks the better model within a clause. The planner re-solves every $R$ steps towards the same goal at horizon 30, so it makes $\lceil 30/R\rceil$ solves, the last of which covers only the remaining steps.}
  \label{tab:swingup-best-clause}
  \begin{tabular}{llcrl}
    \toprule
    Clause & Model & Success & TFLOP & Replanning interval \\
    \midrule
    \multirow{4}{*}{Position}
      & LeWM $W{=}1$          & $\mathbf{0.800 \pm 0.019}$ & 12.87 & $R=9$, 4 solves \\
      & LeWM $W{=}3$ (tied)   & $0.633 \pm 0.020$ & 13.65 & $R=9$, 4 solves \\
      & LeWM $W{=}3$ (untied) & $0.696 \pm 0.016$ & 13.65 & $R=9$, 4 solves \\
      \cmidrule{2-5}
      & \algo~                & $0.480 \pm 0.084$ &  8.05 & $R=9$, 4 solves \\
    \midrule
    \multirow{4}{*}{Velocity}
      & LeWM $W{=}1$          & $0.184 \pm 0.027$ &  8.78 & $R=15$, 2 solves \\
      & LeWM $W{=}3$ (tied)   & $0.473 \pm 0.014$ & 13.65 & $R=9$, 4 solves \\
      & LeWM $W{=}3$ (untied) & $\mathbf{0.800 \pm 0.058}$ & 13.65 & $R=9$, 4 solves \\
      \cmidrule{2-5}
      & \algo~                & $0.527 \pm 0.019$ & 10.98 & $R=6$, 5 solves \\
    \midrule
    \multirow{4}{*}{Position and Velocity}
      & LeWM $W{=}1$          & $0.142 \pm 0.033$ & 12.87 & $R=9$, 4 solves \\
      & LeWM $W{=}3$ (tied)   & $0.400 \pm 0.037$ &  9.31 & $R=15$, 2 solves \\
      & LeWM $W{=}3$ (untied) & $\mathbf{0.636 \pm 0.025}$ & 13.65 & $R=9$, 4 solves \\
      \cmidrule{2-5}
      & \algo~                & $0.304 \pm 0.088$ & 10.98 & $R=6$, 5 solves \\
    \bottomrule
  \end{tabular}
\end{table}

%% file: background.tex
\section{Background in Hamiltonian mechanics} \label{appendix:background_hamiltonian}

A physical Hamiltonian system with $n$ degrees of freedom evolves on a phase space $\cM = \R^{2n}$,
coordinatized by generalized positions $\bq \in \R^n$ and conjugate momenta $\bp \in \R^n$
\citep{arnold1989mathematical}. Its evolution is generated by the Hamiltonian, which is a scalar function $H :
\R^{2n} \to \R$ that represents the system's total energy, through Hamilton's equations
\begin{equation}
  \dot{\bq} = \frac{\partial H}{\partial \bp},
  \qquad
  \dot{\bp} = -\frac{\partial H}{\partial \bq}.
  \label{eq:hamilton_eqs}
\end{equation}
The two main consequences of this that we use in this work are the following. The flow $\phi_t$ of
Eq.~\ref{eq:hamilton_eqs} is \emph{symplectic}: it preserves $\Omega = \sum_i dq_i \wedge
dp_i$, and hence the phase-space volume due to Liouville's theorem \citep{arnold1989mathematical}. Thus, a phase-space density $\rho$ is transported without distortion, $\partial_t \rho = -\{\rho,
H\}$ with $\{\cdot,\cdot\}$ the Poisson bracket. 

\paragraph{Phase-space density.} Rather than a single initial condition one works with a
density $\rho(\bq,\bp,t)$ normalised to unit mass. Its evolution obeys the Liouville
equation $\partial_t \rho = -\{\rho, H\}$, where $\{\rho,H\} = \partial_\bq \rho \cdot
\partial_\bp H - \partial_\bp \rho \cdot \partial_\bq H$. Equivalently $d\rho/dt = 0$: the
density seen by an observer moving with the flow is constant, so the flow is an
incompressible deformation of phase space.

\paragraph{Symplectic form.} The two-form $\Omega = \sum_{i=1}^n dq_i \wedge dp_i$ is
closed and non-degenerate, and the Hamiltonian flow $\phi_t$ preserves it exactly. The
wedge product records oriented area in the $(q_i,p_i)$ plane and changes sign under
exchange of its arguments, which is the canonical asymmetry between position and momentum.
Preservation of $\Omega$ implies preservation of the volume form $\Omega^n/n!$
(Liouville's theorem): a cloud of initial conditions may stretch, rotate and shear, but its
volume is invariant. For $N$ particles the form extends additively,
$\Omega = \sum_{j=1}^N \sum_{i=1}^n dq_i^{(j)} \wedge dp_i^{(j)}$, and the statement holds
on the full $2nN$-dimensional space.

%% file: appendix_theory_design.tex
\section{Theory} \label{appendix:theory}

\subsection{Symplecticity of the leapfrog step: formal statement and proof}
\label{app:volume}

Proposition~\ref{prop:volume} is stated in the main text in geometric terms. Here we give the
regularity hypotheses, render the three consequences in explicit form. Throughout, $\bz = (\bq,\bp) \in \R^{2d}$ and
\[
\mathbf{J} \;=\; \begin{pmatrix} \bzero & \bI_d \\ -\bI_d & \bzero \end{pmatrix}
\in \R^{2d \times 2d},
\qquad \mathbf{J}^{\!\top} = -\mathbf{J} = \mathbf{J}^{-1}.
\]

A $C^1$ map $\Psi : \R^{2d} \to \R^{2d}$ is \emph{symplectic} if
$D\Psi(\bz)^{\!\top} \mathbf{J}\, D\Psi(\bz) = \mathbf{J}$ for every $\bz$, where
$D\Psi(\bz) \in \R^{2d\times 2d}$ denotes the Jacobian of $\Psi$ at $\bz$,
i.e.,\ $[D\Psi(\bz)]_{ij} = \partial \Psi_i/\partial z_j$.

\begin{proposition}[Formal form of Proposition~\ref{prop:volume}]
\label{prop:volume-app}
Let $K_{\phi_1} \in C^2(\R^d)$ and $V_{\phi_2} \in C^2(\R^d \times \mathcal{U})$, and let
$\Psi_h$ be the leapfrog step of Eq.~\eqref{eq:leapfrog} generated by
$H_\phi = K_{\phi_1}(\bp) + V_{\phi_2}(\bq,\bu)$ with the control $\bu$ held fixed within the
step. This means that the leapfrog step can be written as $\Psi_h = A_{h/2} \circ B_h \circ A_{h/2}$
with
\begin{equation}
\label{eq:leapfrog-factors}
A_{h/2}(\bq,\bp) = \bigl(\bq,\; \bp - \tfrac{h}{2}\,\nabla_{\bq} V_{\phi_2}(\bq,\bu)\bigr),
\qquad
B_{h}(\bq,\bp) = \bigl(\bq + h\,\nabla_{\bp} K_{\phi_1}(\bp),\; \bp\bigr).
\end{equation}

Then $\Psi_h$ is a $C^1$ diffeomorphism of $\R^{2d}$ that is exactly symplectic, which means that 
\begin{equation}
\label{eq:symplectic-jacobian}
D\Psi_h(\bz)^{\!\top}\, \mathbf{J}\, D\Psi_h(\bz) \;=\; \mathbf{J}
\qquad \text{for every } \bz \in \R^{2d},
\end{equation}
for every step size $h > 0$, every control $\bu \in \mathcal{U}$ and every parameter
$\phi = (\phi_1,\phi_2)$. Consequently
$\det D\Psi_h \equiv 1$, Lebesgue measure volume on $\R^{2d}$ is $\Psi_h$-invariant,
and for every horizon $K \in \mathbb{N}$ the three below conditions are met: 
\begin{enumerate}
\item[(i)] \emph{(no uniform contraction)} the singular values of $D(\Psi_h^K)(\bz)$ with $\Psi_h^K := \Psi_h \circ \cdots \circ \Psi_h$ satisfy
$\prod_{i=1}^{2d} \sigma_i = 1$, hence $\sigma_{\max} \ge 1 \ge \sigma_{\min}$ at every
$\bz \in \R^{2d}$;

\item[(ii)] \emph{(no asymptotically stable fixed point)} $\Psi_h$ admits no
asymptotically stable fixed point, where ``asymptotically stable'' means: a fixed point
$\bz^\ast$ that is Lyapunov stable and attracts every point of some open neighbourhood
of itself;

\item[(iii)] \emph{(no merging)} $\Psi_h^K$ is a composition of diffeomorphisms, hence itself
a diffeomorphism of $\R^{2d}$, in particular injective. Consequently, if $\bz_1 \neq \bz_2$
then $\Psi_h^K(\bz_1) \neq \Psi_h^K(\bz_2)$ for every $K \in \N$: two latents that differ at the
anchor remain distinct at every horizon.
\end{enumerate}
\end{proposition}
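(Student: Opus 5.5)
The plan is to prove symplecticity factor by factor and then use that symplectic maps form a group under composition. Each factor in Eq.~\ref{eq:leapfrog-factors} is a shear. Since $V_{\phi_2}\in C^2$, the Jacobian of $A_{h/2}$ is
\[
DA_{h/2}(\bz)=\begin{pmatrix}\bI_d & \bzero\\ -\tfrac h2\,\mathbf{M}(\bq) & \bI_d\end{pmatrix},\qquad \mathbf{M}(\bq)=\nabla^2_{\bq}V_{\phi_2}(\bq,\bu),
\]
with $\mathbf{M}$ symmetric by Schwarz's theorem. Similarly, $DB_h$ has $h\,\nabla^2K_{\phi_1}(\bp)$ in the upper-right block. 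A direct block multiplication shows that any matrix of the form $\begin{pmatrix}\bI & \bzero\\ \mathbf{S} & \bI\end{pmatrix}$ or $\begin{pmatrix}\bI & \mathbf{S}\\ \bzero & \bI\end{pmatrix}$ with $\mathbf{S}=\mathbf{S}^\top$ satisfies $X^\top\mathbf{J}X=\mathbf{J}$; the only nontrivial term is $\mathbf{S}-\mathbf{S}^\top=\bzero$. This step is where the $C^2$ hypothesis and the fact that the kick and drift are gradients are used, and it holds for every $h$, every $\bu$ and every $\phi$.

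Next, each shear is a $C^1$ bijection with explicit $C^1$ inverse: $A_{h/2}^{-1}(\bq,\bp)=(\bq,\bp+\tfrac h2\nabla_\bq V_{\phi_2}(\bq,\bu))$, and likewise for $B_h$. Hence $\Psi_h=A_{h/2}\circ B_h\circ A_{h/2}$ is a $C^1$ diffeomorphism. By the chain rule, $D\Psi_h(\bz)$ is a product of three symplectic matrices evaluated at the intermediate points, and $(XY)^\top\mathbf{J}(XY)=Y^\top X^\top\mathbf{J}XY=\mathbf{J}$ gives Eq.~\ref{eq:symplectic-jacobian}. For $\mathbf{S}$ from Eq.~\ref{eq:hamiltonian}, I would note that it is absorbed into $K$, since $\bp\mapsto K(\mathbf{S}^{-1}\bp)$ is still $C^2$, so no separate argument is needed. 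Taking determinants gives $\det(D\Psi_h)^2=1$. The sign is fixed because each shear factor has determinant exactly $1$, being block triangular with identity diagonal blocks; hence $\det D\Psi_h\equiv1$. Invariance of Lebesgue measure then follows from the change-of-variables formula applied to the diffeomorphism: $|\Psi_h^{-1}(E)|=\int_{E}|\det D\Psi_h^{-1}|=|E|$.

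For the consequences, iterating the chain rule gives $\det D(\Psi_h^K)\equiv1$, so $\prod\sigma_i=|\det|=1$. If all $\sigma_i<1$ the product would be below $1$, and if all $\sigma_i>1$ it would be above $1$; hence $\sigma_{\max}\ge1\ge\sigma_{\min}$, which is (i). Item (iii) is immediate because a composition of bijections is a bijection.

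The main obstacle is (ii), because the definition of attraction is pointwise rather than uniform. I would argue by contradiction. Suppose $\bz^\ast$ is Lyapunov stable and attracts an open neighbourhood $U$. Choose a closed ball $\bar B\subset U$ around $\bz^\ast$. By Lyapunov stability there is an $\varepsilon$-ball $B_\varepsilon$ whose forward orbits stay in $\bar B$, so $\Psi_h^n(B_\varepsilon)\subset\bar B$ for all $n$. Every point of $B_\varepsilon$ converges to $\bz^\ast$. By dominated convergence, using the indicator of $\Psi_h^n(B_\varepsilon)$ bounded by that of $\bar B$ together with pointwise convergence of orbits, the sets $\Psi_h^n(B_\varepsilon)$ eventually lie in any small ball $B_\delta(\bz^\ast)$ up to vanishing measure. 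More concretely, I would apply dominated convergence to $\int_{B_\varepsilon}\mathbf{1}[\,|\Psi_h^n(\bz)-\bz^\ast|<\delta\,]\,d\bz\to|B_\varepsilon|$. The integral equals $|\Psi_h^n(B_\varepsilon)\cap B_\delta|\le|B_\delta|$ by injectivity and volume preservation. Choosing $\delta$ with $|B_\delta|<|B_\varepsilon|$ gives a contradiction, and this step is where measure invariance does the real work.
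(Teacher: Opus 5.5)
Your proposal is correct. The shear factorisation, the symmetric-Hessian block computation, the explicit inverses, the chain-rule composition, the unit determinant from the unipotent factors, and items (i) and (iii) follow the paper's proof essentially line by line. Your remark that the scale $\mathbf{S}$ is absorbed into $K$ is a small addition that the paper's formal proof leaves implicit.

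The genuine difference is in (ii). The paper uses Lyapunov stability, continuity and compactness of a closed ball $C\subset U$ to upgrade pointwise attraction to uniform convergence, $\Psi_h^K(C)\subseteq B_\varepsilon(\bz^\ast)$ for all $K\ge K_\varepsilon$. It then compares $\mathrm{vol}(C)=\mathrm{vol}(\Psi_h^K(C))\le c_{2d}\varepsilon^{2d}$. You instead apply bounded convergence to $\int_{B_\varepsilon}\mathbf{1}[\,|\Psi_h^n(\bz)-\bz^\ast|<\delta\,]\,d\bz=|B_\varepsilon\cap\Psi_h^{-n}(B_\delta)|$. This tends to $|B_\varepsilon|$, yet is at most $|\Psi_h^{-n}(B_\delta)|=|B_\delta|$ by invariance of Lebesgue measure.

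In your concrete version the integrand is dominated by $1$ on the fixed set $B_\varepsilon$. So Lyapunov stability is never actually used; you only need $B_\varepsilon\subset U$. Your first, looser sentence about indicators of the moving sets $\Psi_h^n(B_\varepsilon)$ can therefore be dropped.

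What your route buys is economy and generality: with no compactness argument, it shows a volume-preserving map has no fixed point attracting an open set at all, stable or not. What the paper's route buys is a stronger geometric intermediate statement: the whole ball is eventually squeezed into an arbitrarily small ball. That matches the ``no squeezing'' intuition directly, at the price of needing stability to obtain uniformity.
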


\begin{proof} 
We start by showing that each of the  half-steps are symplectic shears. For this, we fix $\bu \in \mathcal{U}$ and from Eq.~\ref{eq:leapfrog-factors} we compute their Jacobians as
\[
DA_{h/2}(\bq,\bp) = \begin{pmatrix} \bI_d & \bzero \\ -\tfrac{h}{2}\,\nabla^2_{\bq\bq}V_{\phi_2}(\bq,\bu) & \bI_d \end{pmatrix},
\qquad
DB_{h}(\bq,\bp) = \begin{pmatrix} \bI_d & h\,\nabla^2_{\bp\bp}K_{\phi_1}(\bp) \\ \bzero & \bI_d \end{pmatrix},
\]
which are well defined and continuous since $K_{\phi_1}, V_{\phi_2} \in C^2$. Both are unipotent
block-triangular, so $\det DA_{h/2} = \det DB_h = 1$.
By Clairaut's theorem, the $C^2$ regularity implies that the Hessians
$\nabla^2_{\bq\bq}V_{\phi_2}$ and $\nabla^2_{\bp\bp}K_{\phi_1}$ are symmetric.
This symmetry gives
$DA_{h/2}^{\!\top}\mathbf{J}\,DA_{h/2} = \mathbf{J}$ and
$DB_h^{\!\top}\mathbf{J}\,DB_h = \mathbf{J}$. Thus, both maps are symplectic.
Separately, each map has an explicit global $C^1$ inverse:
$A_{h/2}^{-1} = A_{-h/2}$ and $B_h^{-1} = B_{-h}$.
Hence $A_{h/2}$ and $B_h$ are $C^1$ diffeomorphisms of $\R^{2d}$.
Together, these facts show that both maps are exactly symplectic $C^1$ diffeomorphisms
with unit Jacobian determinant, for every $h>0$, $\bu\in\mathcal{U}$ and $\phi$. 

Now we show that these properties survive composition: given $\Psi,\Phi$ $C^1$ diffeomorphisms of $\R^{2d}$, exactly symplectic, thus so $\Psi\circ\Phi$ is. The chain rule shows that  $D(\Psi\circ\Phi)^{\!\top}\mathbf{J}\,D(\Psi\circ\Phi)
= D\Phi^{\!\top}(D\Psi^{\!\top}\mathbf{J}\,D\Psi)D\Phi = D\Phi^{\!\top}\mathbf{J}\,D\Phi = \mathbf{J}$,
and $\det D(\Psi\circ\Phi) = \det D\Psi \cdot \det D\Phi = 1$. Thus, a composition of diffeomorphisms
is a diffeomorphism.

With these two properties we can affirm that $\Psi_h = A_{h/2}\circ B_h \circ A_{h/2}$ is thus $C^1$ diffeomorphism of $\R^{2d}$ and exactly symplectic. Likewise, every iterate $\Psi_h^K := \Psi_h\circ\cdots\circ\Psi_h$ ($K$ times) is a composition of $C^1$ diffeomorphisms of $\R^{2d}$, exactly symplectic. Thus, $\Psi_h^K$ also is, with
$\det D(\Psi_h^K) \equiv 1$, for every $h>0$, $K  \in \mathbb{N}$, $\bu\in\mathcal{U}$, $\phi$. With this, the proof for the three conditions are straightforward.

\textbf{Proof of (i).}
$\prod_{i=1}^{2d} \sigma_i(M) = |\det M|$ for any square matrix $M$. By taking $M = D(\Psi_h^k)(\bz)$ and
using $\det D(\Psi_h^k)\equiv 1$, it follows that $\prod_{i=1}^{2d}\sigma_i = 1$, which means that
$\sigma_{\max}\ge 1 \ge \sigma_{\min}$ at every
$\bz \in \R^{2d}$.

\textbf{Proof of (ii).}
Suppose $\bz^\ast$ is an asymptotically stable fixed point of $\Psi_h$, attracting
an open neighbourhood $U$. Choose a closed ball $C=\overline{B_r(\bz^\ast)}\subset U$
with $r>0$. Lyapunov stability and attraction imply uniform convergence on $C$:
for any $\varepsilon>0$, stability gives an open neighbourhood $W$ of $\bz^\ast$ whose
forward iterates stay in $B_\varepsilon(\bz^\ast)$. Each point of $C$ enters $W$
after finitely many steps; by continuity, a neighbourhood of that point enters $W$
at the same step. Compactness gives a finite cover of $C$ by such neighbourhoods,
so taking the largest of their entry times gives $K_\varepsilon$ such that
\[
\Psi_h^K(C)\subseteq B_\varepsilon(\bz^\ast)
\qquad\text{for all }K\ge K_\varepsilon.
\]
Volume preservation then gives
\[
0<\mathrm{vol}(C)=\mathrm{vol}(\Psi_h^K(C))
\le\mathrm{vol}(B_\varepsilon(\bz^\ast))=c_{2d}\varepsilon^{2d},
\]
where $c_{2d}$ is the volume of the unit ball in $\R^{2d}$.
Letting $\varepsilon\to0$ yields a contradiction. Hence no such fixed point exists.

\textbf{Proof of (iii).}
As shown, $\Psi_h^k$ is a diffeomorphism of $\R^{2d}$, which means bijective, and hence injective. As a consequence,
$\bz_1 \neq \bz_2 \implies \Psi_h^k(\bz_1) \neq \Psi_h^k(\bz_2)$ for every $k\in\N$.
\end{proof}

\subsection{Scope and limitations of the guarantee}
\label{app:remarks}

\begin{remark}[Shape, not size]
\label{rem:volume-limits}
A cloud that cannot be squeezed can still be sheared and stretched arbitrarily, so
symplecticity places no bound on the growth of rollout error and does not make the rollout
accurate. What it removes is the degenerate failure in its three forms: the rollout that
shrinks in every direction at once, the one that settles onto an attractor and stops moving,
and the one that merges states which started apart.

An unconstrained predictor $F_\theta$ is under no such constraint, and the one-step objective
pushes it the other way. Under any observation or latent noise, the minimiser of
$\E\lVert F_\theta(\bz_t) - \bz_{t+1}\rVert^2$ is the conditional mean
$\E[\bz_{t+1} \mid \bz_t]$, which averages away whatever the anchor leaves undetermined: it
contracts volume, and it sends latents that differ only in the averaged-out directions to a
common image. Volume loss and merging are therefore the trained optimum for such a map and iterating it drives the rollout onto exactly the attracting set
the proposition forbids for $\Psi_h$. The proposition says the Hamiltonian rollout cannot
enter that regime at any step size $h$, any parameter $\phi$ or any horizon.
\end{remark}

\begin{remark}[Regularization is not made redundant]
\label{rem:sigreg}
Proposition~\ref{prop:volume} states that the flow carries phase-space volume
exactly and its does not state nothing about the shape of the distribution that volume belongs to. Due to this, regularization techniques are not made redundant. In this work, we leverage the SIGReg \citep{balestriero2025lejepa} as regularizer, as it is the one used in LeWM and has shown state-of-the-art performance \citep{maes2026leworldmodel}.
For the isotropic Gaussian $\Normal(\bzero, \bI)$ installed by SIGReg at the encoder to be
invariant under the flow, the radius $\lVert(\bq,\bp)\rVert^2$ would itself have to be a
conserved quantity of the generator, that is $\{\lVert(\bq,\bp)\rVert^2, H_\phi\} = 0$. This is
a genuine restriction: it is a functional constraint on $H_\phi$, not a consequence of
symplecticity, and a general learned Hamiltonian, which is action-conditioned, and parameterised by
free-form networks as we do here, has no reason to satisfy it. 

The two mechanisms are therefore complementary rather than substitutes. SIGReg shapes the
encoder distribution, during training and in expectation over batches, which is where a
collapse of the representation would originate. Symplecticity carries the volume of
whatever distribution it is handed, at every inference step and for every input, which is
where an iterated map would otherwise contract or explode.
\end{remark}

\begin{remark}[Geometric prior, no physical modeling]
\label{rem:hypothesis-class}
We are not modeling the physics of the observed system. $H_\phi$ is never fit to a known
Hamiltonian and is not claimed to recover one: the true equations of motion enter only as
evaluation targets for frozen read-out probes, never as a training signal. This is the point
of departure from Hamiltonian neural networks \citep{greydanus2019hamiltonian} and SRNN
\citep{chen2019symplectic}, where the learned $H_\phi$ is a surrogate for the physical Hamiltonian
and recovering it is the goal.

The prior models the latent representation space. Writing $\mathcal{F}$ for the
maps a JEPA predictor may realize, we restrict it to the symplectomorphisms reachable as one
leapfrog step of a learned separable generator $H_\phi(\bq_t, \bp_t, \bu_t)$; the content of
that restriction is geometric (Proposition~\ref{prop:volume}), and it governs the dynamics, or evolution, latent space. The canonical split is structural in
the same sense: the encoder emits $\mathbf{s}_t = (\bq_t, \bp_t)$ with $\bq_t, \bp_t \in \R^d$
and the flow operates on that split by construction, but $\bq_t$ is not assumed to encode
position, nor $\bp_t$ momentum, in any observation-space sense. 
\end{remark}

\subsection{Why Hamiltonian and not Lagrangian} \label{hamiltonian_no_lagrangian} 

The two formalisms describe the same
mechanics in different coordinates, so the choice is made on what each buys a predictor,
and two things separate them here. First, the guarantee we rely on is a statement about phase
space: symplecticity, and the volume preservation it implies
(Proposition~\ref{prop:volume}), are properties of a flow on $(\bq,\bp)$ and have no equally
cheap counterpart on the tangent bundle $(\bq,\dot\bq)$ where a Lagrangian lives. Second, the
Euler--Lagrange equations are second order in $\bq$ while Hamilton's equations are first order
on phase space and a separable $H_\phi$ makes the leapfrog step explicit, which is what keeps the
map exactly symplectic at any $h$ rather than approximately so.

\subsection{Design choices in the flow and the encoder}
\label{app:design}

\textbf{The shape of the Hamiltonian.} In principle, there are multiple ways to structure the Hamiltonian. The simplest Hamiltonian is a separable one, where the kinetic term $K_\phi$ depends only on the momentum-like coordinates and the potential term $V_\phi$ depends only on the position-like coordinates and the action. This is a common choice in classical mechanics, and it allows for efficient integration using the leapfrog method, as discussed before. In this work, we focus on separable Hamiltonians, but more general forms can be considered as well.

We consider an action-conditioned Hamiltonian parametrized as 
$H_{\phi}$, where the action $\mathbf{u}_t$ represents an external control input that influences the evolution of 
the latent state. This can be interpreted as inducing an action-dependent vector field, 
where the action modulates the dynamics through the Hamiltonian. We condition the potential on the action: 
\begin{equation}
\label{eq:H-action-potential}
H_\phi(\mathbf{q},\mathbf{p},\mathbf{u})
\;=\; K_{\phi_1}(\mathbf{p}) \;+\; V_{\phi_2}(\mathbf{q},\mathbf{u}),
\end{equation}
so that the generalized force is $\mathbf{F}(\mathbf{q},\mathbf{u}) =
-\partial V_{\phi_2}/\partial \mathbf{q}$ and the dynamics remain in strict
Hamiltonian form for every fixed action.

Let the action be piecewise constant,
taking value $\bu_t$ on step $t$, and write $\bz_t = (\bq_t,\bp_t)$. Decompose the change in
the Hamiltonian across a step into the part due to the action changing and the part due to
the flow,
\begin{equation}
\label{eq:energy-budget}
H_\phi(\bz_{t+1},\bu_{t+1}) - H_\phi(\bz_{t},\bu_{t})
\;=\; \underbrace{H_\phi(\bz_{t+1},\bu_{t+1}) - H_\phi(\bz_{t+1},\bu_{t})}_{W_t}
\;+\; \underbrace{H_\phi(\bz_{t+1},\bu_{t}) - H_\phi(\bz_{t},\bu_{t})}_{r_t},
\end{equation}

where $W_t$ is the work performed by the change of action, which is a physical transfer of energy that any actuated model must reproduce, and $r_t$ is the
integrator residual. Summing, $H_\phi(\bz_K,\bu_K) - H_\phi(\bz_0,\bu_0) = \sum_t W_t +
\sum_t r_t$, so the model's energy tracks the accumulated work up to $\sum_t r_t$.

Placing the prior on the latent map rather than on the observational world (see Remark~\ref{rem:hypothesis-class}) does not make it vacuous. 
If the observed dynamics contract phase-space volume, no dimensionally-matched re-coordinatization admits a symplectic latent map that reproduces them. 
While an over-parameterized symplectic space could theoretically embed volume-contracting data by expanding orthogonal auxiliary dimensions to satisfy Liouville's theorem, doing so forces the model to learn unphysical, exponentially diverging latent variables. 
Without such artificial dimensional expansion, dissipation is the primary property of the observed system that a purely symplectic representation cannot absorb.

Because both systems studied in this work preserve volume, we leave dissipative modeling to future work. 
This extension takes the same structured form as the rest of the model: introducing a learned matrix $D_{\phi_3} = L_{\phi_3}L_{\phi_3}^{\top} \succeq 0$ into the momentum equation, which is positive semi-definite by construction so that it can only dissipate, smoothly recovering the symplectic model at $L_{\phi_3} = \mathbf{0}$.

\textbf{The encode and the SG stencil.} Our proposal to model the dynamics in the latent representation space with a Hamiltonian flow is independent of the encoder architecture. However, in the self-supervised setting, the embeddings are being learned, together with its Hamiltonian evolution. For this reason, the encoder has to be sufficiently expressive to encode canonical coordinates. Encoding $(\bq, \bp)$ is not just separating the embeddings as $(\bz_1, \bz_2) = \bz$, in which we just split the embedding into two. 

We enforce $\bq$ and $\bp$ to be canonical by enforcing that the momentum-like coordinates are 
the time derivative of the position-like coordinates ($p \equiv \dot{q}$). This is a common assumption in classical mechanics, where the momentum is
defined as the derivative of the Lagrangian with respect to the velocity. We can enforce this by making the encoder produce the momentum-like coordinates as 
the differentiation of the position-like coordinates. Specifically, we can do this with numerical methods. In this work, we use a Savitzky--Golay filter \citep{gorry1990general} to first approximate the position-like coordinates with a polynomial   
and then compute the momentum-like coordinates as the derivative. 

Concretely, $\bp_t$ is a fixed linear functional of the encoder trajectory around
frame $t$. We fit a quadratic in time to the encoded positions in a window by least squares whose coefficients are fixed in closed form by the window offsets and polynomial order alone. The least-squares fit sees the encoder values only as its right-hand side, never in its design matrix. Then, its derivative is evaluated at $t$, which for a fixed window is a constant stencil. Three properties of this choice matter. The estimate
is at frame $t$ rather than half a step behind it, unlike a backward difference, so $\bq_t$
and $\bp_t$ refer to the same instant and the pair is canonical in the only sense we can
enforce. The stencil is linear in the encoder output with constant coefficients (i.e., no training involved), so it adds
no parameters and cannot itself collapse: whatever the momentum block carries, it carries
because the encoder trajectory moves. And a least-squares fit over several frames attenuates
the high-frequency amplification that differentiation applies to a noisy signal, which a
two-point difference does not, at the price of a truncation bias. 

At the rollout anchor, in training and at inference alike, we use a four-frame causal
window, since a centred window would read the frames the model is about to predict; the
flow then evolves with no further stencil. The momentum targets used in
training are computed with a five-frame centred window, $t-2,\dots,t+2$, on the encoded $\bq$'s.
The causality constraint does not apply to the targets: they are read off
the encoder trajectory only to be compared with the rollout, never fed into it, so frames on
both sides of $t$ are available and using them leaks nothing. Given that choice, the centred window is
the better estimator on both counts of the bias--variance trade-off above. Its weights,
$(-2,-1,0,1,2)/10$, are antisymmetric in time, which cancels every even-order error term, so
the truncation bias starts at $\tfrac{17}{30}\,\bq^{(3)}\,\Delta t^{2}$ (with $\bq^{(3)}$ being the third time derivative) with no
$\Delta t^{3}$ term, against $\tfrac{47}{60}\,\bq^{(3)}\,\Delta t^{2}$ plus a
$\Delta t^{3}$ term for the four-frame causal window. It also amplifies independent
per-frame encoder noise $24.5\times$ less (a variance gain of $\sum_m w_m^2 = 0.10$ against
$2.45$). Since the targets enter the loss at every rollout step, a noisier target would inject
that variance into every gradient. The antisymmetry has one further consequence: reversing
time maps $\bp_t \mapsto -\bp_t$ exactly, the transformation law of a momentum under time
reversal, which a one-sided window does not satisfy.

\textbf{Why there is no mass.} In classical mechanics the momentum is $\bp = \mathbf{M}\dot{\bq}$
and the kinetic energy is $\tfrac12\bp^\top\mathbf{M}^{-1}\bp$, so a physical model must know the
mass matrix $\mathbf{M}$. \algo{} learns no mass, and the reason is the stencil: defining
$\bp \equiv \dot{\bq}$ already fixes $\mathbf{M}=\mathbf{I}$ in the learned coordinates. To see
it, combine the definition with Hamilton's first equation,
$\dot{\bq} = \nabla_\bp H_\phi$. On the data, a flow that reproduces the encoder trajectory
must satisfy
\begin{equation}
    \nabla_\bp H_\phi(\bq,\bp,\bu) \;=\; \mathbf{S}^{-1}\nabla K_\phi\!\left(\mathbf{S}^{-1}\bp\right) \;=\; \bp ,
\end{equation}
and integrating in $\bp$ gives
\begin{equation}
    K_\phi\!\left(\mathbf{S}^{-1}\bp\right) \;=\; \tfrac{1}{2}\|\bp\|^2 + C_\phi ,
\end{equation}
the kinetic energy of unit mass, whatever $\mathbf{S}$ is. A learned $\mathbf{M}$
would therefore be redundant rather than missing. For a system of constant mass the encoder can absorb $\mathbf{M}$ through
the linear re-coordinatization $\bq \mapsto \mathbf{M}^{1/2}\bq$, under which
$\tfrac12\dot{\bq}^\top\mathbf{M}\dot{\bq}$ becomes $\tfrac12\|\dot{\bq}\|^2$; this is a
case of the latent-map freedom of Remark~\ref{rem:hypothesis-class}. This is also why the scale $\mathbf{S}$ below
stays close to the identity. Two caveats. First, the argument constrains $K_\phi$ only on the
support of the data and only to the accuracy of the stencil and of the leapfrog position step
(which uses $\nabla K_\phi$ at the half-step momentum), so we keep $K_\phi$ as a 3-layer MLP,
following Hamiltonian neural networks \citep{greydanus2019hamiltonian}. The quadratic is
what that MLP must approximate at convergence, not a constraint we impose. Second, the
re-coordinatization exists only when $\mathbf{M}$ is constant. When the mass matrix depends on
configuration, $\mathbf{M}(\bq)$ as in a double pendulum, no global linear change of
coordinates removes it, $\bp=\dot{\bq}$ is a velocity rather than a canonical momentum, and a
separable $K_\phi(\bp)$ is mis-specified. Lifting this requires learning $\bp$ jointly
instead of reading it off with a fixed stencil.

\textbf{Adaptation to the self-supervised setting.} In the supervised setting,
$(\bq, \bp)$ are the canonical coordinates of the physical system. Here, in the self-supervised setting, these coordinate-like embeddings are learned jointly with their evolution, so nothing
fixes the relative scale of $\bq$ and $\bp$. Rescaling the momentum,
$\bp \mapsto \mathbf{S}\bp$, together with $K_\phi(\cdot) \mapsto K_\phi(\mathbf{S}^{-1}\,\cdot)$,
leaves the trajectory of $\bq$ unchanged, so this scale is a gauge freedom that the
objective cannot determine. This scaling is learned: 
$\mathbf{S} = \operatorname{diag}(e^{\boldsymbol{\sigma}})$ with $\boldsymbol{\sigma}$ learned
(Eq.~\ref{eq:hamiltonian}) and the
leapfrog of $H_\phi$ in Eq.~\ref{eq:leapfrog}  remains exactly symplectic; $\mathbf{S}$ enters only the position
step, through $\nabla_\bp H_\phi = \mathbf{S}^{-1}\nabla K_\phi(\mathbf{S}^{-1}\bp)$.

$\boldsymbol{\sigma}$ is initialised at $\mathbf{0}$, so $\mathbf{S} = \mathbf{I}$, and is
learned jointly with all other parameters (encoder, projector, $K_\phi$ and $V_\phi$) by
back-propagating the multi-step rollout loss through the leapfrog updates of
Eq.~\eqref{eq:leapfrog}. Since $\mathbf{S}$ enters only through $K_\phi$, its gradient
arrives through the position steps, via $\nabla_\bp H_\phi = \mathbf{S}^{-1}\nabla
K_\phi(\mathbf{S}^{-1}\bp)$, and hence through every later prediction of the rollout.
In practice $\mathbf{S}$ remains close to the identity (diagonal entries between 0.95 and
1.09 across our trained models), since a per-dimension rescaling of $\bp$ can equally be
absorbed by the input weights of the MLP $K_\phi$.
Because $\mathbf{S}$ enters through $H_\phi$ rather than being inserted into the standard leapfrog of $H_\phi$ (Eq.~\ref{eq:leapfrog} ), $\mathbf{S}$ appears only
in the position step because $\nabla_\bp H_\phi = \mathbf{S}^{-1}\nabla K_\phi(\mathbf{S}^{-1}\bp)$
while $\nabla_\bq H_\phi = \nabla_\bq V_\phi$. This makes the map to remain exactly symplectic.

\textbf{Regularization.} As discussed in the main text, in principle, the regularization can be applied in different ways over the embeddings produced by \algo: $(\bq, \bp)$. Which is best is an empirical question our experimental results discuss. We apply step-wise $\mathrm{SIGReg}$ implemented by LeWM \cite{maes2026leworldmodel}; they differ only
in which embedding block contains. We show each of them below and they are the $\mathcal{S}$ in the training loss (Eq. \ref{eq:total}). 

We index the frames of a training clip by $0,\dots,T-1$, as in Eq.~\eqref{eq:total}. The anchor sits at
frame $A$, so its causal window is frames $0,\dots,A$ ($A=3$, four frames). The rollout then
predicts frames $A+1,\dots,A+K$ ($K=8$). Each target $\bp_{A+j}$ is built from the centred
window of half-width $h=2$, so the last target reads frames up to $A+K+h$, and a clip has
$T = A+K+h+1 = 14$ encoded frames. SIGReg is applied at every frame where the centred momentum
is defined,
\[
\mathcal{T} = \{h,\dots,T-1-h\} = \{h,\dots,A+K\}, \qquad |\mathcal{T}| = 10,
\]
which covers the anchor and all $K$ targets, plus the frames $h,\dots,A-1$ inside the anchor's
causal window. For the temporal encoders of window $W>1$ these indices are in latent time. The
encoder consumes $W-1$ lead-in pixel frames, so a clip holds $T+W-1$ pixel frames. One way is to apply SIGReg over the joint embedding. Given each $\bq$ and $\bp$ of size $d$ and a batch of $B$ embeddings, $\mathbf{Z}_t$ stacks the concatenated phase-space
embedding, step-wise over the full interior:
\[
\mathbf{Z}_t = \bigl[(q_t^{(i)}, p_t^{(i)})\bigr]_{i=1}^{B} \in \mathbb{R}^{B \times 2d},
\qquad
\mathcal{S} = \frac{1}{|\mathcal{T}|}\sum_{t \in \mathcal{T}} \mathrm{SIGReg}(\mathbf{Z}_t).
\]

Another way is to split the application. Same $\mathcal{T}$, but $\bq$ and $\bp$ are never concatenated and step-wise Gaussianity is tested on each block separately and summed:
\[
\mathbf{Q}_t = \bigl[q_t^{(i)}\bigr]_{i=1}^{B},\ \ \mathbf{P}_t = \bigl[p_t^{(i)}\bigr]_{i=1}^{B} \in \mathbb{R}^{B \times d},
\qquad
\mathcal{S} = \frac{1}{|\mathcal{T}|}\sum_{t \in \mathcal{T}} \Bigl[\mathrm{SIGReg}(\mathbf{Q}_t) + \mathrm{SIGReg}(\mathbf{P}_t)\Bigr].
\]

We apply SIGReg across all the time steps in the batch because, as discussed in Section \ref{sec:ham_jepa} in the main text and in Remark \ref{rem:sigreg}, the preservation of volume, which we obtain through the symplectic structure, is not redundant to what SIGReg enforces. 

%% file: appendix_experiments.tex
\section{Description of the experimental set up} \label{appendix:experimenta_setup}

\subsection{Methodology} \label{appendix:methodology}

\textbf{What is held fixed across the models.} \algo~is implemented on top of the \texttt{stable-worldmodel} framework
\citep{maes_lld2026swm} and its \texttt{stable-pretraining} training harness
\citep{balestriero2025stable}, the same stack LeWorldModel
\citep{maes2026leworldmodel} is built on. Encoder, projector, action encoder,
optimiser and data pipeline are therefore shared with the baseline, and the
only component that differs between the two models is the predictor and SG stencil that \algo{} implements. Both models
of every comparison see identical data, identical clip construction and an identical
multi-step objective, the same number of epochs and the same optimizer and learning-rate schedule, so a difference in rollout behavior is not attributable to the
training scheme. 

\textbf{The data split and checkpoint selection.} We split the datasets into training, validation and test. Checkpoints are selected on the
validation split. Both models use the same total latent
dimension (64), split by the \algo{} into $\bq$ and $\bp$ of half that width each (32) and used
whole by the baseline, which has no momentum-like block. We assess performance as the predictor is unrolled $K=8$ self-fed steps from an anchor. For \algo~we construct the anchor from the causal Savitzky--Golay stencil (Appendix~\ref{app:design}), so it sits at latent index $A = 3$, with full backpropagation through time, the momentum term in the loss (Eq.~\ref{eq:total}) carries $\lambda_p = 1.0$ and the distributional term
$\lambda_S = 0.25$ is common among the models, and $\lVert\cdot\rVert^2_2$ denotes the per-dimension mean rather than
the sum, which differs from the sum by a constant $1/d$ absorbed into those weights. LeWM's predictor is a causal, action-conditioned transformer over
the three most recent latents while in \algo~it is the Hamiltonian flow that evolves one state at the time. Regarding the SIGReg regularizer, while it is applied on $\bz$ for the baseline, in \algo~we tried out different places. If not said otherwise, SIGReg is applied on $\bq$ only. Refer to the discussion on this for the actuated pendulum in Appendix \ref{appendix:results_act_pendulum}.

\subsection{Implementation of \algo} \label{appendix:architecture} 

\textbf{The Hamiltonian flow.} It a separable Hamiltonian $H_\phi(\bq,\bp,\bu) = K_{\phi_1}(\bp) + V_{\phi_2}(\bq,\bu)$ with two independent three-layer
$\tanh$ MLPs of width 1024, integrated by one exactly symplectic kick--drift--kick leapfrog
step per frame. For \textsc{LeWM} the projector output is a single $z_t \in \R^{64}$ and the predictor is the baseline's causal autoregressive transformer
with a sliding context of three latents. For \algo, the Hamiltonian autograd path is pinned
to fp32.

\textbf{The encoder.} We implemented both a tiny ViT \citep{wu2022tinyvit} and a temporal ConvNet \citep{tran2018closer}. For this temporal encoder, when there is no time window (W=1) it means that the encoder takes one pixel frame $\bx_t$ and produces its respective embedding $\bq_t$. Then, with these embeddings $\{\bq_{t-3}, \bq_{t-2}, \bq_{t-1}, \bq_{t}$ corresponding to multiple time steps, the $\bp_t$ is estimated. We also consider the temporal ConvNet with a temporal window in which pixel frames corresponding to multiple time steps are encoded to produce the embedding corresponding to a single time step. The general form of this architecture embedded into the \algo~framework is shown in Fig. \ref{diagram:convet}. The encoder is a \emph{(2+1)D} network, meaning the spatiotemporal convolution is factored into two spatial dimensions and one temporal one instead of a joint 3D kernel: a per-frame convolutional stem, one
temporal mix, then the unchanged spatial trunk.

 \input{diagram_convnet}

    
This encoder implementation allows to train across recording rates.
The step size is an argument to the Hamiltonian flow rather than a property of its
weights, so one learned $H_\phi$ can in principle be trained on episodes recorded at
different rates. We describe the mechanism here for completeness, but no experiment in
this paper uses it: every model is trained and evaluated at a single rate, and the
conservative dataset is restricted to its $\Delta t = 0.1\,$s slice for exactly this reason
(Table~\ref{tab:datasets}). We leave the mixed-rate comparison to future work.

A dataset that mixes recording rates stores each
episode's frame spacing in a \texttt{dt\_frame} column; write $\Delta t_{\text{ep}}$
for an episode's value and $\Delta t_{\text{ref}}$ for a single rate fixed once at
config time (\texttt{model.dt\_ref}). The leapfrog step carries a $\Delta t^2$
factor, so a $V_\phi(\bq)$ fitted at one rate is wrong by $(\Delta t/\Delta t')^2$ at
another: training directly on mixed-rate episodes would hand $K_\phi,V_\phi$ a
target that shifts with which episode a clip happens to come from. Dividing by
$\Delta t_{\text{ref}}$ removes this factor, so one learned $H_\phi$ stays valid
across every rate. We therefore integrate not in $\Delta t_{\text{ep}}$ but in the
dimensionless ratio $h = \Delta t_{\text{ep}} \,/\, \Delta t_{\text{ref}}$

Within an episode there is a single frame spacing and advancing one frame is one step. As \algo~parametrizes the time step inside the Hamiltonian flow, allowing to input, in principle, any arbitrary time step, episodes recorded at different rates could be trained on together. To do so, a recording rate is set as the reference, $\Delta t_{\text{ref}}$, and let the
integrator step be that episode's spacing measured in units of it. An episode recorded at the reference rate is integrated with $h = 1$, one recorded at twice
the spacing with $h = 2$, and so on. The learned $H_\phi$ is then the same object for every
episode, expressed in reference time, and all the rate information the model needs travels
with the clip as a single scalar. This is available to a Hamiltonian predictor precisely
because its step size is an argument; an unconstrained predictor has no slot to put $h$ in, so the asymmetry is structural. We do not report a mixed-rate head-to-head,
because we did not run one. Note that the
momentum must ride the same clock as the position update, so it is likewise divided
by $h$: $p_t = \mathrm{SG}(q_{t-2..t+2})/h$, so that p and the h-scaled position update share one clock.

\textbf{The SG stencil to compute the momentum-like embedding.}
The momentum-like embedding is never learned. It is always read off the encoder
trajectory by one of two fixed Savitzky--Golay stencils, least-squares quadratic fits
differentiated at $t$ (Appendix~\ref{app:design}):
\begin{itemize}
  \item the \emph{causal} stencil, over $\bq_{t-3}, \ldots, \bq_t$, gives the momentum that
  \emph{initialises} a rollout (the anchor);
  \item the \emph{centred} stencil, over $\bq_{t-2}, \ldots, \bq_{t+2}$, gives the momentum
  \emph{targets} of the loss (Eq.~\ref{eq:total}), the $\bp$ seen by SIGReg, and the reference
  $\bp$ in evaluation.
\end{itemize}
The anchor needs the causal stencil because the frames after $t$ are the ones the model is
about to predict, and reading them would leak the targets. The anchor therefore sits at latent
index $A = 3$, the first index with a full causal window.

\textbf{There is no train--test shift in the momentum estimate.} The state the flow is fed is
built by the same causal stencil everywhere: in training, in the rollout evaluation, and in
planning, both at the start state and at the goal (Appendix~\ref{app:planning}). The centred
stencil is used only to build targets, never an input. After the anchor, $\bp$ is a state
variable of the flow, and no stencil is applied to predicted latents.

The two stencils estimate the same derivative to the same order but differ in time parity.
The centred weights are antisymmetric in the offset, so reversing the frame order sends $\bp_t$
to $-\bp_t$ exactly. The causal stencil has no such symmetry: under reversal it becomes a
forward stencil rather than minus itself. The reversal diagnostic starts from an anchor whose
$\bp$ comes from the causal stencil, which is one reason it is reported as approximate rather
than exact, even for a model whose $K_{\phi}$ were perfectly even.

\subsection{Datasets and their architectures} \label{appendix:datasets}

\subsubsection{Conservative pendulum.} The dynamics of a conservative pendulum of length $L$ and angle $\theta$ is described by
\begin{equation}
    \ddot\theta = -\frac{g}{L}\sin\theta,
    \label{eq:pendulum}
\end{equation}
with $g = 9.81\,\mathrm{m\,s^{-2}}$.

Our environment integrates this differential equation with a fourth-order Runge--Kutta scheme. We consider three different rod lengths:  $L \in \{0.7, 1.0, 1.4\}\,\mathrm{m}$ rendered to scale, as shown in Fig.\ref{fig:rod-lengths}, 300
episodes of 80 frames with a time step of $\Delta t = 0.1\,\mathrm{s}$ which are split into $210/45/45$ train/val/test at the episode level. Initial conditions are drawn independently per episode: $\theta_0 \sim \mathcal{U}(-\pi,\,\pi)$ and the initial velocity is drawn in dimensionless units so that the phase-space coverage is matched across rod lengths such that $\dot\theta_0 = \tilde\omega\sqrt{g/L}$ with $\tilde\omega \sim \mathcal{U}(-a,a)$ and $a = 2\,\mathrm{rad\,s^{-1}}/\sqrt{g/L_{0}} \approx 0.639$ at the reference length $L_{0} = 1\,\mathrm{m}$, so that the bound reads $|\dot\theta_{0}| \leq 2\,\mathrm{rad\,s^{-1}}$ at $L_{0}$ and scales as $\sqrt{g/L}$ across rods.

The range $|\dot\theta_0| \leq 2\,\mathrm{rad\,s^{-1}}$ is well below the separatrix
velocity $\dot\theta_{\mathrm{sep}} = 2\sqrt{g/L} \approx 6.26\,\mathrm{rad\,s^{-1}}$,
so trajectories starting near the bottom ($\theta_0 \approx 0$) always oscillate.
However, because $\theta_0$ is drawn from the full $[-\pi, \pi]$, trajectories
starting near the unstable equilibrium ($\theta_0 \approx \pm\pi$) carry large
potential energy and rotate with any nonzero $\dot\theta_0$.
The dataset therefore spans both the oscillating and rotating regimes.

Each episode corresponds to a given rod's length. The
natural frequency of the pendulum is $\omega_0 = \sqrt{g/L}$. Thus, it spans a $1.41\times$ range and a single
learned $V_\phi$ must cover all three, which makes the rod length a system-identification
target as well as a control.

\begin{figure}
    \centering
    \includegraphics[width=0.5\linewidth]{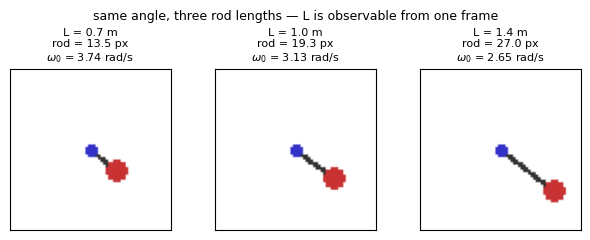}
    \caption{Frames from the three rod lengths of 
    ($L \in \{0.7,1.0,1.4\}\,\mathrm{m}$), rendered to scale at $64\times64$. The rod
    length is therefore readable from a single frame, which is what makes it a positive
    control for the probes rather than a prediction target.}
    \label{fig:rod-lengths}
\end{figure}

\subsubsection{Action as a force: actuated pendulum. }

Here we consider an actuated pendulum whose rod's length is $L$ and mass $m$, whose dynamics are as the conservative pendulum in Eq.~\ref{eq:pendulum} but with an additional torque term, giving

\begin{equation}
    \ddot\theta = -\frac{g}{L}\sin\theta + \frac{u}{mL^2},
    \label{eq:pendulum_actuated}
\end{equation}

where $u$ is the applied torque. 

Episodes are $64\times64$ RGB, the
manifest splits $630/135/135$ episodes at the episode level, and every rollout reported
here is driven by the recorded torque. Energy is not conserved and the potential energy $V_{\phi_2}(q,u)$ carries a time-varying input, so $H_\phi$ is explicitly
time-dependent and drift along the rollout measures how much the torque moves the learned
energy. Rod length again varies across episodes ($0.7, 1.0, 1.4$ m), as well as the rod mass ($0.5, 1.0, 2.0$ kg). In this dynamics, in contrast to the conservative pendulum, the mass impacts the torque term. Thus, a model whose latent does not carry signal of the mass, cannot know what a given torque will do. 

\textbf{The torque.} The torque is held constant over $h=3$ frames ($0.3$\,s), a zero-order hold at
$3.3$\,Hz: this is $13$--$18\,\%$ of the small-oscillation period
$T=2\pi\sqrt{L/g}$ ($1.68$--$2.37$\,s over our lengths), so the drive is
band-limited to roughly $3\times$ the natural frequency and injects or removes
energy coherently over a fraction of a swing. A per-frame torque
would instead be white noise on the dynamical timescale as averaging out over a
period would leave trajectories close to the unforced pendulum. This choice also makes the torque observable 
from pixels. The physical time step is $\text{dt} = 0.1 s$, so the torque is held constant through a block of 0.3 s against periods of 1.68 s (L=0.7), 2.01 s (L=1.0), 2.37 s (L=1.4), which correspond to 
18\%, 15\%, 13\% of a period, with 3.4-4.8 swings per episode, which last 8s. 

\subsubsection{The data split protocol}

Both environments are split into training, validation and test with the same protocol,
summarised in Table~\ref{tab:datasets}. The split is $70/15/15$ and is made \emph{at the
episode level}: a whole episode belongs to exactly one split, so no frame is shared between
splits, and we verified that the episode indices of the three splits do not overlap in either
dataset. The split is also stratified by design cell (rod length for the conservative pendulum;
rod length and mass for the actuated one), so every cell contributes the same number of episodes
to each split. Training clips are cut with stride $1$ from the first $80$ frames of each
training episode, and the split is drawn with seed $42$.

For the conservative pendulum we use $300$ of the $1200$ generated episodes. The generator
sweeps four frame spacings, $300$ episodes each, and we keep only the $\Delta t = 0.1$\,s
slice, because every model in this paper is trained and evaluated at a single step size
(see the paragraph on training across recording rates in Appendix~\ref{appendix:architecture}).

\begin{table}[t]
\centering
\caption{The two environments and their episode-level splits. Both use the same protocol,
described in Appendix~\ref{appendix:datasets}.}
\label{tab:datasets}
\begin{tabular}{l cc}
\toprule
 & conservative pendulum & actuated pendulum \\
\midrule
episodes used                & 300 & 900 \\
frames per episode           & 80 & 80 \\
$\Delta t$                   & $0.10\,$s & $0.10\,$s \\
\midrule
\multicolumn{3}{l}{\emph{split, in episodes (train / val / test):}} \\
episodes                     & $\mathbf{210 / 45 / 45}$ & $\mathbf{630 / 135 / 135}$ \\
\midrule
design cells                 & 3 rod lengths & 9 ($3 L \times 3 m$) \\
episodes per cell            & 100 & 100 \\
train episodes per cell      & 70 & 70 \\
\bottomrule
\end{tabular}
\end{table}


\subsection{Evaluation}
In this subsection we describe the evaluation we perform to benchmark the models. 

\subsubsection{The probes}
\label{app:probes}

Targets are $\theta$ and $\omega$. For the first one, we probe $(\sin\theta,\cos\theta)$, which are recombined by \texttt{atan2} so the reported angular error is wrapped. As described in main text, we study the performance across multiple probes and we lead with the
KNN probe trained at the first predictor's output ($K=1$). This corresponds to equal-footing as both LeWM and \algo~capture relative differences among pixel frames. While \algo~does this structurally with the embedding $\bp$, which is the approximated derivative of  $\bq$ through the SG stencil, LeWM processes the history (three time steps) through the predictor.  

\subsubsection{Planning}
\label{app:planning}

Planning is evaluated on the actuated pendulum only, since the conservative pendulum has
no action to plan. Everything in this subsection therefore refers to the actuated pendulum,
and it completes the description in Section~\ref{sec:planning}. Goals are read off
held-out episodes, each model is scored in its own latent coordinates, and the action
sequence is optimised by CEM under MPC. Nothing in the loop observes the future.

\paragraph{Goal construction.}
Goals are drawn from test episodes, never from training data. For each of the
episodes we sample a goal frame $g$ and set the start frame to $s = g - K$, so
the target is the state the trajectory actually occupied $K$ steps after the
start. For the actuated pendulum this is a recorded $(\theta_g, \omega_g)$ that
is in general neither upright nor at rest. Building the goal this way
guarantees reachability, which the model-free pre-flight below confirms, so a
failure is attributable to the world model rather than to an impossible target.

\paragraph{Latent states, and the frames they are read from.}
Each model is given the state its own architecture defines, encoded from real
frames at two instants only. At the start, the $5$ recorded frames ending at $s$
are encoded; \algo~forms $\bq$ from the $W$ frames and $\bp = \mathrm{SG}(\bq)$ with the
same Savitzky--Golay stencil used during training, while the
baseline encodes only $\bz$ in the same way as $\bq$ (with $W$ frames) and it forms its context of $h=3$ latents and pass through the (unstructured) predictor. At the goal, \algo~encodes the
$(\bq_g, \bp_g)$ in the same way using the causal stencil with embeddings corresponding to four time steps while LeWM $\bz$ is coming from $W$ frames to produce the goal state.

\paragraph{Optimisation.}
The cost is evaluated in each model's own coordinates, so no
ground-truth state enters the objective. A state-space cost would measure the
probe rather than the world model. Actions are held over blocks of 
three frames, matching the hold of the dataset, and are constrained to  to
$\lVert \ba \rVert \leq u_{\max}$. CEM uses $300$ samples, $30$ iterations and
$30$ elites. Under MPC the first $R$ actions of $\ba^{*}$
context is re-encoded from the frames the controller has by then observed, and
the problem is re-solved. The horizon shrinks to the frames remaining before the
goal instant, so the target does not recede as the episode advances; a fixed
horizon would leave the goal perpetually $H$ steps away and
success rate meaningless.

\paragraph{No leakage.}
The rollout $\tilde\bz_1, \dots, \tilde\bz_H$ is produced by self-feeding the
model: one predictor call per step, with its output fed back as the next input.
After the anchor, $\bp$ evolves under the learned flow as a state variable of the
predictor; the Savitzky-Golay stencil is applied only to encoded real frames, at
the start context and at the goal, never to predicted latents. Pixels therefore
enter at two disjoint instants, the start context and the goal window, and no
frame between them is ever encoded. No ground-truth state enters the rollout, the
cost or the ranking of candidates. Under replanning, every re-encoded context is
built from frames the controller has actually observed, all at or before the
current step. The recorded actions of the episode are used only to fill the
torques already applied at the start context, never as a candidate or as an
initialisation of the search. Success is evaluated by the simulator on the
executed trajectory, outside the planner.

\paragraph{Scoring success.} Success is scored at the goal's own timestep $s+K$, on the angle and the velocity with
separate tolerances, $|\mathrm{wrap}(\theta - \theta_g)| < \tau_\theta$ and
$|\omega - \omega_g| < \tau_\omega$. Raw errors are written for
every episode, so success as a function of tolerance is recoverable without
re-running. Scoring at the goal's timestamp rather than at any step within the
budget is necessary rather than cosmetic: under an ``ever passes near the goal''
criterion, the best of $300$ random action sequences under exact dynamics
reaches every goal, and the measurement saturates for every model at every tolerance.

Both models use the same temporal ConvNet encoder, with W=1 for \algo~and $W \in \{1,3\}$ for LeWM, and the same latent dimension d=64, with \algo~using $q$ and $p$, each of dimension $32$, and LeWM using only $z$ of dimension $64$. 
SIGReg in \algo~is applied to $q$ only, while in LeWM it is applied to $z$ only. The weight of the SIGReg loss is $0.25$ in both cases, and the same number of training epochs (35) was used for both models. 
Both models were trained with the same optimizer and learning rate schedule. For planning, both utilized the same CEM planner. 
When training, the temporal ConvNet encoder was centered in the window, to guarantee the embeddings of \algo~are canonical. For planning, 
in which we deploy online, we have to adapt. At wall-clock frame $t$, having observed frames up to $t$, the encoder can produce latents 
for instants up to $t - \text{lag}$ so at planning time the context and goal latents are shifted by the encoder lag, and the rollout runs that many extra steps to reach the goal instant. 

We integrate $\theta$ without wrapping, so a policy that completes $n$ revolutions ends near
$\theta_0+2\pi n$. Because the goal specifies a pose, and poses a full turn apart are
indistinguishable, we score the angle on the circle:
\begin{equation}
  \Delta\theta = \operatorname{wrap}(\theta_T-\theta_g),
  \qquad
  \operatorname{wrap}(a) = \bigl((a+\pi)\bmod 2\pi\bigr)-\pi,
  \label{eq:wrap}
\end{equation}
evaluated at the goal's own timestamp $T$. Wrapping the \emph{difference} is essential:
wrapping each angle first and subtracting introduces a spurious $2\pi$ jump at the branch cut,
reporting a near-perfect match as a maximal error. An episode succeeds when
$|\Delta\theta|<0.1$\,rad and $|\omega_T-\omega_g|<0.5$\,rad\,s$^{-1}$. As $\Delta\theta$ is
circular, we summarise it by the mean resultant length
$\bar{R}=|N^{-1}\sum_i e^{\mathrm{i}\Delta\theta_i}|$ ($0$ = uniform, $1$ = concentrated)
rather than by a linear mean and standard deviation.

\paragraph{Settings.} Goals are placed $K = 30$ frames after the start, and the first
solve plans over the full horizon $H = 30$. The replanning interval is swept over
$R \in \{30, 15, 9, 6, 3\}$: $R = H$ is a single open-loop solve, and the smaller values are
MPC at increasing feedback rates from the same first-solve horizon. Actions are bounded by
$u_{\max} = 4.0$ and held for blocks of $3$ frames, as in the dataset. CEM uses $300$
samples, $30$ iterations and $30$ elites. Each configuration is run on $N = 50$ test
episodes, for $3$ model seeds $\times$ $3$ planner seeds, with tolerances
$(\tau_\theta, \tau_\omega) = (0.1\,\mathrm{rad},\ 0.5\,\mathrm{rad\,s^{-1}})$.

\section{Supplementary experimental results \label{appendix:supplementary_results}}
In this section we showcase supplementary experimental results. First, we comment on some general properties of the proposed \algo{}. To exemplify we use results corresponding to one of the two environments but the claim is in general. Then, in the following subsections we share results for each of the two environments studied. 

\subsection{\algo{} replaces manual design choices with structural priors and lower compute} 
\label{appendix:design-choices}

Given a sequence of pixel frames, our goal is to train a world model that captures the system's physics, specifically its higher-order dynamic states from position to its time derivatives, the velocity and the momentum it determines. In JEPA architectures, capturing these spatiotemporal differences efficiently is a central design consideration. The default implementation of LeWM \citep{maes2026leworldmodel} employs a single-frame encoder (W=1) and delegates the task of learning representations for higher-order moments entirely to a predictor that receives a three-step frame history. A direct limitation of encoding frames individually is the inability to represent goals containing velocity clauses at the single-latent level. 

We assess these different design choices empirically. Fig.~\ref{fig:canonical_actuated_pendulum} shows the $R^2$ of the angle $\theta$ and angular velocity $\omega$ decoded from anchor latents ($K=0$) using a $k$-NN probe. As expected, LeWM with a single-frame encoder (W=1) fails to capture velocity, yielding an $R^2$ below zero. However, simply expanding the encoder to operate over a time window (W=3) does not inherently improve the representation of velocity and can actually deteriorate it if executed without architectural care.

Specifically, a temporal ConvNet with $W{=}3$ employs a symmetric kernel by construction when weights are tied (e.g., $[a, b, a]$ where outer taps share parameters). Consequently, the tied encoder is invariant to reversing the temporal order of the three frames in its window. Reversing time leaves the central angle unchanged while flipping $\omega \to -\omega$; the states $(\theta, +\omega)$ and $(\theta, -\omega)$ are therefore mapped to the exact same latent representation, rendering the sign of the velocity unrecoverable by any probe (see Appendix~\ref{appendix:results_act_pendulum} for a detailed breakdown). Only even functions of velocity survive: absolute velocity $|\omega|$ remains highly decodable ($R^2 = 0.92$), while signed velocity $\omega$ sits at the predict-the-mean floor. 

While the parity of the kernel can be manually broken—either by untying parameters or feeding explicit frame increments $(x_t, x_t - x_{t-1})$—nothing in the standard JEPA objective forces the model to retain directional information downstream. The encoder is trained merely to make future embeddings predictable, and a representation blind to the direction of motion predicts its own sign-blind future embeddings perfectly well. The architectural choice is therefore made implicitly, with its failure modes appearing only when evaluating single-latent state readouts.

Our proposed \algo{} provides a structural solution rather than an empirical workaround. Instead of requiring the temporal encoder to learn velocity implicitly, \algo~partitions the representation into canonical pairs $(\mathbf{q}, \mathbf{p})$ and extracts momentum via $\mathbf{p} = \mathrm{SG}(\mathbf{q})$, an explicitly odd stencil across latents whose parity is fixed by construction. There is no kernel to tune; the Hamiltonian latent structure dictates where $\omega$ resides, and encoder parity follows naturally. 

As shown in Fig.~\ref{fig:canonical_actuated_pendulum}, the signal for predicting angular velocity $\omega$ originates almost entirely from the momentum block $\mathbf{p}$. Conversely, the angle $\theta$ is predominantly captured by the position block $\mathbf{q}$, with minor signal leakage into $\mathbf{p}$ arising because $\mathbf{p}$ is constructed as a temporal finite difference of $\mathbf{q}$.

\begin{figure}[t]
    \centering
    \includegraphics[width=\linewidth]{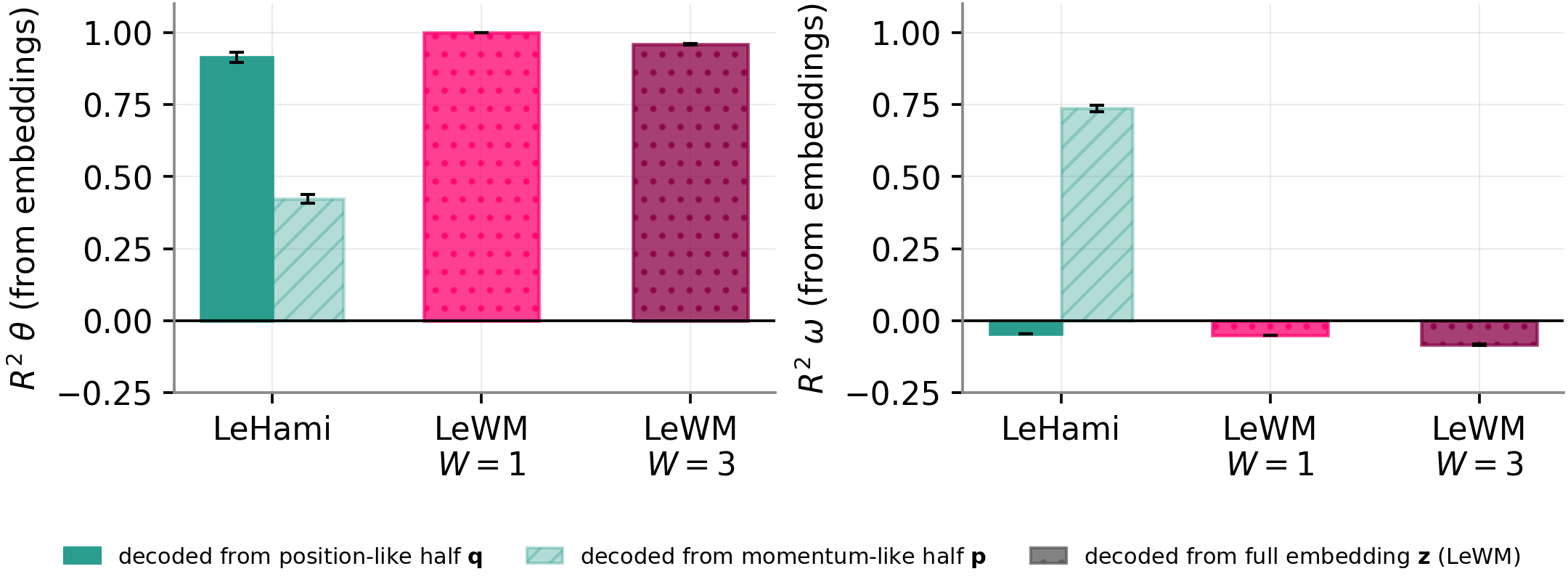}
    \caption{\textbf{The \algo~embeddings on the actuated pendulum are canonical.} $R^2$ of the pendulum angle $\theta$ \textbf{(left)} and angular velocity $\omega$ \textbf{(right)} decoded from the position half $q$ (solid) and momentum half $p$ (hatched) of the frozen encoder output. Probes are 25-NN under five episode-grouped folds fitted on embedded anchor frames ($K=0$) pooled across test episodes. $R^2$ of $\theta$ is evaluated over $(\sin\theta, \cos\theta)$. Bars indicate the mean over three training seeds; error bars denote standard deviation. LeWM has no partitioned momentum block and is reported as a single joint latent $z$.}
    \label{fig:canonical_actuated_pendulum}
\end{figure}

Furthermore, \algo~is cheaper to train. At every matched training budget on the conservative pendulum (Table~\ref{tab:exp096-isocompute}), \algo's error is $1.45$--$4.2\times$ lower on $\theta$ and $1.4$--$2.3\times$ lower on $\omega$. Its $0.75$~PFLOP checkpoint already has lower error on both quantities than LeWM's best within $6$~PFLOP, so \algo~matches the baseline with at most one-eighth of the training compute. Both models cost about the same per 100-epoch run ($6.0$--$6.5$~PFLOP, $0.7$--$1.1$~GPU-h), so the gap reflects how efficiently each converts compute into prediction accuracy, not a difference in budget. \algo~also has fewer parameters (Table~\ref{tab:multi-params}), so the advantage is not explained by capacity. These results are from one seed per model.

\begin{table}[t]
  \centering
  \caption{\textbf{MAE at a fixed training budget for the conservative pendulum.} Test error at the checkpoint with the lowest validation prediction loss within each compute budget (epoch in parentheses), decoded through a frozen $k$-NN probe at $k = 20$ to ensure a shared scale. Checkpoints were saved every 10 epochs; evaluated on one seed per model.}
  \label{tab:exp096-isocompute}
  \begin{tabular}{llcccc}
    \toprule
    Metric & Model & $0.75$~PFLOP & $1.5$~PFLOP & $3$~PFLOP & $6$~PFLOP \\
    \midrule
    \multirow{2}{*}{MAE $\theta$ [rad]}
      & \algo~$W{=}1$  & $\mathbf{0.283}$ (ep 9) & $\mathbf{0.207}$ (ep 19) & $\mathbf{0.252}$ (ep 39) & $\mathbf{0.222}$ (ep 79) \\
      & LeWM $W{=}1$   & 0.590 (ep 9) & 0.871 (ep 19) & 0.487 (ep 39) & 0.321 (ep 79) \\
    \midrule
    \multirow{2}{*}{MAE $\omega$ [rad/s]}
      & \algo~$W{=}1$  & $\mathbf{1.002}$ (ep 9) & $\mathbf{0.929}$ (ep 19) & $\mathbf{0.934}$ (ep 39) & $\mathbf{0.902}$ (ep 79) \\
      & LeWM $W{=}1$   & 1.625 (ep 9) & 2.146 (ep 19) & 1.620 (ep 39) & 1.270 (ep 79) \\
    \bottomrule
  \end{tabular}
\end{table}

\textbf{Model parameters.} The compute advantage above is not bought with capacity.
Table~\ref{tab:multi-params} counts each model's parameters from its checkpoint. The encoder is
identical across the two families, and the latent projector differs only in its output width
($32$ for $\bq$ against $64$ for $\bz$), and the SG stencil is not trainable. Thus, the difference sits in the predictor. \algo's Hamiltonian flow, the
two MLPs $K_\phi$ and $V_\phi$, has $2.23$M parameters, against $3.31$M for LeWM's
autoregressive transformer, which also needs a $0.27$M predictor projection head. In total
\algo~has about $32\%$ fewer parameters ($3.03$M against $4.44$M), and widening LeWM's encoder
to $W{=}3$ adds only $\sim$3k. The same predictor gap drives planning cost, since over $99\%$ of
a planning episode's arithmetic is spent rolling the predictor out
(Appendix~\ref{appendix:results_act_pendulum}, \emph{Planning compute}).

\begin{table}[t]
  \centering
  \caption{\textbf{Parameter counts of the models trained on the conservative pendulum environment.} The values are shown in millions,
    measured from the checkpoints. The predictor is where the two families differ:
    \algo's separable Hamiltonian flow ($K(p) + V(q)$ MLPs, \texttt{hidden\_dim} $=1024$)
    against LeWM's 6-layer autoregressive transformer. \emph{pred.\ proj.}\ is LeWM's
    predictor projection head, which \algo~has no counterpart for; the encoder and the
    latent projector are matched across families, and the $W{=}3$ encoder adds only
    $\sim$3k parameters over $W{=}1$.}
  \label{tab:multi-params}
  \begin{tabular}{lrrrrr}
    \toprule
    Model & Total & Encoder & Predictor & Projector & Pred.\ proj. \\
    \midrule
    \algo~$W{=}1$    & \textbf{3.03} & 0.315 & \textbf{2.234} & 0.465 & \textemdash \\
    LeWM $W{=}1$      & \textbf{4.44} & 0.315 & \textbf{3.310} & 0.530 & 0.268 \\
    LeWM $W{=}3$      & 4.45          & 0.318 & 3.310          & 0.530 & 0.268 \\
    \bottomrule
  \end{tabular}
\end{table}

\subsection{Conservative pendulum} \label{appendix:results_cons}

\textbf{Visualization of the latent representation space \algo{} constructs.} We take the encoded ground-truth embeddings across the frames in the episodes and across episodes and perform PCA, shown in Fig.~\ref{fig:energy-projections}. It can be seen that the trajectory is constrained to the learned $H_\phi$. This is possible because this Hamiltonian describes a conservative system, without any action. This Fig. shows empirically how the proposed \algo~imposes structure in the latent representation space that sets clear trajectories for the evolution. 

\begin{figure}
    \centering
    \includegraphics[width=\linewidth]{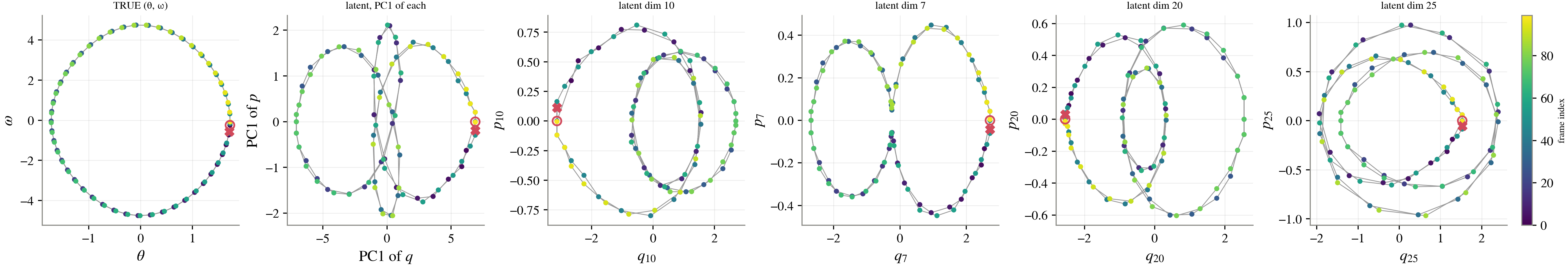}
    \caption{\textbf{Conservative pendulum: rollout latents projected onto pairs of
    principal components.} Each panel is one pair of the leading PCs of the encoder's
    $(\bq,\bp)$ over the test episodes; colour is the episode. The learned $H_\phi$ is
    conserved along each rollout, so an episode traces a closed curve in every projection
    rather than drifting across them.}
    \label{fig:energy-projections}
\end{figure}

Similarly, Fig. \ref{fig:latent_space_cons_pendulum} illustrates how the
\algo's rollout stays on a single low-dimensional closed set per episode, stratified by its learned $H_{\phi}$. The projection is obtained by mean-centring the latents per dimension and taking the SVD; the leading right-singular vectors are the components, with no whitening and no rescaling. Centring matters here because otherwise the first component points at where the cloud sits rather than at how it spreads. Otherwise PC1 just points at where the data is, not at how it spreads. Refer to Fig.\ref{fig:landscape-actuated} to see how the manifold in the latent representation space is modified by the action while still a surface of conserved energy minus work determines the trajectories. Each of the 60 test trajectories have been rolled out using the \algo~model, starting the anchor. In Fig. \ref{fig:physics_vs_\algo_cons_pendulum} shows that the learned energy is conserved and that determines a horizontal plane in which the trajectory takes place without crossing different places. We compare with the ground-truth physical system on the left in which each energy level determines a clear shape for different values of energy, restricting the trajectories to those.

 \begin{figure}
     \centering
     \includegraphics[width=\linewidth]{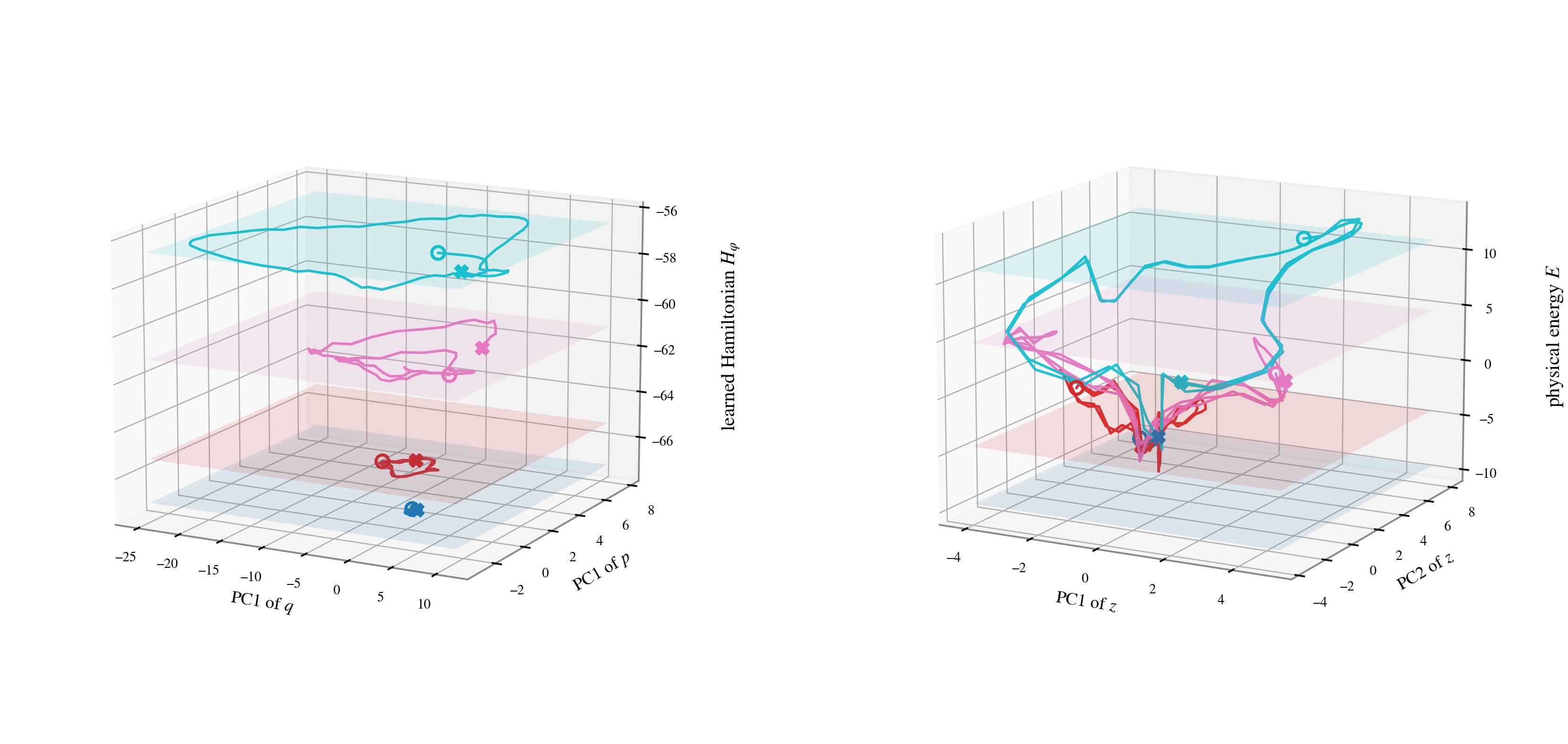}
     \caption{\textbf{Latent representation space projected into the first two principal components (PCs).} The embeddings correspond to \algo~\textbf{(left)} and unstuctured LeWM \textbf{(right)}, shown as a function (z-axis) of the Hamiltonian learned (left) and the physical energy corresponding to the episodes. The embeddings of four episodes are shown in colors. The planes with the same color correspond to the learned Hamiltonian and energy, respectively. While the rollout episode's embeddings in \algo~are restrictive in each of the planes corresponding to the (constant) respective learned Hamiltonian, in LeWM, the rollout episode's representations move across planes of energy, mixing up.}
     \label{fig:latent_space_cons_pendulum}
 \end{figure}

\begin{figure}
    \centering
    \includegraphics[width=\linewidth]{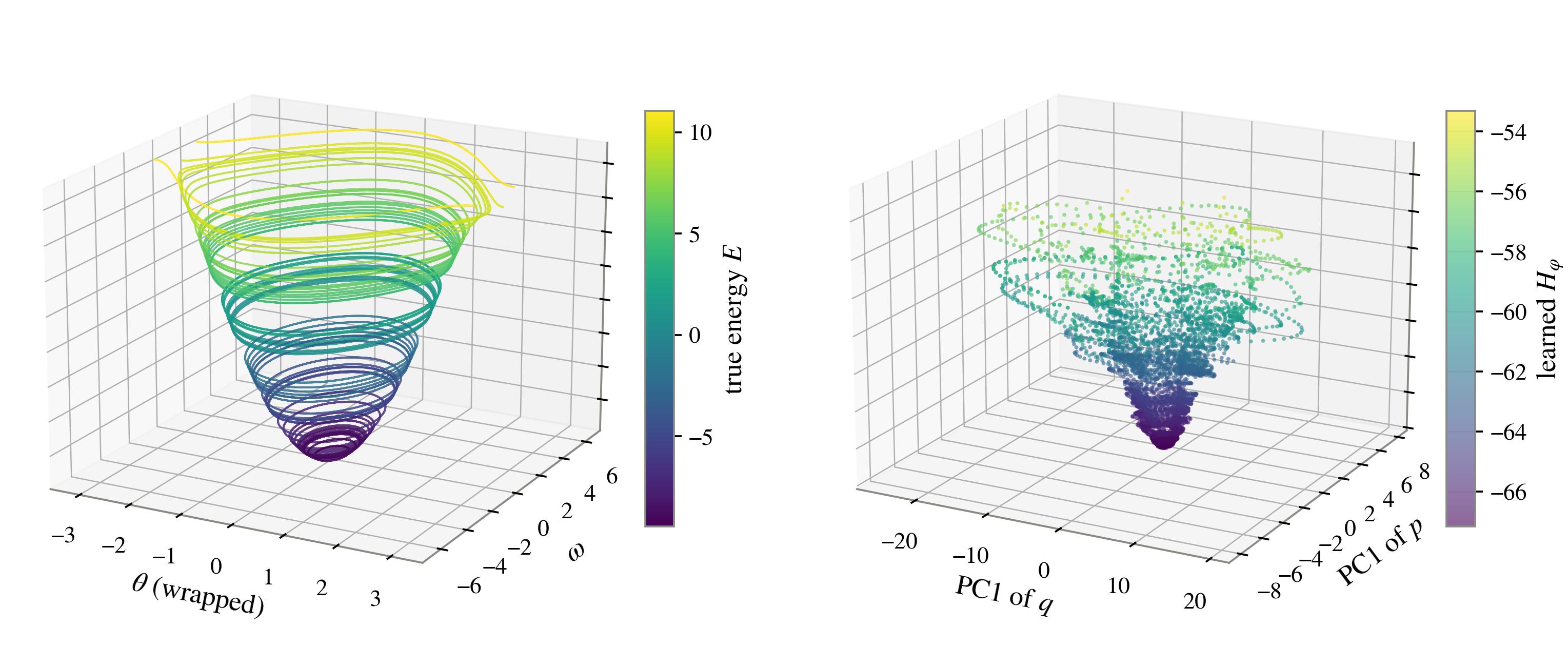}
    \caption{\textbf{The learned energy landscape against the physical one.}
    \textbf{(left)} level sets of the true energy of the conservative pendulum; \textbf{(right)}
    level sets of the learned $H_\phi$ over the same $60$ test rollouts, each started from its
    own anchor. Distinct energies give disjoint closed curves in both, so the learned generator
    stratifies the latent space the way the physical energy stratifies phase space.}
    \label{fig:physics_vs_\algo_cons_pendulum}
\end{figure}

\textbf{Rollout prediction across probes. }Linear tier underperforms non-linear: ridge recovers $\omega$ from the Hamiltonian phase space
at only $R^2 = 0.45$ against 25-NN's $0.975$ (Table~\ref{tab:cons-knnk1}), so velocity is encoded non-linearly in
$(\bq,\bp)$ and every linear number here is a lower bound. Moreover, the \algo{}'s own
momentum block degrades faster than its position block: $R^2_p$ against the reference
momentum falls to $0.613$ by $K=10$ and below zero by $K = 25$, while $R^2_q$ is
still $0.950$ at $K=10$ (Table~\ref{tab:cons-latent-r2}). Both
$R^2_q$ and $R^2_p$ are computed in the latent space against the encoder's own reference
embeddings, so no angle is wrapped and the caveat that applies to $R^2(\theta)$ does not apply
here. The velocity advantage is therefore inherited from the
representation.

\add{\begin{table}[t]
\centering
\caption{\textbf{Where the \algo{}'s two blocks lose the reference, conservative
pendulum.} $R^2$ of the predicted $\bq_k$ and $\bp_k$ against the encoded ground-truth
$\bq$ and $\bp$ at the same frame, test split. These are latent-space quantities in the model's
own coordinates, not physical ones: they measure how far the flow drifts from the encoder
trajectory it was trained to reproduce, and being self-referential they cannot by themselves
establish that a physical quantity survives. What survives decoding is in
Table~\ref{tab:cons-knnk1}..}
\label{tab:cons-latent-r2}
\begin{tabular}{l rrrrrrr}
\toprule
rollout step $k$ & 0 & 1 & 5 & 10 & 20 & 25 & 50 \\
\midrule
$R^2_q$ (position block) & $1.000$ & $0.997$ & $0.985$ & $0.950$ & $0.929$ & $0.646$ & $0.373$ \\
$R^2_p$ (momentum block) & $1.000$ & $0.935$ & $0.867$ & $0.613$ & $0.586$ & $-0.324$ & $-1.165$ \\
\bottomrule
\end{tabular}
\end{table}}

\begin{table}[t]
\centering
\caption{\textbf{Performance in physics prediction of the conservative pendulum.} Wrapped MAE of angle $\theta$ and $\omega$ predicted by the $k$-NN probe (with $k=25$)
fitted on each model's own $K=1$ output and applied frozen along the rollout, averaged over
the rollout band. Ratio is LeWM $\div$ \algo, so $>1$ favours the \algo{}.}
\label{tab:cons-knnk1}
\begin{tabular}{l rrr rrr}
\toprule
 & \multicolumn{3}{c}{MAE $\theta$ [rad, wrapped]} & \multicolumn{3}{c}{MAE $\omega$ [rad/s]} \\
\cmidrule(lr){2-4}\cmidrule(lr){5-7}
band & \algo~& LeWM & ratio & \algo~& LeWM & ratio \\
\midrule
$k = 1$--$10$  & 0.107 & 0.100 & 0.93 & \textbf{0.364} & 1.020 & \textbf{2.80} \\
$k = 11$--$20$ & \textbf{0.225} & 0.287 & \textbf{1.27} & \textbf{0.669} & 1.169 & \textbf{1.75} \\
$k = 21$--$70$ & \textbf{0.596} & 0.981 & \textbf{1.65} & \textbf{1.534} & 2.331 & \textbf{1.52} \\
\midrule
\multicolumn{7}{l}{\emph{anchor ($K=0$), $R^2(\omega)$ from the frozen encoder features:}
 \quad \algo~$+0.975$ \quad LeWM $-0.667$} \\
\multicolumn{7}{l}{\emph{same anchor, \algo~read by the linear tier:} \quad ridge $+0.448$,
 against the 25-NN's $+0.975$ above} \\
\bottomrule
\end{tabular}
\end{table}

\textbf{Encoder independence.}
The encoder in the head-to-head above is a ConvNet. To check that the effect is a
property of the predictor rather than of that choice, we compare the tiny ViT  \citep{wu2022tinyvit} and ConvNet
encoder pairs on a single-rod dataset ($L = 1.0$~m, $\Delta t = 0.05$~s, $500$ episodes of
$100$ steps), holding everything else fixed. At the anchor, the
\algo{} decodes $\omega$ at $R^2 = 0.929$ with a ViT encoder and $0.939$
with a ConvNet; the unconstrained baseline scores $-0.031$ (ViT) and
$-0.018$ (ConvNet). The ordering, its sign and its magnitude are unchanged by the
encoder family, so the velocity content is attributable to the phase-space split and the
flow, not to the vision backbone. Note that this check uses the single-rod dataset, for this reason its anchor values differ from those in Table~\ref{tab:cons-knnk1}.



We showcase that the proposed \algo~can leverage a pre-trained encoder to be a world model on pre-trained visual features as DINO-WM \citep{zhou2024dino} does. For this, the pre-trained encoder, DINO-v2 \citep{oquab2023dinov2}, is followed by a projector to output $\bq \in \mathbb{R}^32$, and all the other components of \algo~remain the same. Similarly, DINO-WM has a projector to project to $64$-dimension space. This is DINO-LeHami. The loss for this model, as well as DINO-WM, is the same as the one used for both \algo~and LeWM for the training of the models we discussed their performance in Fig.~\ref{fig:cons-physics}. The evaluation is also exactly the same. Fig.~\ref{fig:dino-powered} shows that MAE in both $\theta$ and $\omega$ is reduced in comparison to their respective implementations without the pre-trained encoder. The advantage of \algo~in the angle MAE to LeWM seems to be reduced while for the velocity $\omega$ it continues to outperform, while for K larger than $60$ both models start to deviate significantly.

\begin{figure}
    \centering
    \includegraphics[width=\linewidth]{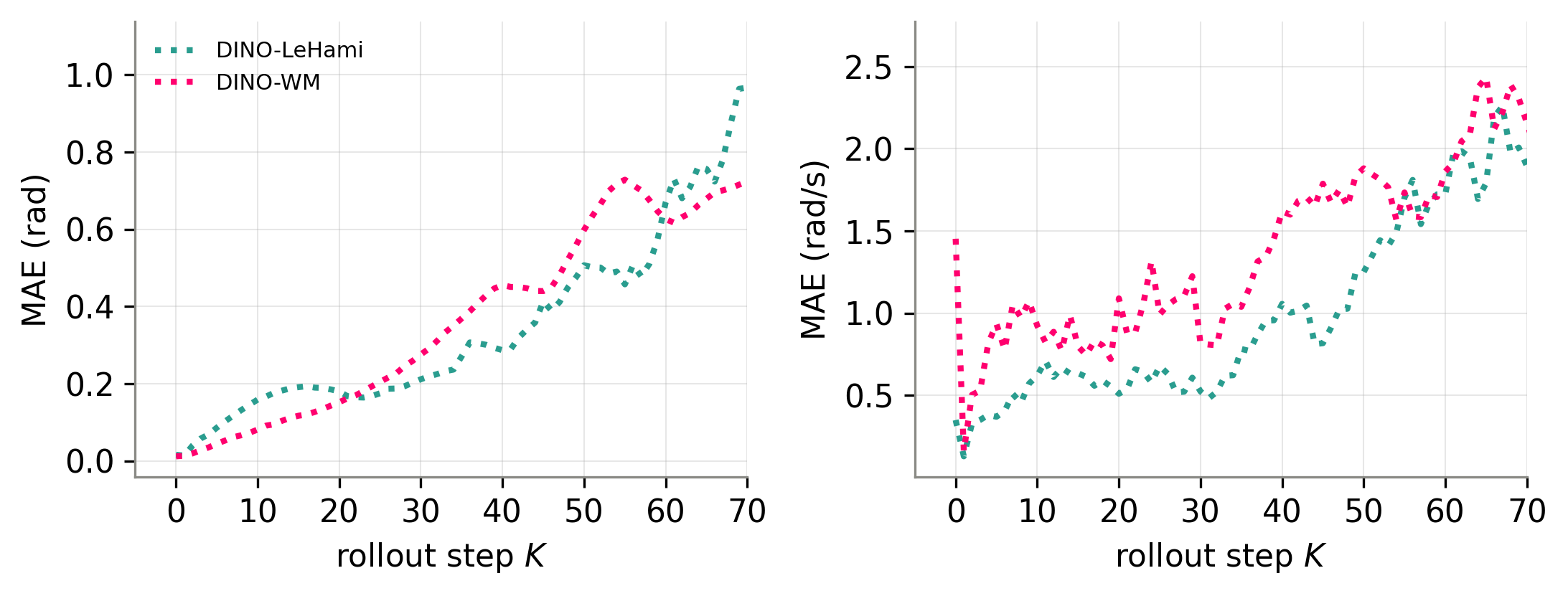}
    \caption{\textbf{DINO-LeHami vs.\ DINO-WM on the conservative pendulum.} Both models use a frozen
    DINOv2 encoder followed by a learned projector; Wrapped MAE of $\theta$ \textbf{(left)} and
    $\omega$ \textbf{(right)} along the test rollout ($K$ steps, $\Delta t = 0.05$~s), read by a
    $k$-NN probe ($k=25$) fitted at $K=1$ and kept frozen. One training seed per model, both at
    epoch $60$.}
    \label{fig:dino-powered}
\end{figure}

\textbf{Phase-space volume of the one-step map.} For a one-step predictor $F$, $\bz_{i+1} = F(\bz_i, \bu_i)$, the Jacobian
$J(\bz_i) = \partial F / \partial \bz \,|_{\bz_i}$ linearises $F$ around $\bz_i$, and
$|\det J(\bz_i)|$ is the factor by which an infinitesimal volume of states around $\bz_i$ is
scaled. By multiplicativity of the determinant, $\sum_{i<K} \log|\det J(\bz_i)|$ is the log
volume factor of the $K$-step map. We divide by the state dimension $n$ so that maps of
different sizes are comparable. A value of $0$ means volume is preserved and a negative value
means contraction. The true pendulum flow is symplectic, so $0$ is also the ground truth. For
\algo, $\bz_i = (\bq_i, \bp_i)$ and $F$ is one leapfrog step, so $\det J = 1$ exactly.

LeWM's predictor takes three context embeddings and returns one, so it has no square map from
a state to the next state. We therefore report two quantities. The full map acts on the
context, $(\bz_{t-2}, \bz_{t-1}, \bz_t) \mapsto (\bz_{t-1}, \bz_t, \bz_{t+1})$ so its Jacobian is
block-companion, and its determinant equals
$\pm\det(\partial \bz_{t+1}/\partial \bz_{t-2})$. The newest block,
$\partial \bz_{t+1}/\partial \bz_t$, has the dimension of a single embedding and measures how
the most recent state is rescaled and it is not a volume. Fig.~\ref{fig:jacobian-volume} shows both measures at the anchor of every test episode:
LeWM contracts under the full map and under the newest block alike, with no overlap with
\algo, which stays at $0$ for every state. The contraction is therefore not an artefact of
which Jacobian is chosen for LeWM's three-embedding context.

\begin{figure}[t]
    \centering
    \includegraphics[width=\linewidth]{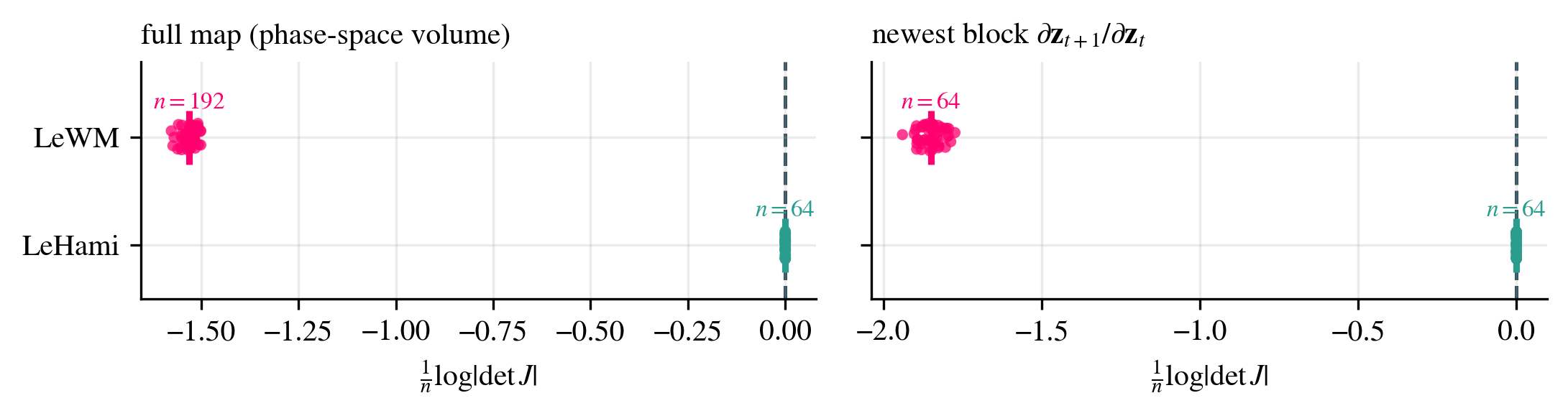}
    \caption{\textbf{Log-volume factor of the one-step latent map on the conservative
    pendulum.} One Jacobian per test episode at the anchor ($45$ states; bars mark the mean).
    \textbf{Left:} the full map, $n=64$ for \algo~on $(\bq,\bp)$ and $n=192$ for LeWM on its
    three-embedding context. \textbf{Right:} the newest block
    $\partial\bz_{t+1}/\partial\bz_t$, $n=64$ for both. The dashed line is the true flow.
    One training seed per model.}
    \label{fig:jacobian-volume}
\end{figure}

\subsection{Actuated pendulum \label{appendix:results_act_pendulum}} 

\textbf{Visualization of the latent representation space \algo{} constructs.} In this dataset, work is performed by the action. While in the conservative pendulum the energy determines the orbit as we observe a closed rings in Fig.~\ref{fig:physics_vs_\algo_cons_pendulum}, in the actuated case, the trajectory is free to wander over a large region in the phase-space, so the surface of $E - W$ is a horizontal plane spanning the whole space, not a ring bounding anything. Different episode's planes overlap in the phase-space because they transverse similar regions, making the latent representation space to look as in Fig. \ref{fig:landscape-actuated}, projected over the main PC of $\bq$ and $\bp$, respectively. The planes are now curves, and each one correspond to a single value of torque (action), and the trajectories can move between these planes. However, given a torque, position in the space is determined.

\begin{figure}
    \centering
    \includegraphics[width=\linewidth]{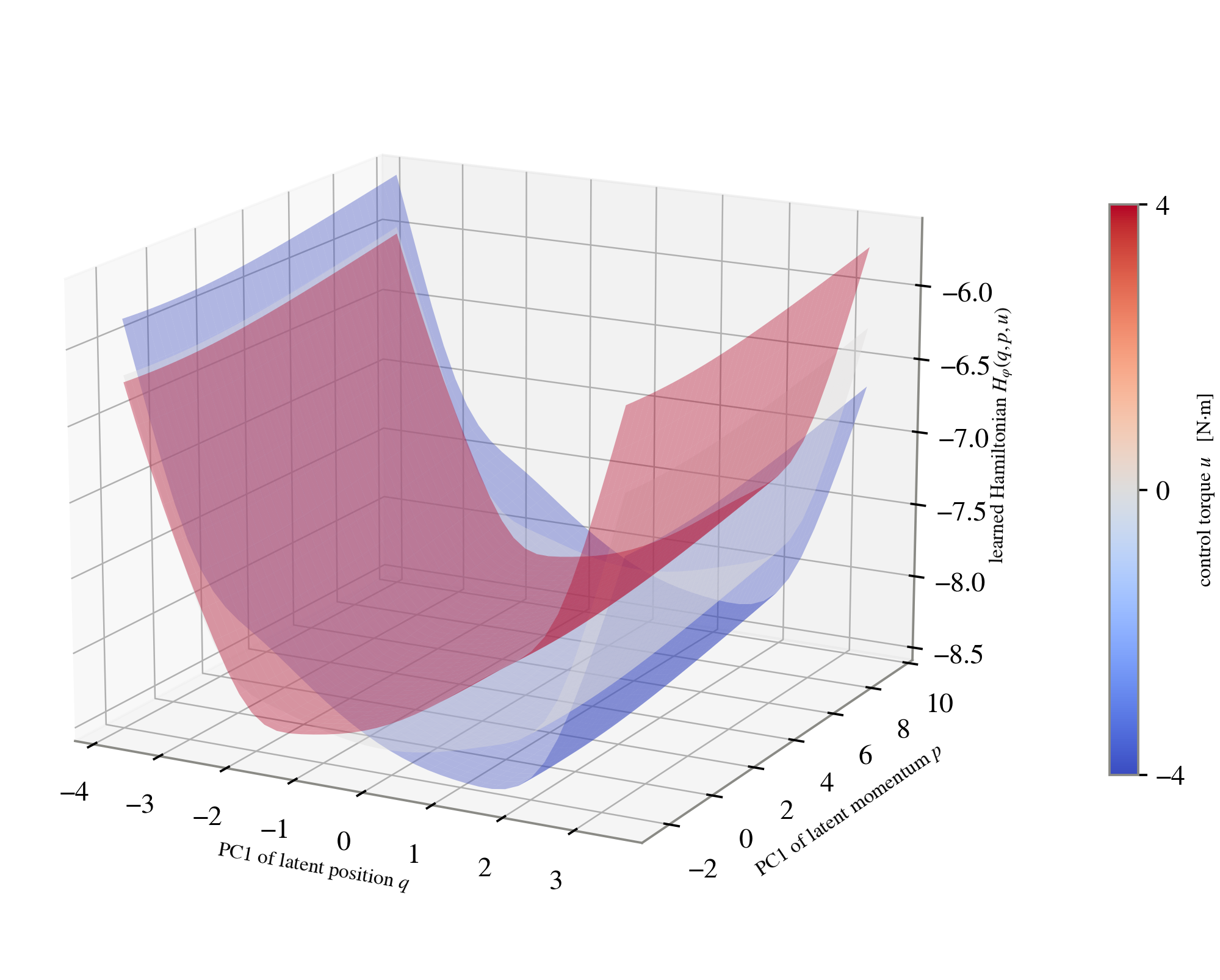}
    \caption{\textbf{The learned energy landscape under actuation.} The same
    projection as Fig.~\ref{fig:physics_vs_\algo_cons_pendulum}, for the actuated pendulum. The
    torque does work, so $H_\phi$ is not constant along a rollout and the trajectories cross the
    level sets instead of nesting into disjoint shells. Symplecticity is unaffected: it
    constrains the map, not the value of $H_\phi$.}
    \label{fig:landscape-actuated}
\end{figure}

\textbf{Where to apply regularizer in \algo.} Given the ground-truth embeddings (produced by the encoder) $(\bq, \bp)$, with  $\bp$ the Savitzky--Golay derivative of $\bq$ on
the centred-valid window, the three placements to apply SIGReg are: 
$\lambda_S\,\SIGReg(\bq \Vert \bp])$ (\texttt{joint}),
$\lambda_S\,\SIGReg(\bq)$ (\texttt{q}), and
$\lambda_S\,\SIGReg(\bq) + \lambda_S\,\SIGReg(\bp)$ (\texttt{split}). We study the variance of the embeddings within-episode and between-episode in Table~\ref{tab:episode-variance}. A random unit
direction in the $64$-wide concatenation puts on average half its squared length in the $q$
block, so with $p \approx 0$ the projected variance is $\approx \tfrac12\operatorname{Var}
(q)$ and satisfying a standard-normal test (as with SIGReg) requires $\operatorname{Var}_{\text{per-dim}}
(q) \approx 2$ as we measure  $\operatorname{std}(q) = 1.36$
against a predicted $\sqrt{2} = 1.41$ Under \texttt{q}-mode, where the statistic sees only the 32 real dimensions, the
variance lands at $0.941 \pm 0.010$, the target. The joint statistic can be
blind to the momentum block and indeed it is satisfied by killing it and inflating the
position block to compensate as we see with the largest ratios of $\mathrm{std}(p)/\mathrm{std}(q)$. When splitting, the within-episode variance increases significantly. However, the standard deviation of $p$ though we apply the regularizer over, it remains far from unit. This is because $\bp$ is not free, it is conditioned on $\bq$ as it is estimated as its differentiation. 

\begin{table}[t]
\centering
\caption{\textbf{Within- versus between-episode variance of the encoder's latent coordinates, by SIGReg placement.} For each latent dimension, $V_\text{within}$ is the mean squared deviation of an episode's trajectory about its own mean and $V_\text{between}$ is the variance of those per-episode means; the column reports $V_\text{within}/(V_\text{within}+V_\text{between})$, averaged over dimensions. A value near zero means the latent is an episode identifier rather than a state. Measured on the encoder's own latents (\texttt{gt\_q}, \texttt{gt\_p}) over the test episodes at a common horizon $K=0\ldots74$; mean $\pm$ standard deviation over three training seeds. Bold marks the largest within-episode share per window. The unstructured baseline attains $42.8\%$ on this measure but cannot be tabulated here, as its evaluation path stores latents only at $K=0$.}
\label{tab:episode-variance}
\begin{tabular}{ll rrrr}
\toprule
window & SIGReg & within-episode \% & $\mathrm{std}(q)$ & $\mathrm{std}(p)$ & $\mathrm{std}(p)/\mathrm{std}(q)$ \\
\midrule
\multirow{3}{*}{$W=1$} & \texttt{joint} & $0.01 \pm 0.00$ & 1.356 & 0.006 & 0.0043 \\
 & \texttt{q} & $15.32 \pm 0.29$ & 0.980 & 0.101 & 0.1031 \\
 & \texttt{split} & $\mathbf{48.89 \pm 16.39}$ & 1.174 & 0.236 & 0.2014 \\
\midrule
\multirow{3}{*}{$W=3$} & \texttt{joint} & $4.90 \pm 0.30$ & 1.405 & 0.051 & 0.0362 \\
 & \texttt{q} & $8.87 \pm 0.54$ & 0.979 & 0.051 & 0.0519 \\
 & \texttt{split} & $\mathbf{47.06 \pm 2.56}$ & 1.124 & 0.201 & 0.1784 \\
\bottomrule
\end{tabular}
\end{table}

\textbf{Physics prediction in rollout.} We studied the performance of \algo~applying SIGReg on three placements: 
$\lambda_S\,\SIGReg(\bq \Vert \bp])$ (\texttt{joint}),
$\lambda_S\,\SIGReg(\bq)$ (\texttt{q}), and
$\lambda_S\,\SIGReg(\bq) + \lambda_S\,\SIGReg(\bp)$ (\texttt{split}). We study the variance of the embeddings within-episode and between-episode in Table~\ref{tab:episode-variance}. Based on this analysis, and on the performance in predicting the physical quantities of interest, applying SIGReg on the joint $[\bq \Vert \bp]$ has very bad performance. For this reason, we compare the performance in physics prediction of the other two variants. Table~\ref{tab:physics_prediction_anchor} reports the \texttt{q} placement, the model used
for planning; the split placement is analysed in Table~\ref{tab:episode-variance}. \algo~significantly improves the reconstruction of the velocity, over a LeWM that performs worse than predicting the average. Regarding angle, the MAE achieved by LeWM is smaller than \algo's at every probe tier, though
the gap narrows as the probe gains capacity. This is attributed to the fact that \algo~splits the capacity of the latent representation space into  $\bq$ and $\bp$, each (32) having the half the dimension of LeWM (64). Thus, an improvement in expressivity in one seems to deteriorate the other one. 

The results corresponding to the rollouts are shown in Fig. \ref{fig:actuated_pendulum_physics}. The probes are trained at the anchor on training data, and the evaluated frozen. While LeWM has significantly lower MAE in angle for the smaller rollout steps, after $K \sim 40$, the MAE overlaps with the one of \algo. Regarding the velocity, for small $K$ \algo~shows significantly lower MAE. As rollout progresses, after $K \sim 10$, the performance is on par. This has to do, as commented before, that the predictor takes three context frames, and after rollouts, the embeddings start to have a better representation of velocity though it suffers the expected decay due to error accumulation through rollout.

\textbf{Capturing velocity through the encoder and unstructured predictor.} While expanding the time window improves the prediction of the angle $\theta$ of LeWM, it deteriorates the performance in predicting $\omega$ unless the temporal weights are explicitly decoupled. The reason is that with no time window ($W=1$), the velocity is captured only by the predictor, which takes three context frames to perform the next prediction. In the ``tied'' $W=3$ variant, the kernel of the temporal ConvNet is symmetric by construction, with weights $[a,b,a]$. Thus, the tied encoder is invariant to reversing the order of the three frames in its window. Reversing time leaves the centre angle unchanged but flips $\omega \to -\omega$; the states $(\theta, +\omega)$ and $(\theta, -\omega)$ are therefore mapped to the same latent, and no readout of that latent can recover the sign. For a pendulum at $\theta$ with rate $\omega$, the window holds $\theta-\omega dt,\ \theta,\ \theta+\omega dt$, and the symmetric average is
\begin{equation}
  \frac{b+2a\cos(\omega\,dt)}{b+2a}\,(\sin\theta,\ \cos\theta),
\end{equation}
which corresponds to the middle frame (at time step $t$), shrunk by that scalar factor between 0 and 1. Note that this contraction is even because the cosine function is even (i.e., $\cos(\omega dt)=\cos(-\omega dt)$), so $\omega$ and $-\omega$ give the identical output. Conversely, the ``untied'' $W=3$ variant removes this parameter sharing, learning independent weights $[w_1, w_2, w_3]$. This unconstrained asymmetry allows the untied encoder to natively distinguish forward from backward motion.

This is measured empirically in Table~\ref{tab:sign-blind} where it is shown that while the $R^2(\vert{}\omega\vert{})$ is significantly improved with expanding the time window in the tied encoder, the variance explained by the vector $\omega$ is lower with a lower $R^2(\omega)$, and the gap between the two $R^2$ is amplified, in comparison to $W=1$.  

\begin{table}[t]\centering
\caption{\textbf{The comparison of the $R^2$ due to the time window in the encoder.} They correspond to the $k$-NN ($k{=}25$) probe applied on encoded anchor latents over held-out episodes, across 3 seeds. The $n$ refers to the history size of the predictor, which is the same in both. Note that $W=3$ refers here to the tied variant, which exacerbates the sign-blindness gap.}
\label{tab:sign-blind}
\begin{tabular}{lrrrr}
\toprule
Model & $n$ & $R^2(\omega)$ & $R^2(\vert{}\omega\vert{})$ & gap \\
\midrule
LeWM $W{=}1$ & 3 & $-0.048$ & $+0.032$ & $+0.080$ \\
LeWM $W{=}3$ & 3 & $-0.115$ & $+0.921$ & $+1.036$ \\
\bottomrule
\end{tabular}
\end{table}

Both LeWM models use three frames for history so the predictor sees three consecutive latents to predict the next one. Ordering is preserved at the predictor even though the tied encoder discarded it, and the sign of $\omega$ is recoverable there. For $W=1$, LeWM encodes no $\omega$ at all at $K=0$ yet reads it at $R^2 \approx 0.90$ after a single predictor step, so that information can only have come from the ordering of its context. At tied $W{=}3$ that route survives but is degraded. With stride $1$, latents $j$ and $j{+}1$ are built from frames $\{j, j{+}1, j{+}2\}$ and $\{j{+}1, j{+}2, j{+}3\}$, respectively. Because these adjacent latents share two out of three frames, consecutive-frame motion is attenuated by $34\%$ compared to $W=1$. Since it is exactly this frame-to-frame change that carries the directional sign at the predictor level, the tied time window hurts twice over. It hands the model an easy, sign-blind estimate of speed from the encoder, and it attenuates the one signal coming from the change between consecutive latents, from which the direction could still have been read.

On the contrary, \algo~is on the other side of this symmetry. The Hamiltonian flow does not have any history; it is capturing velocity solely via $p$ computed with a Savitzky--Golay finite difference of $q$. This finite difference is an odd operator, hence sign-aware by construction. Any consistent first-derivative estimator satisfies $\sum_m w_m = 0$ and $\sum_m w_m m = 1$, where $m$ indexes frames within the stencil window; reversing time sends $m \to -m$ and hence the estimate to $-1$ times itself. The Hamiltonian model gets velocity not by seeing more frames or learning unrestricted asymmetric weights, but by applying an antisymmetric structural operator to them.

\begin{figure}[t]
    \centering
    \includegraphics[width=\linewidth]{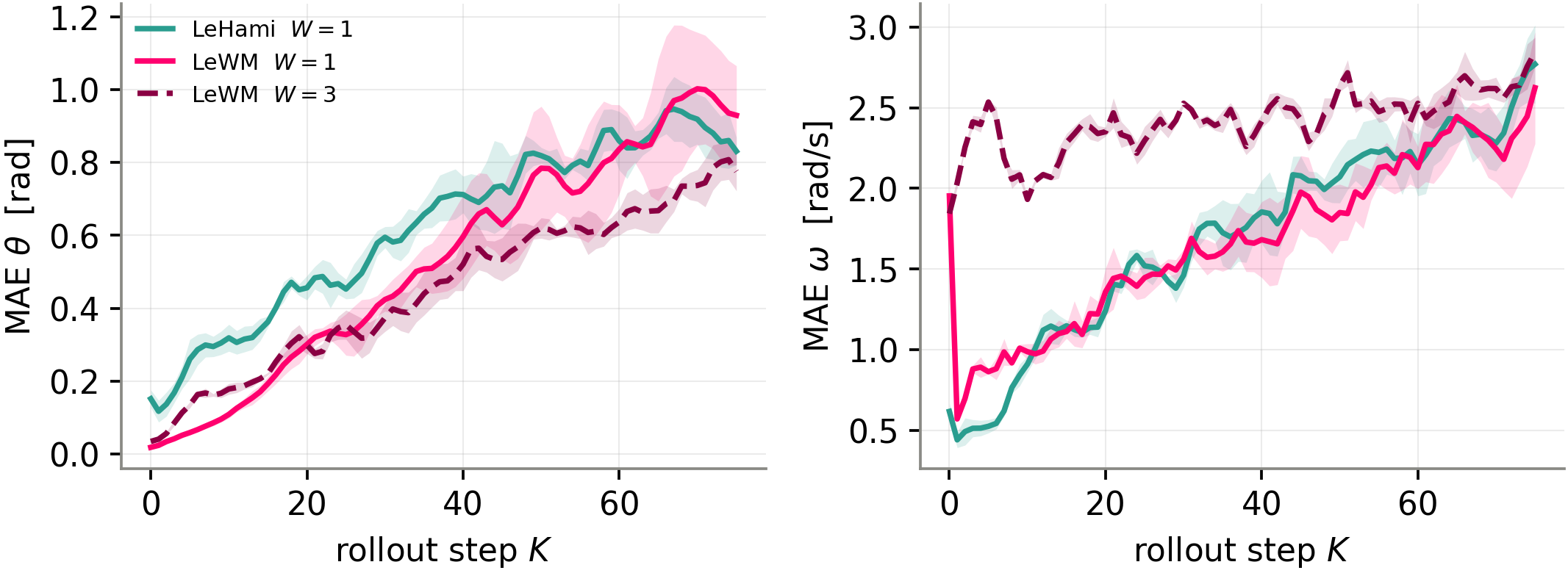}
    \caption{\textbf{Performance in physics predictions in rollout of the actuated pendulum.} Wrapped MAE of $\theta$ \textbf{(left)} and $\omega$ \textbf{(right)} along the test rollout, 25-NN probe fitted at $K=1$ and applied frozen. Same encoder, same schedule, same regularizer weight. Shaded area corresponds to the standard deviation across the three architecture seeds.}
    \label{fig:actuated_pendulum_physics}
\end{figure}

\textbf{The distribution of the angles and velocities across the goals.} As we perform a discussion about the models being able to capture the sign of the velocity, we show the distribution of the angles and the velocities across the goals for which we assess planning performance in Fig.\ref{fig:goal_distribution}. Although the distribution of velocities is not centered around zero, and thus they are not balanced in sign, the difference is not sufficient to back up a claim that the goals favored the planning performance in terms of absolute value. 

\begin{figure}
    \centering
    \includegraphics[width=\linewidth]{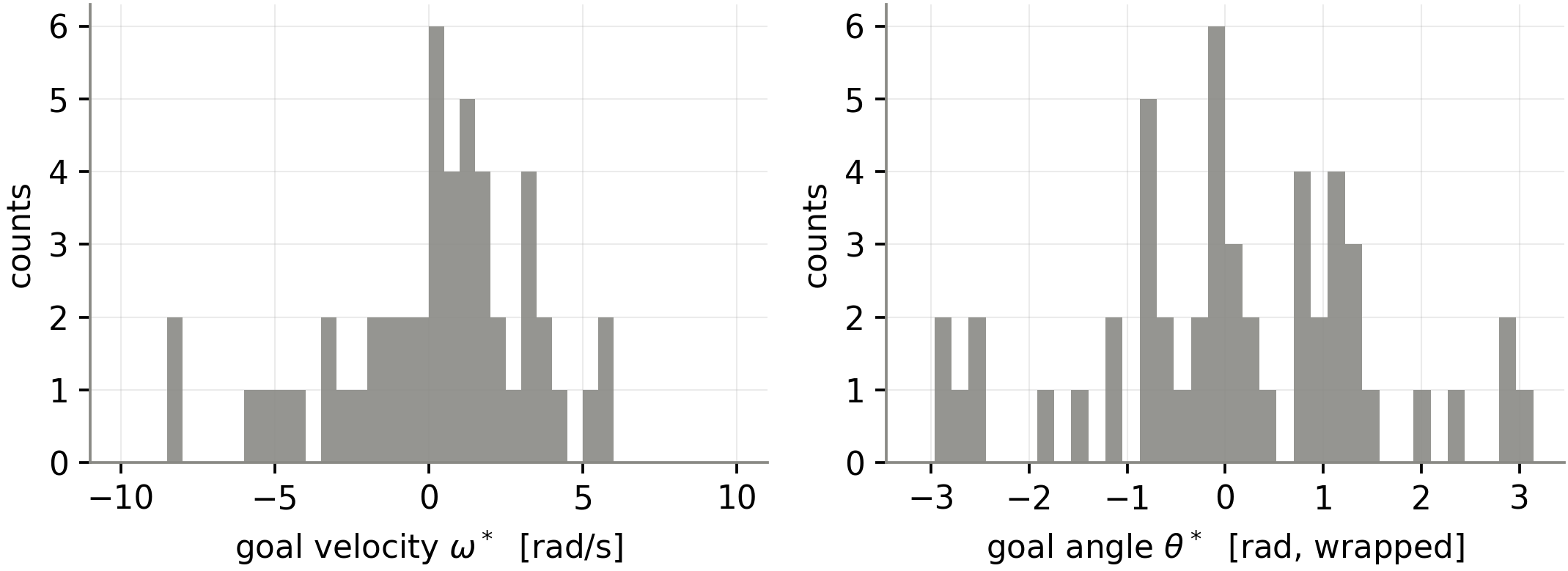}
    \caption{The $50$ planning goals. Left: the goal velocity $\omega^\add{*}$; dashed lines mark $\pm\tau_\omega/2$, beyond which a sign error alone fails the velocity clause. $92\%$ of the goals lie outside them. Right: the goal angle $\theta^\add{*}$, wrapped; the green lines mark upright, which no goal is within $\tau_\theta$ of. Goals are recorded states of held-out test episodes at $K = 30$ frames after the start, shared identically by every model, rung and seed.}
    \label{fig:goal_distribution}
\end{figure}

\paragraph{Planning compute.}
At any fixed rung a planning episode costs \algo~$0.63\times$ the baseline's FLOPs. The saving is
in the predictor rather than the encoder: over $99\%$ of an episode's arithmetic is the CEM
rollout, where \algo's MLP flow costs $13.6$~MFLOP per step per candidate against $21.7$ for LeWM.
LeWM's predictor is a causal, action-conditioned transformer over the three most recent latents, in
which self-attention is the only operation that mixes them; \algo~instead recovers the
momentum-like coordinate from a causal Savitzky--Golay stencil, so its predictor never has to form
that mixture.

\textbf{Decomposing the velocity clause in planning: speed vs. direction.} To understand the representational failure modes during planning, the task's velocity clause $|\omega - \omega^*| < \tau_\omega$ can be decomposed into two distinct sub-components: a magnitude (speed) clause $\big||\omega| - |\omega^*|\big| < \tau_\omega$ and a directional sign clause $\mathrm{sign}(\omega) = \mathrm{sign}(\omega^*)$. 

On this domain, directional sign is the decisive factor: $92\%$ of goals satisfy $2|\omega^*| \geq \tau_\omega$, meaning an agent arriving at the goal state spinning in the wrong direction cannot satisfy the velocity clause regardless of speed precision. Table~\ref{tab:swingup-clause-decomp} reports performance on both components. 

The magnitude clause is where temporal windowing assists the baseline: LeWM $W{=}3$ reaches $0.638$ tied and $0.813$ untied, compared to \algo's $0.547$. On the sign clause, LeWM $W{=}1$ ($0.656$) and tied LeWM $W{=}3$ ($0.649$) both sit slightly below the passive $u = 0$ coasting floor ($0.680$), so neither planner gets the sign right more often than passive dynamics do. For tied LeWM $W{=}3$ this is expected: its symmetric encoder maps opposite velocity directions to the same latent. Untying the kernel breaks this temporal parity, and untied LeWM $W{=}3$ gets the sign right most often ($0.962$). \algo~and the untied LeWM $W{=}3$ are the only models that improve over coasting on the sign clause; \algo~($0.800$) does so without a learned temporal kernel and at $0.59\times$ the compute ($8.05$ against $13.65$~TFLOP).

\begin{table}[h]
  \centering
  \caption{\textbf{Velocity clause decomposition into magnitude and directional sign components.} Evaluated under the same computational budget ($\leq 15$~TFLOP/episode) and protocol as Table~\ref{tab:swingup-best-clause}.}
  \label{tab:swingup-clause-decomp}
  \begin{tabular}{llcrl}
    \toprule
    Clause & Model & Success & TFLOP & Replanning Interval ($R$) \\
    \midrule
    \multirow{5}{*}{Velocity, magnitude}
      & LeWM $W{=}1$          & $0.251 \pm 0.051$ &  8.78 & $R=15$, 2 solves \\
      & LeWM $W{=}3$ (tied)   & $0.638 \pm 0.008$ & 13.65 & $R=9$, 4 solves \\
      & LeWM $W{=}3$ (untied) & $\mathbf{0.813 \pm 0.052}$ & 13.65 & $R=9$, 4 solves \\
      \cmidrule{2-5}
      & \algo~                & $0.547 \pm 0.022$ & 10.98 & $R=6$, 5 solves \\
      & \textit{$u = 0$ coast} & \textit{0.300} & \textemdash & \textemdash \\
    \midrule
    \multirow{5}{*}{Velocity, sign}
      & LeWM $W{=}1$          & $0.656 \pm 0.017$ &  5.85 & $R=30$, 1 solve \\
      & LeWM $W{=}3$ (tied)   & $0.649 \pm 0.022$ & 13.65 & $R=9$, 4 solves \\
      & LeWM $W{=}3$ (untied) & $\mathbf{0.962 \pm 0.017}$ & 13.65 & $R=9$, 4 solves \\
      \cmidrule{2-5}
      & \algo~                & $0.800 \pm 0.034$ &  8.05 & $R=9$, 4 solves \\
      & \textit{$u = 0$ coast} & \textit{0.680} & \textemdash & \textemdash \\
    \bottomrule
  \end{tabular}
\end{table}

Conditioning on the goal angle, $P(\theta \wedge \omega) = P(\theta)\,P(\omega \mid \theta)$, separates the two bottlenecks. \algo's advantage is on the sign, and it holds at every replanning interval: among plans that reach the goal angle, \algo's final velocity has the goal's sign in $0.91$--$0.95$ of cases, against $0.65$--$0.69$ for tied LeWM $W{=}3$ and $0.55$--$0.67$ for LeWM $W{=}1$. Only untied LeWM $W{=}3$ does better ($0.95$--$0.99$). The full velocity clause given the angle does not consistently favour \algo. 

What limits \algo's joint success is reaching the goal angle. At $R = 9$ it reaches it on $P(\theta) = 0.48$ of plans, against $0.63$ for tied and $0.70$ for untied LeWM $W{=}3$, so $0.48 \times 0.55 = 0.26 < 0.63 \times 0.61 = 0.39$ for tied LeWM $W{=}3$. On this task the bottleneck is positional goal-reaching, not the direction of the velocity.

\textbf{Prediction of the rod's length and mass.} A symplectic flow conserves the
Hamiltonian it has learned and phase-space volume, and nothing else by construction. The
actuated pendulum has two further quantities that are exactly constant within an episode and
visible in every frame: the rod length $L \in \{0.7, 1.0, 1.4\}$\,m, drawn at its true length,
and the bob mass $m \in \{0.5, 1.0, 2.0\}$\,kg, drawn as a constant-density sphere of radius
$\propto m^{1/3}$. Neither the architecture nor the loss asks the latent to keep them. They
matter for control because the torque enters the dynamics only through $u/(mL^2)$
(Eq.~\ref{eq:pendulum_actuated}), so a model that loses $m$ cannot know what a given torque
will do. We regress rather than classify: $m$ and $L$ are continuous parameters that happen to
be sampled at three values, and $R^2$ registers the gradual drift of a frozen readout that a
three-way accuracy would saturate over. Ridge and $25$-NN probes are fitted on the anchor
($K=0$) latents of the $630$ training episodes to predict $(m, L)$ jointly, then applied frozen
to each model's own rollout and scored on the $135$ test episodes.

Table~\ref{tab:param-conservation} shows that both models carry both parameters. At the anchor
every cell reads $R^2 \geq 0.999$. After $76$ rollout steps the $25$-NN probe is still
essentially exact ($R^2 \geq 0.997$) for both models, while the frozen ridge probe drifts, more
for \algo~($0.900$ on $L$ and $0.977$ on $m$) than for LeWM ($0.983$ and $0.993$). The drift is
an affine deformation rather than a loss of information: by $K=76$ the linear readout of $L$
has over-dispersed by $3$--$6\%$ and picked up a small offset, which a nearest-neighbour
readout, depending only on which training points are closest, does not see. A local readout
that keeps working while a fixed affine one loses calibration is the signature of a parameter
that has been transported within the latent rather than destroyed. For control the drift is
inconsequential: assigning the ridge prediction of $m$ to the nearest of
$\{0.5, 1.0, 2.0\}$\,kg is correct on $99.3\%$ of episodes at $K=76$ for both models, against a
majority-class floor of $34.8\%$. The planning results of Section~\ref{sec:planning} therefore
cannot be attributed to either model being blind to the mass.

\begin{table}[t]
\centering
\caption{\textbf{Performance in prediction of rod's length and mass in the actuated pendulum.} Both episode-constant parameters are
recovered from the latent by probes fitted at anchor ($K=0$) on training
episodes and applied unchanged along each model's own rollout over $135$ held-out episodes produced by \algo~and LeWM. $R^2$ is reported at the anchor and at the end of the horizon ($K=76$). Three seeds for \algo~(min--max in brackets) for both ridge and $25$-NN probes. Both models carry both parameters. While the  $k$-NN (k=25) probe is exact at every $K$ rollout, it is observed some drift for the ridge, affecting more \algo. Note that The $R^2$ columns therefore measure how far the frozen
readout drifts, not whether the parameter is still identifiable.}
\label{tab:param-conservation}
\begin{tabular}{ll cc cc}
\toprule
 & & \multicolumn{2}{c}{rod length $L$} & \multicolumn{2}{c}{bob mass $m$} \\
\cmidrule(lr){3-4}\cmidrule(lr){5-6}
probe & model & $K=0$ & $K=76$ & $K=0$ & $K=76$ \\
\midrule
ridge & \algo & 0.999 & 0.900 {\footnotesize [0.871, 0.951]} & 0.999 & 0.977 {\footnotesize [0.958, 0.993]} \\
      & LeWM  & 0.999 & 0.983 {\footnotesize [0.976, 0.990]} & 0.999 & 0.993 \\
\midrule
$25$-NN & \algo & 1.000 & 0.997 {\footnotesize [0.992, 1.000]} & 1.000 & 1.000 \\
        & LeWM & 1.000 & 1.000 & 1.000 & 1.000 \\
\bottomrule
\end{tabular}
\end{table}

%% file: diagram_convnet.tex
\begin{figure}[htbp]
\begin{centering}
\begin{tikzpicture}[
  font=\small,
  box/.style={draw, rounded corners=2pt, minimum height=8mm, minimum width=17mm,
              align=center, inner sep=3pt},
  new/.style={box, fill=blue!8, draw=blue!55, thick},
  ham/.style={box, fill=orange!10, draw=orange!65, thick},
  ar/.style={-{Latex[length=2mm]}, thick},
  lbl/.style={font=\scriptsize, text=black!65},
]

\node (x1) [box, minimum width=9mm] {$x_{t-2}$};
\node (x2) [box, minimum width=9mm, right=2mm of x1] {$x_{t-1}$};
\node (x3) [box, minimum width=9mm, right=2mm of x2] {$x_{t}$};
\node (xlbl) [lbl, above=1mm of x2] {$W=3$ frame window \quad $3{\times}64{\times}64$};

\node (stem) [box, below=9mm of x2, minimum width=42mm] {%
  \textbf{stem} \; Conv$3{\times}3$ $3\!\to\!32$ + BN + GELU + MaxPool};
\node (stemlbl) [lbl, right=2mm of stem.east, text width=27mm]
  {run \emph{once per frame}, shared across overlapping windows};
\draw [ar] (x1.south) -- ++(0,-3mm) -| (stem.north);
\draw [ar] (x2.south) -- (stem.north);
\draw [ar] (x3.south) -- ++(0,-3mm) -| (stem.north);

\node (tmix) [box, below=8mm of stem, minimum width=42mm] {%
  \textbf{temporal mix} \; Conv1d over $t$, $32{\to}32$\\[1pt]
  kernel tied $(a,\,b,\,a)$ \; at $32{\times}32$ spatial res.};
\draw [ar] (stem) -- node[lbl, right] {$3\times(32{\times}32{\times}32)$} (tmix);

\node (rest) [box, below=8mm of tmix, minimum width=42mm] {%
  \textbf{trunk} (unchanged) \; $32\!\to\!64\!\to\!128\!\to\!192$\\[1pt]
  3 conv blocks + AdaptiveAvgPool + Flatten};
\draw [ar] (tmix) -- node[lbl, right] {$32{\times}32{\times}32$} (rest);

\node (proj) [box, below=8mm of rest, minimum width=42mm]
  {\textbf{projector} MLP $192 \to 2048 \to d$};
\draw [ar] (rest) -- node[lbl, right] {$192$} (proj);

\node (q) [box, below=8mm of proj, minimum width=16mm] {$q_t \in \mathbb{R}^{d}$};
\draw [ar] (proj) -- (q);

\node (p) [box, right=16mm of q, minimum width=30mm]
  {$p_t = \mathrm{SG}(q_{t-3..t})$};
\draw [ar] (q.east) -- node[lbl, above] {causal} (p.west);

\node (flow) [ham, below=9mm of $(q)!0.5!(p)$, minimum width=62mm] {%
  \textbf{leapfrog (KDK)} \; $H = K(p/s) + V(q;u)$\\[2pt]
  $\begin{aligned}[t]
     p_{t+\frac12} &= p_t - \tfrac{\Delta t}{2}\,\nabla_q V(q_t;u) \\[-1pt]
     q_{t+1}       &= q_t + \Delta t\,\nabla_p K(p_{t+\frac12}/s) \\[-1pt]
     p_{t+1}       &= p_{t+\frac12} - \tfrac{\Delta t}{2}\,\nabla_q V(q_{t+1};u)
   \end{aligned}$};
\draw [ar] (q.south) |- ([yshift=3mm]flow.north west) -- ++(0,-3mm);
\draw [ar] (p.south) |- ([yshift=3mm]flow.north east) -- ++(0,-3mm);

\node (mass) [new, left=14mm of flow, minimum width=26mm]
  {$s = e^{\sigma}$ learnable};
\node [lbl, below=1mm of mass, text width=26mm, align=center]
  {per-dimension momentum scale};
\draw [ar] (mass) -- (flow);

\node [lbl, below=3mm of flow, text width=62mm, align=center]
  {$(q_{t+1}, p_{t+1})$ --- one dataset frame ahead};
\end{tikzpicture}
\caption{\textbf{The temporal ConvNet encoder inside the \algo~framework}, drawn at $W = 3$.
  A window of frames goes through a stem applied once per frame, then a single Conv1d over
  time with a symmetric tied kernel $(a, b, a)$, then the baseline's spatial trunk and
  projector, unchanged, to give $\bq_t$. The same encoder is used by the $W{=}3$ LeWM
  baselines; the main \algo~model uses $W{=}1$, i.e.\ no temporal mix. The rollout is
  initialised with the causal Savitzky--Golay momentum $\bp_t$, computed from the current
  and three previous positions only, and advanced one frame at a time by the leapfrog step
  (orange) of $H_\phi = K_\phi(\bp/s) + V_\phi(\bq,\bu)$, where the learnable per-dimension
  scale $s$ (blue) rescales the momentum and therefore appears only in the position update.}
\label{diagram:convet}
\end{centering}
\end{figure}

%% file: iclr2027_conference.bib
@article{balestriero2025lejepa,
  title     = {{LeJEPA}: Provable and Scalable Self-Supervised Learning
               Without the Heuristics},
  author    = {Balestriero, Randall and LeCun, Yann},
  journal   = {arXiv preprint arXiv:2511.08544},
  year      = {2025},
  url       = {https://arxiv.org/abs/2511.08544},
}

@article{maes2026leworldmodel,
  title={Leworldmodel: Stable end-to-end joint-embedding predictive architecture from pixels},
  author={Maes, Lucas and Lidec, Quentin Le and Scieur, Damien and LeCun, Yann and Balestriero, Randall},
  journal={arXiv preprint arXiv:2603.19312},
  year={2026}
}

@article{assran2025v,
  title={V-jepa 2: Self-supervised video models enable understanding, prediction and planning},
  author={Assran, Mido and Bardes, Adrien and Fan, David and Garrido, Quentin and Howes, Russell and Muckley, Matthew and Rizvi, Ammar and Roberts, Claire and Sinha, Koustuv and Zholus, Artem and others},
  journal={arXiv preprint arXiv:2506.09985},
  year={2025}
}

@article{greydanus2019hamiltonian,
  title={Hamiltonian neural networks},
  author={Greydanus, Samuel and Dzamba, Misko and Yosinski, Jason},
  journal={Advances in neural information processing systems},
  volume={32},
  year={2019}
}

@article{zhang2026hierarchical,
  title={Hierarchical Planning with Latent World Models},
  author={Zhang, Wancong and Terver, Basile and Zholus, Artem and Chitnis, Soham and Sutaria, Harsh and Assran, Mido and Balestriero, Randall and Bar, Amir and Bardes, Adrien and LeCun, Yann and others},
  journal={arXiv preprint arXiv:2604.03208},
  year={2026}
}

@article{bardes2021vicreg,
  title={Vicreg: Variance-invariance-covariance regularization for self-supervised learning},
  author={Bardes, Adrien and Ponce, Jean and LeCun, Yann},
  journal={arXiv preprint arXiv:2105.04906},
  year={2021}
}

@article{sobal2025learning,
  title={Learning from reward-free offline data: A case for planning with latent dynamics models},
  author={Sobal, Vlad and Zhang, Wancong and Cho, Kyunghyun and Balestriero, Randall and Rudner, Tim GJ and LeCun, Yann},
  journal={arXiv preprint arXiv:2502.14819},
  year={2025}
}

@article{zhou2024dino,
  title={Dino-wm: World models on pre-trained visual features enable zero-shot planning},
  author={Zhou, Gaoyue and Pan, Hengkai and LeCun, Yann and Pinto, Lerrel},
  journal={arXiv preprint arXiv:2411.04983},
  year={2024}
}

@article{lecun2022path,
  title={A path towards autonomous machine intelligence version 0.9. 2, 2022-06-27},
  author={LeCun, Yann and others},
  journal={Open Review},
  volume={62},
  number={1},
  pages={1--62},
  year={2022}
}

@book{arnold1989mathematical,
  title     = {Mathematical Methods of Classical Mechanics},
  author    = {Arnold, Vladimir I.},
  edition   = {2nd},
  publisher = {Springer},
  year      = {1989},
}

@article{chen2019symplectic,
  title={Symplectic recurrent neural networks},
  author={Chen, Zhengdao and Zhang, Jianyu and Arjovsky, Martin and Bottou, L{\'e}on},
  journal={arXiv preprint arXiv:1909.13334},
  year={2019}
}

@inproceedings{toth2019hamiltonian,
  title={Hamiltonian generative networks},
  author={Toth, Peter and Rezende, Danilo J and Jaegle, Andrew and Racani{\`e}re, S{\'e}bastien and Botev, Aleksandar and Higgins, Irina},
  booktitle={International conference on learning representations},
  year={2019}
}

@inproceedings{zhong2020dissipative,
  title     = {Dissipative {SymODEN}: Encoding {Hamiltonian} Dynamics with Dissipation and Control into Deep Learning},
  author    = {Zhong, Yaofeng Desmond and Dey, Biswadip and Chakraborty, Amit},
  booktitle = {ICLR 2020 Workshop on Integration of Deep Neural Models and Differential Equations},
  year      = {2020},
  eprint    = {2002.08860},
  archivePrefix = {arXiv},
}

@book{goldstein2011classical,
  title={Classical mechanics},
  author={Goldstein, Herbert},
  year={2011},
  publisher={Pearson Education India}
}

@article{doshi2024scaling,
  title={Scaling cross-embodied learning: One policy for manipulation, navigation, locomotion and aviation},
  author={Doshi, Ria and Walke, Homer and Mees, Oier and Dasari, Sudeep and Levine, Sergey},
  journal={arXiv preprint arXiv:2408.11812},
  year={2024}
}

@article{mitchell2005time,
  title={A time-dependent Hamilton-Jacobi formulation of reachable sets for continuous dynamic games},
  author={Mitchell, Ian M and Bayen, Alexandre M and Tomlin, Claire J},
  journal={IEEE Transactions on automatic control},
  volume={50},
  number={7},
  pages={947--957},
  year={2005},
  publisher={IEEE}
}

@inproceedings{herbert2017fastrack,
  title={FaSTrack: A modular framework for fast and guaranteed safe motion planning},
  author={Herbert, Sylvia L and Chen, Mo and Han, SooJean and Bansal, Somil and Fisac, Jaime F and Tomlin, Claire J},
  booktitle={2017 IEEE 56th Annual Conference on Decision and Control (CDC)},
  pages={1517--1522},
  year={2017},
  organization={IEEE}
}

@article{cui2026physically,
  title={Physically Native World Models: A Hamiltonian Perspective on Generative World Modeling},
  author={Cui, Sen and Ma, Jingheng},
  journal={arXiv preprint arXiv:2605.00412},
  year={2026}
}

@article{luan2026ph,
  title={PH-Dreamer: A Physics-Driven World Model via Port-Hamiltonian Generative Dynamics},
  author={Luan, Xueyu and Shi, Chenwei},
  journal={arXiv preprint arXiv:2605.18303},
  year={2026}
}

@article{tang2026ham,
  title={HaM-World: Soft-Hamiltonian World Models with Selective Memory for Planning},
  author={Tang, Haoyun and Cui, Haodong and Xu, Keyao and Wang, Kun and Mei, Zhandong},
  journal={arXiv preprint arXiv:2605.05951},
  year={2026}
}

@article{tan2016model,
  title={Model predictive control with non-uniformly spaced optimization horizon for multi-timescale processes},
  author={Tan, Chee Keong and Tippett, Michael James and Bao, Jie},
  journal={Computers \& Chemical Engineering},
  volume={84},
  pages={162--170},
  year={2016},
  publisher={Elsevier}
}

@article{hairer2003geometric,
  title={Geometric numerical integration illustrated by the St{\"o}rmer--Verlet method},
  author={Hairer, Ernst and Lubich, Christian and Wanner, Gerhard},
  journal={Acta numerica},
  volume={12},
  pages={399--450},
  year={2003},
  publisher={Cambridge University Press}
}

@article{gorry1990general,
  title={General least-squares smoothing and differentiation by the convolution (Savitzky-Golay) method},
  author={Gorry, Peter A},
  journal={Analytical Chemistry},
  volume={62},
  number={6},
  pages={570--573},
  year={1990},
  publisher={ACS Publications}
}

@article{kim2014catching,
  title={Catching objects in flight},
  author={Kim, Seungsu and Shukla, Ashwini and Billard, Aude},
  journal={IEEE Transactions on Robotics},
  volume={30},
  number={5},
  pages={1049--1065},
  year={2014},
  publisher={IEEE}
}

@article{zeng2020tossingbot,
  title={Tossingbot: Learning to throw arbitrary objects with residual physics},
  author={Zeng, Andy and Song, Shuran and Lee, Johnny and Rodriguez, Alberto and Funkhouser, Thomas},
  journal={IEEE Transactions on Robotics},
  volume={36},
  number={4},
  pages={1307--1319},
  year={2020},
  publisher={IEEE}
}

@book{marsden1999introduction,
  title={Introduction to mechanics and symmetry: a basic exposition of classical mechanical systems},
  author={Marsden, Jerrold E and Ratiu, Tudor S and Golubitsky, M},
  volume={17},
  year={1999},
  publisher={Springer}
}

@article{duong2021hamiltonian,
  title={Hamiltonian-based neural ODE networks on the SE (3) manifold for dynamics learning and control},
  author={Duong, Thai and Atanasov, Nikolay},
  journal={arXiv preprint arXiv:2106.12782},
  year={2021}
}

@misc{maes_lld2026swm,
  title  = {stable-worldmodel: A Platform for Reproducible World Modeling Research and Evaluation},
  author = {Lucas Maes and Quentin Le Lidec and Luiz Facury and Nassim Massaudi and
            Ayush Chaurasia and Francesco Capuano and Richard Gao and Taj Gillin and
            Dan Haramati and Damien Scieur and Yann LeCun and Randall Balestriero},
  year   = {2026},
  eprint = {2605.21800},
  archivePrefix = {arXiv},
  primaryClass = {cs.LG},
  url    = {https://arxiv.org/abs/2605.21800},
}

@article{balestriero2025stable,
  title={stable-pretraining-v1: Foundation Model Research Made Simple},
  author={Balestriero, Randall and Van Assel, Hugues and BuGhanem, Sami and Maes, Lucas},
  journal={arXiv preprint arXiv:2511.19484},
  year={2025}
}

@inproceedings{wu2022tinyvit,
  title={Tinyvit: Fast pretraining distillation for small vision transformers},
  author={Wu, Kan and Zhang, Jinnian and Peng, Houwen and Liu, Mengchen and Xiao, Bin and Fu, Jianlong and Yuan, Lu},
  booktitle={European conference on computer vision},
  pages={68--85},
  year={2022},
  organization={Springer}
}

@article{oquab2023dinov2,
  title={Dinov2: Learning robust visual features without supervision},
  author={Oquab, Maxime and Darcet, Timoth{\'e}e and Moutakanni, Th{\'e}o and Vo, Huy and Szafraniec, Marc and Khalidov, Vasil and Fernandez, Pierre and Haziza, Daniel and Massa, Francisco and El-Nouby, Alaaeldin and others},
  journal={arXiv preprint arXiv:2304.07193},
  year={2023}
}

@article{hafner2019dream,
  title={Dream to control: Learning behaviors by latent imagination},
  author={Hafner, Danijar and Lillicrap, Timothy and Ba, Jimmy and Norouzi, Mohammad},
  journal={arXiv preprint arXiv:1912.01603},
  year={2019}
}

@article{hafner2020mastering,
  title={Mastering atari with discrete world models},
  author={Hafner, Danijar and Lillicrap, Timothy and Norouzi, Mohammad and Ba, Jimmy},
  journal={arXiv preprint arXiv:2010.02193},
  year={2020}
}

@article{tao2016explicit,
  title={Explicit symplectic approximation of nonseparable Hamiltonians: Algorithm and long time performance},
  author={Tao, Molei},
  journal={Physical Review E},
  volume={94},
  number={4},
  pages={043303},
  year={2016},
  publisher={APS}
}

@inproceedings{tran2018closer,
  title={A closer look at spatiotemporal convolutions for action recognition},
  author={Tran, Du and Wang, Heng and Torresani, Lorenzo and Ray, Jamie and LeCun, Yann and Paluri, Manohar},
  booktitle={2018 IEEE/CVF conference on computer vision and pattern recognition},
  pages={6450--6459},
  year={2018},
  organization={IEEE}
}

@article{hansen2022temporal,
  title={Temporal difference learning for model predictive control},
  author={Hansen, Nicklas and Wang, Xiaolong and Su, Hao},
  journal={arXiv preprint arXiv:2203.04955},
  year={2022}
}

@article{zoabi2026hamiltonian,
  title={Hamiltonian JEPA: Action-Conditioned World Models with an Inherited Control State},
  author={Zoabi, Tamim and Ali, Ameen and Wolf, Lior},
  journal={arXiv preprint arXiv:2609.33497},
  year={2026}
}
